\documentclass[11pt]{article}
\usepackage[T1]{fontenc}
\usepackage[utf8]{inputenc}
\usepackage{setspace}
\usepackage{newtxtext,newtxmath} 
\usepackage[margin=1.5in]{geometry}
\usepackage{url}
\usepackage{array}
\usepackage{graphicx}
\usepackage{subcaption}
\usepackage{color}
\makeatletter
\@ifundefined{openbox}{}{}
\makeatother

\usepackage{amsmath,amssymb,amsthm,mathtools}
\usepackage{bm}
\usepackage{booktabs}
\usepackage{tabularx}
\usepackage{enumitem}
\usepackage[dvipsnames]{xcolor}

\usepackage{multirow}
\usepackage{authblk}
\usepackage[round,compress]{natbib}
\usepackage{tikz}
\usepackage{pgfplots}
\pgfplotsset{compat=1.18}
\usepackage{float}     
\usepackage{placeins}  

\newtheorem{theorem}{Theorem}[section]
\newtheorem{lemma}[theorem]{Lemma}
\newtheorem{proposition}[theorem]{Proposition}
\newtheorem{corollary}[theorem]{Corollary}
\theoremstyle{definition}
\newtheorem{definition}[theorem]{Definition}
\newtheorem{assumption}[theorem]{Assumption}

\theoremstyle{remark}
\newtheorem{remark}[theorem]{Remark}

\newcommand{\E}{\mathbb{E}}
\newcommand{\Var}{\mathrm{Var}}
\newcommand{\Cov}{\mathrm{Cov}}

\newcommand{\Tr}{\mathrm{Tr}}

\newcommand{\R}{\mathbb{R}}
\newcommand{\N}{\mathbb{N}}

\newcommand{\cF}{\mathcal{F}}
\newcommand{\cU}{\mathcal{U}}

\newcommand{\cO}{\mathcal{O}}

\newcommand{\cN}{\mathcal{N}}

\newcommand{\cZ}{\mathcal{Z}}
\newcommand{\cA}{\mathcal{A}}

\newcommand{\Gstar}{G^*}

\newcommand{\psiFn}{\psi}

\DeclareMathOperator{\op}{op}
\DeclareMathOperator{\eff}{eff}
\DeclareMathOperator{\loc}{loc}

\newcolumntype{Y}{>{\raggedright\arraybackslash}X}

\let\oldproof\proof
\let\oldendproof\endproof
\renewenvironment{proof}[1]{\oldproof}{\oldendproof}
\newcommand{\Halmos}{\qedhere}
\newcommand{\up}{}
\newcommand{\down}{}
\newcommand{\FIGURE}[3]{%
  \centering #1\caption{#2}\par\medskip{\footnotesize #3}}
\newcommand{\TABLE}[3]{%
  \centering\caption{#1}\medskip #2\par\medskip{\footnotesize #3}}

\usepackage{mathrsfs}

\usepackage[colorlinks=true,allcolors=green!60!black]{hyperref}

\title{\textbf{Mini-Batch Covariance, Diffusion Limits, and Oracle Complexity\\
in Stochastic Gradient Descent:\\
A Sampling-Design Perspective}}

\author[1]{Daniel Zantedeschi}
\author[2]{Kumar Muthuraman}
\affil[1]{Muma College of Business, University of South Florida, Tampa, FL, USA\\
\texttt{danielz@usf.edu}}
\affil[2]{McCombs School of Business, University of Texas at Austin, Austin, TX, USA\\
\texttt{kumar.muthuraman@mccombs.utexas.edu}}
\date{\today}

\begin{document}

\maketitle

\begin{abstract}
Stochastic gradient descent (SGD) is central to simulation optimization,
stochastic programming, and online M-estimation, where sampling effort is a
decision variable.  We study the mini-batch gradient noise as a
sampling-design object.  Under exchangeable fresh-sampling mini-batches, the
conditional covariance given the de~Finetti directing measure $\mu$ is
$b^{-1}G_\mu(\theta)$, and under identifiability the projected population
object is $b^{-1}G^\star(\theta)$---projected Fisher information for correctly
specified likelihoods, the sandwich partner of the Hessian otherwise.  This
identification fixes the noise matrix entering the diffusion analysis of
constant-step SGD: the raw iterate path has a deterministic fluid limit, and
the $\sqrt{b/\eta}$-scaled fluctuations satisfy a functional CLT with noise
covariance $G^\star$; near a nondegenerate optimum the limit is
Ornstein--Uhlenbeck, and its Lyapunov covariance scaled by $\eta/b$ matches
the linearized discrete recursion at leading order.  Under a curvature--noise
compatibility condition $\mu_F>0$, we prove $1/N$ mean-square upper bounds
and an i.i.d.\ parametric Fisher van~Trees lower bound of the same rate
order, with oracle-complexity guarantees depending on an effective dimension
$d_{\mathrm{eff}}$ and condition number $\kappa_F$.  Numerical experiments
verify the identification and confirm the Lyapunov predictions in direct
SGD.
\end{abstract}

\medskip
\noindent\textbf{Keywords:}
Stochastic gradient descent; simulation optimization; sampling-design;
mini-batch covariance; diffusion limits; Ornstein--Uhlenbeck approximation;
Fisher and Godambe information; oracle complexity; batch-size control

\section{Introduction}
\label{sec:intro}

Stochastic gradient descent (SGD) is the computational workhorse behind a large
class of operations research and management science problems, including simulation
optimization, large-scale stochastic programming, and data-driven estimation in
service and supply-chain systems. In these settings, \emph{sampling effort is a
decision variable}: each iterate trades off additional replications (or
scenarios) against the number of updates that can be executed under a fixed
budget. Several widely observed phenomena depend on this trade-off: small
mini-batches often dominate large batches in wall-clock efficiency; constant-step
SGD exhibits steady-state ``error floors'' and implicit regularization; and the
linearized local equilibrium of plain SGD exhibits directional structure
related to the local curvature--noise geometry, despite the drift being
unpreconditioned.

Classical stochastic approximation and diffusion-approximation
analyses---small-step SDE surrogates, Ornstein--Uhlenbeck linearization,
Lyapunov stationary covariance, $\sqrt{t}$-CLT scaling---are
well-developed OR tools for analyzing such recursions once the
gradient-noise covariance is specified
\citep{kushner2003stochastic,harrison1985brownian,whitt2002stochastic}.
We do not reprove that machinery.  Our contribution is a sampling-design
reinterpretation of the noise input: under exchangeable fresh-sampling
mini-batches, the conditional mini-batch covariance is determined by
the sampling mechanism at $b^{-1}$ times the per-sample gradient
covariance, a statement that is elementary but carries structural
consequences for operational decisions about sampling budgets.  Writing the
identified matrix as $G^\star(\theta)$---projected Fisher information
at the correctly specified point, and more generally the sandwich
partner of the Hessian $H^\star$---lets us analyze constant-step SGD in
a fixed statistical metric rather than a Euclidean one, with the
diffusion coefficient determined by the sampling design rather than
postulated.  Under a curvature--noise
compatibility condition $\mu_F>0$
(Assumption~\ref{ass:muF_pos}; not implied by $H^\star,F^\star\succ 0$
alone), the resulting local rates in the Fisher case match an i.i.d.\
van~Trees lower bound in rate order, and batch size becomes a
statistical design variable subject to a sampling budget.

\subsection{Motivation: Simulation Optimization as Sampling Control}

Consider the canonical simulation-optimization objective
\begin{equation}
\min_{\theta\in\Theta}\; L(\theta)\;:=\;\E\big[h(\theta,\xi)\big],
\label{eq:simopt-intro}
\end{equation}
where $\xi$ denotes random inputs (arrivals, service times, demand shocks) and
$h(\theta,\xi)$ is the induced cost. A standard stochastic-gradient estimator
uses $b$ simulation replications,
\begin{equation}
g_B(\theta) \;=\; \frac{1}{b}\sum_{i=1}^b \nabla_\theta h(\theta,\xi_i),
\qquad \xi_i \stackrel{\text{i.i.d.}}{\sim} P,
\label{eq:mini-batch-simopt}
\end{equation}
followed by $\theta^{+}=\theta-\eta\,g_B(\theta)$. Classical variance-reduction
intuition focuses on the scalar variance of a component or of a scalarized
performance measure, suggesting that larger $b$ should always help.
In practice, under fixed sampling budgets, small $b$ often wins.

The key reason is geometric: the noise in $g_B(\theta)$ is not scalar but a
\emph{matrix}-valued covariance that encodes which directions are
intrinsically informative and which are weakly identified by the underlying
experiment.  The same sampling-control perspective applies to three canonical
MOR problem classes:
(i)~\emph{simulation optimization} (queuing, inventory, revenue management),
where each mini-batch gradient requires $b$ simulation replications and the
replication budget is the binding constraint;
(ii)~\emph{stochastic programming and SAA}, where scenario-based gradient
estimates are drawn from an empirical distribution and the number of
scenarios per iterate is a design variable subject to a total scenario
budget; and
(iii)~\emph{online M-estimation in service systems}, where per-transaction
loss gradients arrive in a stream and the batch window trades off estimation
quality against decision latency.  In each case, sampling effort is
explicit, $b$ is typically small relative to $d$, and the question ``how
many samples per iterate?'' is operationally binding.

\subsection{Contributions and scope}
\label{subsec:core_insight}

Under exchangeable fresh-sampling mini-batches, the conditional
mini-batch covariance given the de~Finetti directing measure $\mu$ is
$\Cov(g_B(\theta)\mid\mu)=b^{-1}G_\mu(\theta)$
(Theorem~\ref{thm:godambe-alignment}).  Under identifiability with a single
data-generating distribution, the unconditional covariance reads
\begin{equation}
\Cov\!\bigl(g_B(\theta)\bigr)
\;=\;
\frac{1}{b}\,G(\theta),
\label{eq:align-intro}
\end{equation}
and, after projection onto the identifiable tangent space, we denote the
resulting population object $G^\star(\theta):=\Pi_\theta G(\theta)\Pi_\theta^\top$
(Section~\ref{sec:fisher-alignment}).  All downstream results are stated in
terms of $G^\star$.  For correctly specified likelihood losses,
$G^\star(\theta^\star)=F^\star(\theta^\star)$; the finite-dataset-reuse case
adds a $(1-b/n)$ correction and is treated separately
(Lemma~\ref{lem:finite_pop_cov_exact}, Proposition~\ref{prop:fpc_reuse}).

Once $\Gstar(\theta)$ is identified, three consequences follow within the
classical stochastic-approximation toolkit.  (a)~At fixed batch size,
constant-step SGD has a deterministic fluid limit (the gradient-flow
ODE), and its $\sqrt{b/\eta}$-scaled fluctuations obey a functional CLT
converging to a linear SDE with noise covariance $G^\star$; near a
nondegenerate optimum this is an OU process, and its stationary
covariance, multiplied back by $\eta/b$, is the discrete-time Lyapunov
benchmark of Proposition~\ref{prop:discrete_lyap}
(Section~\ref{sec:diffusion}).  (b)~Mean-square error in the
Fisher/identified metric admits a $1/N$ upper bound with an intrinsic
effective dimension $d_{\mathrm{eff}}$ and statistical condition number
$\kappa_F$ (Section~\ref{sec:rates}), matched in \emph{rate order} by an
i.i.d.\ parametric Fisher van~Trees lower bound
(Proposition~\ref{thm:lower-bound}); geometry dependence appears only in
the upper bound.  (c)~A high-probability oracle-complexity statement
follows, with a squared-gradient stationarity rate of order $N^{-1/2}$
(Theorem~\ref{thm:complexity_revised}).

\medskip
\noindent
{\setlength{\fboxsep}{6pt}%
\fbox{\parbox{\dimexpr\linewidth-2\fboxsep-2\fboxrule\relax}{\small%
\textbf{Main Results (informal).}\\[4pt]
\begin{tabular}{@{}r@{\;---\;}p{0.78\linewidth}@{}}
Theorem~\ref{thm:godambe-alignment} &
Covariance identification under exchangeable fresh sampling:
$\Cov(g_B\mid\mu)=b^{-1}G_\mu(\theta)$; under identifiability and after
projection onto the identifiable tangent space, the population
covariance is $\Cov(g_B)=b^{-1}G^\star(\theta)$.\\[3pt]
Theorem~\ref{thm:diffusion_limit} &
Fluid limit for the raw iterates and functional CLT for
$\sqrt{b/\eta}$-scaled fluctuations, with noise covariance
determined by $G^\star$.\\[3pt]
Thm.~\ref{thm:ou},\;Cor.~\ref{cor:lyapunov} &
OU equilibrium at $\theta^\star$ on the fluctuation scale; linearized
discrete-recursion stationary covariance on the original scale,
$\Sigma_\eta^{\mathrm{lin}}=(\eta/b)\Sigma_U+o(\eta/b)$
(Prop.~\ref{prop:discrete_lyap}).\\[3pt]
Thm.~\ref{thm:upper-bound},\;Prop.~\ref{thm:lower-bound} &
$1/(Tb)$ upper bound and rate-order-matching i.i.d.\ Fisher lower bound.\\[3pt]
Theorem~\ref{thm:complexity_revised} &
Squared-gradient stationarity at rate $N^{-1/2}$; norm stationarity at $N^{-1/4}$.
\end{tabular}%
}}}

\subsection{Organization}

Section~\ref{sec:lit} reviews related work.
Section~\ref{sec:setup} defines the oracle model and geometric objects.
Section~\ref{sec:fisher-alignment} derives the covariance identification from
exchangeability.
Section~\ref{sec:diffusion} builds the diffusion approximation, OU
linearization, and discrete-time bridge.
Section~\ref{sec:rates} proves an upper bound and a rate-order-matching
i.i.d.\ Fisher lower bound, and Section~\ref{sec:complexity} states oracle
complexity.
Section~\ref{sec:numerical} provides numerical validation, and
Section~\ref{sec:discussion} discusses extensions and practical implications.


\section{Literature Review and Positioning}
\label{sec:lit}

We organize prior work along two axes:
(i)~whether the mini-batch noise covariance is \emph{assumed}
(exogenous) or \emph{derived} from the sampling mechanism; and (ii)~whether
rates are stated in a Euclidean or in the identified statistical metric.
Table~\ref{tab:ontology-positioning} summarizes the resulting taxonomy.

\paragraph{Stochastic approximation and stationary covariance.}
Classical SA theory studies recursions
$\theta_{t+1}=\theta_t-\eta_t\{h(\theta_t)+\xi_{t+1}\}$
and derives a $\sqrt{t}$-CLT whose asymptotic covariance is governed by a Lyapunov equation
involving the Jacobian and a long-run noise covariance $\Gamma$
\citep{robbins1951stochastic,dvoretzky1956stochastic,kushner2003stochastic,polyak1992acceleration}.
This framework takes $\Gamma$ as given; the OU linearization, Lyapunov balance,
and $1/\sqrt{t}$ scaling are standard consequences.
Our contribution is logically prior: we derive $\Gamma=G^\star(\theta)/b$
from the mini-batch sampling design (Theorem~\ref{thm:godambe-alignment}),
converting the SA machinery from a conditional analysis into a structural one.

\paragraph{Diffusion approximations for SGD.}
A substantial literature interprets constant-step SGD as a discretization
of an SDE with temperature $\tau=\eta/b$ and various proposed diffusion
matrices $\mathbf{C}(\theta)$
\citep{mandt2017stochastic,li2017stochastic,smith2021origin}.  In those
works the diffusion covariance is either postulated, estimated from running
iterates, or justified heuristically through moment matching.  Our
covariance identification removes this modeling degree of freedom.  At
fixed batch size, we do not claim a nondegenerate diffusion limit of the
raw iterate path---its martingale variation is of order $\eta/b$ and
vanishes---but we prove a fluid limit plus a functional CLT on the
$\sqrt{b/\eta}$-rescaled fluctuations, with noise covariance
$C^\star{C^\star}^\top=G^\star$ (Theorem~\ref{thm:diffusion_limit}).  The
local OU regime and the Lyapunov balance then appear as the equilibrium
specialization on the fluctuation scale, with $\tau=\eta/b$ recovered on
the original iterate scale
(Theorem~\ref{thm:ou}, Corollary~\ref{cor:lyapunov}).  This is the
standard stochastic-approximation functional CLT architecture
\citep{kushner2003stochastic} applied to the fresh-sampling covariance
identified in Section~\ref{sec:fisher-alignment}, and is what fixes the
noise input to the subsequent rate analysis.

\paragraph{Scaling regimes.}
Recent rigorous diffusion-limit work studies joint regimes where $b$ grows
with $d$ \citep{paquette2024continuous}.
Our analysis is complementary and closer to OR operating constraints: we keep $d$
fixed and treat $b$ as a design variable subject to a budget, with particular
attention to the common regime $b\ll d$ where the \emph{matrix} geometry of
$G^\star(\theta)$ controls transient behavior and risk.

\paragraph{Information geometry, natural gradients, and Fisher-shaped preconditioning.}
Information geometry treats Fisher information as a Riemannian metric and
motivates natural-gradient updates $\theta_{t+1}=\theta_t-\eta\,F(\theta_t)^{-1}\nabla L(\theta_t)$
\citep{amari1998natural,martens2020new}.
Related methods---including stochastic gradient Fisher scoring
\citep{ahn2012bayesian} and KFAC \citep{martens2015optimizing}---inject or
exploit Fisher geometry exogenously in the drift or preconditioner.
Our contribution is distinct: even plain SGD, without any preconditioning,
inherits the identified statistical geometry endogenously through its noise covariance
under mini-batch sampling, changing the interpretation of diffusion strength, stationary
risk, and batch-size tuning without requiring natural-gradient updates.

\paragraph{Minimax rates and information-theoretic limits.}
Classical oracle lower bounds are stated in Euclidean norms under smoothness and
strong convexity \citep{nemirovski2009robust,agarwal2012information}, which may
not reflect the intrinsic difficulty of statistical objectives.
We give an upper bound in the identified statistical metric that surfaces
dependence on $\kappa_F$ and $d_{\eff}$
(Theorem~\ref{thm:upper-bound}), and match its $1/N$ rate with an i.i.d.\
van~Trees lower bound in the parametric Fisher setting
(Proposition~\ref{thm:lower-bound}); the lower bound is at the level of
rate order and does not certify the geometry dependence.

\paragraph{OR context: sampling budgets.}
In simulation optimization and stochastic programming, gradients are estimated
from sampled scenarios and sampling cost is explicit
\citep{fu2006gradient,shapiro2014lectures,chernoff1959sequential}.
Because batching controls temperature $\tau=\eta/b$, it directly shapes
diffusion amplitude and terminal risk; Section~\ref{sec:discussion} outlines
the resulting effort allocation problem.

\paragraph{Why small batches can work when $b\ll d$.}
The empirical preference for small batches
\citep{keskar2017large,shallue2019measuring,mccandlish2018empirical} is
explained by our framework: smaller $b$ buys more contraction steps under fixed
budgets, the identified covariance concentrates noise along statistically flat
directions, and rates scale with $d_{\eff}\ll d$.

\begin{table}
\TABLE
{Positioning: primitives, contrasts, and where each appears in the paper. \label{tab:ontology-positioning}}
{\begin{tabular}{@{}p{2.2cm}@{\quad}p{3.2cm}@{\quad}p{4.0cm}@{\quad}p{2.8cm}@{}}
\hline\up
Primitive & Status quo & This paper & Pointers \\
\hline\up
Mini-batch noise covariance &
Exogenous / isotropic / heuristic $\Gamma$ &
Identified from sampling: $\Cov(g_B)\propto b^{-1}G^\star(\theta)$; Fisher is a special case &
Sec.~\ref{sec:fisher-alignment};
Thm.~\ref{thm:godambe-alignment} \\[4pt]
Continuous-time limit &
Postulated SDE with ad hoc $\mathbf C(\theta)$ &
Fluid limit (ODE) for raw iterates; functional CLT for $\sqrt{b/\eta}$-scaled fluctuations with diffusion $G^\star(\theta)$ &
Sec.~\ref{sec:diffusion};
Thm.~\ref{thm:diffusion_limit} \\[4pt]
OU / Lyapunov equilibrium &
Heuristic OU with ad hoc covariance &
OU fluctuation limit at $\theta^\star$; Lyapunov balance
$H^\star\Sigma_U+\Sigma_U H^{\star\top}=G^\star$ on fluctuation scale;
linearized original-scale benchmark $\Sigma_\eta^{\mathrm{lin}}=(\eta/b)\Sigma_U+o(\eta/b)$ (Prop.~\ref{prop:discrete_lyap} / Cor.~\ref{cor:lyapunov}) &
Sec.~\ref{sec:diffusion};
Cor.~\ref{cor:lyapunov} \\[4pt]
Metric for rates &
Euclidean distance; $\kappa_H$ dominates &
Identified statistical metric is the natural risk &
Sec.~\ref{sec:rates};
Thm.~\ref{thm:upper-bound} \\[4pt]
Lower bounds &
Euclidean oracle lower bounds &
Information-theoretic lower bound in Fisher metric (van Trees) &
Sec.~\ref{sec:rates};
Prop.~\ref{thm:lower-bound} \\[4pt]
Dimension notion &
Ambient $d$ &
Effective dimension $d_{\eff}$ induced by $F^\star$ / $G^\star$ &
Sec.~\ref{sec:rates}; Def.~\eqref{eq:deff} \\[4pt]
Oracle complexity &
Euclidean stationarity; $\kappa_H$ &
Fisher-dual stationarity; $\kappa_F$ and $d_{\eff}$ &
Sec.~\ref{sec:complexity};
Thm.~\ref{thm:complexity_revised} \down\\
\hline
\end{tabular}}{}
\end{table}


\section{Problem Setup and Preliminaries}
\label{sec:setup}

This section fixes notation, specifies the stochastic-gradient oracle, and
records the geometric objects that govern the diffusion approximation and the
rate/complexity bounds.  The perspective is \emph{Mathematics of Operations
Research}: we make the sampling design explicit, separate intrinsic noise
geometry from finite-population correction, and state the local regularity
conditions used throughout.

\subsection{Notation and macros used throughout}
\label{subsec:setup_notation}

We work on a measurable space $(\cZ,\cA)$.
Vectors use the Euclidean inner product $\langle u,v\rangle=u^\top v$ and norm
$\|u\|_2$.  For a symmetric matrix $A$, $\|A\|_{\op}$ denotes the operator norm
and $\lambda_{\min}(A),\lambda_{\max}(A)$ its extreme eigenvalues.

\subsection{Data model, objective, and stochastic-gradient oracle}
\label{subsec:setup_oracle}

Let $\Theta\subseteq\R^d$ be open and convex. Let $Q$ denote the (unknown) data
distribution on $(\cZ,\cA)$, and let $\ell:\Theta\times\cZ\to\R$ be a measurable
loss. Define the population risk
\begin{equation}
L(\theta)
\;:=\;
\E_{Z\sim Q}\bigl[\ell(\theta;Z)\bigr],
\label{eq:def_L_setup}
\end{equation}
and let $\theta^\star\in\arg\min_{\theta\in\Theta}L(\theta)$ be a (local) minimizer.

\paragraph{Finite-population sampling model.}
We observe a dataset $z_{1:n}:=(z_1,\dots,z_n)$ drawn i.i.d.\ from $Q$ (or, in the
exchangeability formulation of Section~\ref{sec:fisher-alignment}, conditionally
i.i.d.\ given a directing measure).
The lower bound in Section~\ref{sec:rates} is stated for the i.i.d.\
parametric Fisher setting; adaptive-oracle extensions are discussed in
Remark~\ref{rem:beyond_iid}.
At iteration $t$, the algorithm selects a
mini-batch $B_t\subseteq\{1,\dots,n\}$ of size $b_t$ uniformly \emph{without
replacement} and forms the mini-batch gradient
\begin{equation}
g_t(\theta)
\;:=\;
\frac{1}{b_t}\sum_{i\in B_t}\psi(\theta;z_i),
\qquad
\psi(\theta;z):=\nabla_\theta \ell(\theta;z).
\label{eq:def_gt_setup}
\end{equation}
The SGD recursion is
\begin{equation}
\theta_{t+1}
\;=\;
\theta_t-\eta_t\,g_t(\theta_t),
\label{eq:sgd_setup}
\end{equation}
with stepsize $\eta_t>0$.

\paragraph{Filtration and martingale structure.}
Let $\cF_t$ be the $\sigma$-field generated by $(\theta_0,B_0,\dots,B_{t-1},\theta_t)$
and the dataset $z_{1:n}$. Then
\[
\xi_t(\theta_t):=g_t(\theta_t)-\E[g_t(\theta_t)\mid \cF_t]
\]
is a martingale difference sequence with respect to $(\cF_t)$, and all diffusion
limits are driven by the predictable quadratic variation of $\sum_t \eta_t\xi_t$.

\paragraph{Cost accounting (oracle calls).}
We measure computational cost in \emph{sample-gradient evaluations}. One iteration
with batch size $b_t$ uses $b_t$ sample evaluations. After $T$ iterations, the
total sampling budget is
\begin{equation}
N_T
\;:=\;
\sum_{t=0}^{T-1} b_t.
\label{eq:NT_def}
\end{equation}
We report results both in iterations $T$ and in oracle calls $N_T$, since
control/OR questions naturally optimize over batch
allocation under a fixed $N_T$.

\subsection{Likelihood losses: Fisher information and the induced metric}
\label{subsec:setup_fisher}

When $\ell(\theta;z)=-\log p_\theta(z)$ for a parametric family
$\{P_\theta:\theta\in\Theta\}$ with density $p_\theta$, the score is
$s_\theta(z):=\nabla_\theta\log p_\theta(z)$ and the loss gradient is the
negative score, $\nabla_\theta \ell(\theta;z)=-s_\theta(z)$. The intrinsic noise
geometry reduces to Fisher information at the correctly specified point $\theta^\star$.

\begin{definition}[Fisher information]
\label{def:fisher_setup}
Assume $p_\theta$ is differentiable in $\theta$ and $s_\theta(Z)$ is square
integrable under $P_\theta$. The Fisher information is
\begin{equation}
F(\theta)
\;:=\;
\E_\theta\!\bigl[s_\theta(Z)s_\theta(Z)^\top\bigr].
\label{eq:def_fisher_setup}
\end{equation}
\end{definition}

\begin{lemma}[Score identities {\citep[Ch.~7]{lehmann1998theory}}]
\label{lem:score_identities_setup}
Assume $\int p_\theta(z)\,dz=1$ and differentiation under the integral is valid.
Then (i) $\E_\theta[s_\theta(Z)]=0$ and (ii)
\begin{equation}
F(\theta)
\;=\;
-\E_\theta\!\bigl[\nabla_\theta^2\log p_\theta(Z)\bigr].
\label{eq:info_equality_setup}
\end{equation}
\end{lemma}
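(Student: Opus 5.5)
The plan is the classical argument: differentiate the normalization identity $\int p_\theta(z)\,dz=1$ under the integral sign twice. The hypothesis that differentiation under the integral is valid is exactly what licenses each interchange. I would also work on the support $\{p_\theta>0\}$ throughout, so that $\log p_\theta$ and $s_\theta=\nabla_\theta p_\theta/p_\theta$ are well defined there.

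For (i), differentiate once in $\theta$:
\[
0=\nabla_\theta\int p_\theta(z)\,dz=\int \nabla_\theta p_\theta(z)\,dz=\int s_\theta(z)\,p_\theta(z)\,dz=\E_\theta[s_\theta(Z)].
\]
The second equality is the assumed interchange. The third uses the identity $\nabla_\theta p_\theta=p_\theta\,s_\theta$ on the support.

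For (ii), differentiate the identity $\int s_\theta(z)p_\theta(z)\,dz=0$ once more in $\theta^\top$, again using the assumed interchange, and apply the product rule:
\[
0=\int \nabla_\theta s_\theta(z)\,p_\theta(z)\,dz+\int s_\theta(z)\,\bigl(\nabla_\theta p_\theta(z)\bigr)^\top dz .
\]
Since $\nabla_\theta s_\theta=\nabla^2_\theta\log p_\theta$ and $\nabla_\theta p_\theta=p_\theta s_\theta$, this reads
\[
0=\E_\theta[\nabla^2_\theta\log p_\theta(Z)]+\E_\theta[s_\theta s_\theta^\top].
\]
Rearranging gives \eqref{eq:info_equality_setup}, using Definition~\ref{def:fisher_setup}.

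The main obstacle is not algebraic; it is the second interchange. One needs $\nabla^2_\theta p_\theta$, equivalently $(\nabla^2\log p_\theta+s_\theta s_\theta^\top)p_\theta$, to be locally dominated by an integrable function. The square-integrability of the score in Definition~\ref{def:fisher_setup} guarantees that $F(\theta)$ is finite. It does not by itself justify the interchange; the lemma's standing assumption does, and I would read that assumption as covering both derivatives. I would remark that this is the standard regularity condition of \citet[Ch.~7]{lehmann1998theory}, for example local dominated convergence on a neighborhood of $\theta$, and cite it rather than re-derive it.
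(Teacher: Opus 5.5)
Your argument is the standard one the paper relies on; its proof only cites Lehmann. Part (i) is complete. On $\{p_\theta=0\}$ you also have $\nabla_\theta p_\theta(z)=0$, because $\theta$ is an interior minimizer of the nonnegative function $\theta'\mapsto p_{\theta'}(z)$, so restricting $\int\nabla_\theta p_\theta\,dz$ to the support loses nothing.

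In part (ii), though, the step that fails under the stated hypotheses is not the interchange you flag. It is the restriction to the support, i.e.\ your ``equivalently'' $\nabla^2_\theta p_\theta=(\nabla^2_\theta\log p_\theta+s_\theta s_\theta^\top)p_\theta$, which holds only where $p_\theta>0$. Granting $\nabla_\theta^2\int p_\theta\,dz=\int\nabla_\theta^2 p_\theta\,dz$, what you actually obtain is
\[
0=\E_\theta\bigl[\nabla^2_\theta\log p_\theta(Z)\bigr]+F(\theta)+\int_{\{p_\theta=0\}}\nabla^2_\theta p_\theta(z)\,dz .
\]
On the zero set, the second-order condition at a minimum gives only $\nabla^2_\theta p_\theta(z)\succeq 0$, not $=0$. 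So in general your computation yields only $F(\theta)\preceq-\E_\theta[\nabla^2_\theta\log p_\theta(Z)]$.

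The extra term can genuinely be nonzero. Take $z\in[0,2]$ with Lebesgue measure, $|\theta|<1/2$, and
\[
p_\theta(z)=\theta^2 \text{ on } [0,1], \qquad p_\theta(z)=1-\theta^2 \text{ on } (1,2].
\]
This density is normalized and polynomial in $\theta$, and both interchanges hold trivially (each side equals $0$). Yet at $\theta=0$ the support is $(1,2]$, where $s_0=0$, so $F(0)=0$. Meanwhile $-\E_0[\partial_\theta^2\log p_\theta(Z)]=2$, and the discarded term $\int_0^1 2\,dz=2$ accounts exactly for the discrepancy.

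The missing ingredient is the usual common-support condition. It is part of the standard regularity conditions in the reference you cite, and the paper's statement of the lemma omits it too. It requires that $\{z:p_\theta(z)>0\}$ not depend on $\theta$ in a neighborhood of the point. Then every $z$ outside the support has $p_{\theta'}(z)=0$ for all nearby $\theta'$, all $\theta$-derivatives vanish there, the boundary term disappears, and your computation goes through verbatim. State this hypothesis explicitly, or the weaker requirement $\int_{\{p_\theta=0\}}\nabla_\theta^2 p_\theta\,dz=0$. Do not fold it into ``work on the support throughout,'' which silently treats a $\theta$-dependent domain of integration as fixed.
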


\begin{proof}{Proof.}
Standard; see, e.g., \citet[Ch.~7]{lehmann1998theory}. \Halmos
\end{proof}

\paragraph{Fisher metric.}
If $F(\theta)\succ 0$, it defines the local inner product
$\langle u,v\rangle_{F(\theta)}:=u^\top F(\theta)v$ and norm
$\|u\|_{F(\theta)}:=\sqrt{u^\top F(\theta)u}$.
On compact sets where $F(\theta)$ is bounded and bounded away from $0$, Fisher and
Euclidean norms are equivalent.

\subsection{General losses: sandwich / Godambe geometry}
\label{subsec:setup_godambe}

For a general loss, information equality need not hold. The relevant local geometry
splits into (i) curvature of the drift and (ii) covariance of the estimating equation (gradient).

\begin{definition}[Noise covariance, sensitivity, and Godambe information]
\label{def:godambe_setup}
Let $\psi(\theta;z):=\nabla_\theta \ell(\theta;z)$ and define
\begin{align}
G(\theta) &:= \Cov\bigl(\psi(\theta;Z)\bigr),
\label{eq:def_G_setup}\\
H(\theta) &:= \nabla^2 L(\theta)
          \;=\;\E\!\left[\nabla_\theta \psi(\theta;Z)\right],
\label{eq:def_H_setup}
\end{align}
where expectations are under $Z\sim Q$. When $H(\theta)$ is invertible on the
identifiable directions, the classical sandwich covariance and Godambe information are
\begin{equation}
\Sigma_{\mathrm{sand}}(\theta)
:=H(\theta)^{-1}G(\theta)H(\theta)^{-1},
\qquad
J_{\mathrm{God}}(\theta):=\Sigma_{\mathrm{sand}}(\theta)^{-1}
=H(\theta)^\top G(\theta)^{-1}H(\theta).
\label{eq:def_godambe_setup}
\end{equation}
\end{definition}

\begin{remark}[Likelihood as a special case]
If $\ell(\theta;z)=-\log p_\theta(z)$ and the model is correctly specified, then
$\psi=s_\theta$ and $G(\theta)=H(\theta)=F(\theta)$ at $\theta^\star$, so the
sandwich collapses and the Godambe geometry equals Fisher.
\end{remark}

\subsection{Projection onto identifiable directions}
\label{subsec:setup_projection}

Projection is only needed when the parametrization is redundant or when only a
lower-dimensional tangent space is identifiable.

\begin{definition}[Tangent-space projection and projected geometry]
\label{def:projection_setup}
Let $T_\theta\subseteq\R^d$ denote the identifiable tangent space at $\theta$
(i.e., the linear span of directions along which the loss varies at first order;
formally $T_\theta:=\mathrm{range}(H(\theta))$, or equivalently
$\mathrm{supp}(F(\theta))$ in the likelihood case), and
let $\Pi_\theta$ be the Euclidean-orthogonal projector onto $T_\theta$.
Define the projected noise covariance
\begin{equation}
G^\star(\theta)
\;:=\;
\Pi_\theta\,G(\theta)\,\Pi_\theta^\top,
\label{eq:def_Gstar_setup}
\end{equation}
and, in the likelihood case, the projected Fisher information
\begin{equation}
F^\star(\theta)
\;:=\;
\Pi_\theta\,F(\theta)\,\Pi_\theta^\top.
\label{eq:def_Fstar_setup}
\end{equation}
When $T_\theta=\R^d$, $\Pi_\theta=I$ and $G^\star=G$, $F^\star=F$.
\end{definition}

\begin{remark}[Terminological convention]
\label{rem:terminology}
Throughout the paper, the identified noise matrix is $G^\star(\theta)$ (the
projected gradient covariance).  In the correctly specified likelihood case,
$G^\star(\theta^\star)=F^\star(\theta^\star)$ (projected Fisher information).
For general losses, $G^\star(\theta^\star)$ is \emph{not} itself the Godambe
information; rather, paired with the Hessian $H^\star$ it induces the
classical sandwich covariance $H^{\star-1}G^\star H^{\star-1}$ and, by
inversion, the Godambe information $J_{\mathrm{God}}=H^{\star\top}G^{\star-1}H^\star$.
Every theorem statement holds with $G^\star$ in the general case and
specializes to $F^\star$ under correct specification. When we write ``Fisher
metric'' or ``Fisher-dual norm,'' the general-loss reader should substitute the
corresponding object defined by $G^\star$; the paired Godambe/sandwich
interpretation always requires $H^\star$ as well.
\end{remark}

\begin{remark}[What enters later sections]
Sections~\ref{sec:fisher-alignment}--\ref{sec:diffusion} show that the mini-batch
noise covariance is (up to a sampling-design factor) exactly $G^\star(\theta)/b_t$
and reduces to $F^\star(\theta)/b_t$ under correct specification.  This is the
only matrix that enters the diffusion coefficient and Lyapunov balance.
\end{remark}

\subsection{Local regularity conditions}
\label{subsec:setup_assumptions}

All results are local near $\theta^\star$ and use standard smoothness/moment and
nondegeneracy conditions.

\begin{assumption}[Smoothness and moments]
\label{ass:regularity_setup}
The loss $\ell(\theta;z)$ is twice continuously differentiable in $\theta$.
There exist constants $M<\infty$ and $m_4<\infty$ such that for all $\theta$ in a
neighborhood $\mathcal{N}$ of $\theta^\star$:
\begin{enumerate}[label=(\alph*)]
\item $\|\nabla_\theta^2 \ell(\theta;z)\|_{\op}\le M$ almost surely;
\item $\E\bigl[\|\psi(\theta;Z)\|_2^4\bigr]\le m_4$.
\end{enumerate}
\end{assumption}

\begin{assumption}[Local stability]
\label{ass:strong_convex_setup}
The population risk is locally strongly convex at $\theta^\star$:
\begin{equation}
H^\star:=\nabla^2 L(\theta^\star)\succeq \mu I
\quad\text{for some }\mu>0.
\label{eq:local_strong_convex_setup}
\end{equation}
\end{assumption}

\begin{assumption}[Geometric nondegeneracy]
\label{ass:geom_nondegen_setup}
The intrinsic noise geometry is positive definite on identifiable directions:
\begin{equation}
G^\star(\theta^\star)\succeq \lambda_{\min} I
\quad\text{for some }\lambda_{\min}>0.
\label{eq:geom_nondegen_setup}
\end{equation}
(In the correctly specified likelihood case this is $F^\star(\theta^\star)\succ 0$.)
\end{assumption}

\begin{remark}[Role of assumptions]
Assumptions~\ref{ass:regularity_setup}--\ref{ass:geom_nondegen_setup}
provide Taylor/moment control, a locally stable equilibrium, and
nondegeneracy of the diffusion coefficient, respectively.
\end{remark}

\subsection{Key geometric parameters (preview)}
\label{subsec:setup_keyparams}

Several quantities derived from the local geometry at $\theta^\star$ govern
the rate and complexity results of
Sections~\ref{sec:rates}--\ref{sec:complexity}.  We collect their definitions
here so the reader can refer to them before encountering the formal theorems.

\begin{definition}[Fisher-strong-convexity constant]
\label{def:muF_setup}
Let $H^\star:=\nabla^2 L(\theta^\star)$ and $F^\star:=F(\theta^\star)$ (or
$G^\star(\theta^\star)$ for general losses).  Define
\begin{equation}
\mu_F
\;:=\;
\lambda_{\min}\!\left(
  \frac{(F^\star)^{1/2}H^\star(F^\star)^{-1/2}
        + (F^\star)^{-1/2}H^\star(F^\star)^{1/2}}{2}
\right).
\label{eq:muF_setup}
\end{equation}
Equivalently, $\mu_F$ is the smallest eigenvalue of the symmetrized
curvature--information interaction
$\mathrm{Sym}((F^\star)^{1/2}H^\star(F^\star)^{-1/2})$.
\end{definition}

\begin{definition}[Fisher condition number and effective dimension]
\label{def:kappaF_deff_setup}
\begin{equation}
\kappa_F
\;:=\;
\frac{\lambda_{\max}(F^\star)}{\lambda_{\min}(F^\star)},
\qquad
d_{\mathrm{eff}}(F^\star)
\;:=\;
\frac{\Tr(F^\star)}{\lambda_{\max}(F^\star)}
\;\in\;(0,d].
\label{eq:kappaF_deff_setup}
\end{equation}
\end{definition}

The quantity $\kappa_F$ measures statistical anisotropy rather than
Euclidean ill-conditioning; a problem can be Euclidean-stiff
($\kappa_H\gg 1$) yet statistically well-conditioned ($\kappa_F=O(1)$)
when curvature and information share eigenstructure.
The effective dimension $d_{\mathrm{eff}}$ is the stable rank of $F^\star$:
it equals $d$ when $F^\star$ is isotropic and approaches~$1$ when a single
direction dominates.

\begin{remark}[Generalized eigenvalue interpretation]
\label{rem:gen_eig}
The constant $\mu_F$ is linked to the generalized eigenvalues of the pair
$(H^\star, F^\star)$.  If $H^\star$ and $F^\star$ commute (i.e., share an
eigenbasis $\{v_j\}$ with respective eigenvalues $h_j$ and $f_j$), then
$(F^\star)^{1/2}H^\star(F^\star)^{-1/2}$ has eigenvalues
$f_j^{1/2}\,h_j\,f_j^{-1/2} = h_j$, so the symmetrization is trivial and
$\mu_F = \min_j h_j = \lambda_{\min}(H^\star)$: the Fisher-strong-convexity
constant reduces to the ordinary strong-convexity constant.
In the non-commuting case, $(F^\star)^{1/2}H^\star(F^\star)^{-1/2}$ is
non-normal (its eigenvalues are still those of $H^\star$, but it is no
longer symmetric), and the minimum of its quadratic form---i.e.,
$\mu_F$---can drop strictly below $\lambda_{\min}(H^\star)$.
Geometrically, this occurs when the Fisher metric amplifies directions
along which curvature is weak, making the effective contraction rate
slower than the Euclidean strong-convexity constant.  The product $\kappa_F \cdot d_{\mathrm{eff}}$ that enters
the oracle complexity (Theorem~\ref{thm:complexity_revised}) therefore
measures the total cost of resolving all statistically relevant directions
at the hardest direction's scale.
\end{remark}

\begin{table}[t]
\TABLE
{Key notation. All quantities are evaluated locally at or near $\theta^\star$ unless noted otherwise.\label{tab:notation}}
{\begin{tabular}{@{}ll@{}}
\hline\up
Symbol & Meaning \\
\hline\up
$L(\theta)=\E[\ell(\theta;Z)]$ & Population risk (loss) \\
$\psi(\theta;z)=\nabla_\theta\ell(\theta;z)$ & Per-sample estimating equation (gradient) \\
$g_B(\theta)$ & Mini-batch gradient estimator \\
$G(\theta)=\Cov(\psi(\theta;Z))$ & Gradient covariance; $G^\star=\Pi G\Pi^\top$ projected \\
$F(\theta)=\E[s_\theta s_\theta^\top]$ & Fisher information (likelihood case) \\
$F^\star(\theta^\star)$ & Projected Fisher; equals $G^\star(\theta^\star)$ under correct specification \\
$H^\star=\nabla^2 L(\theta^\star)$ & Hessian at optimum (drift curvature) \\
$J_{\mathrm{God}}=H^\top G^{-1}H$ & Godambe (sandwich) information \\
$\tau=\eta/b$ & Effective temperature \\
$\mu_F$ & Fisher-strong-convexity constant (Def.~\ref{def:muF_setup}) \\
$\kappa_F$ & Fisher condition number (Def.~\ref{def:kappaF_deff_setup}) \\
$d_{\eff}(F^\star)$ & Effective dimension / stable rank of $F^\star$ \down\\
\hline
\end{tabular}}{}
\end{table}


\section{Covariance Identification for Mini-Batch Noise}
\label{sec:fisher-alignment}

Mini-batch gradient noise is not an arbitrary covariance: it is determined by (i) the sampling design used to form
mini-batches and (ii) the intrinsic covariance of the per-sample estimating
equation $\psi(\theta;Z)$ (Fisher information in the well-specified likelihood
case, the projected gradient covariance more generally).  We make this precise
in two steps:

\begin{itemize}
\item an \emph{exact} finite-population identity conditional on the realized
dataset (classical survey-sampling algebra), and
\item an \emph{exchangeable/i.i.d.} limit that converts the dataset-dependent
quantity into the population covariance $G(\theta):=\Cov(\psi(\theta;Z))$ (or its
projected analogue).
\end{itemize}

\subsection{Finite-population covariance identity (exact, conditional on data)}
\label{subsec:finite-pop}

Fix $\theta$ and condition on the realized dataset $\{z_1,\dots,z_n\}$.  Write
$\psi_i:=\psi(\theta;z_i)$ and define the finite-population mean and covariance
\[
\bar\psi_n(\theta):=\frac{1}{n}\sum_{i=1}^n\psi_i,
\qquad
S_n(\theta):=\frac{1}{n-1}\sum_{i=1}^n(\psi_i-\bar\psi_n)(\psi_i-\bar\psi_n)^\top.
\]
Let $B$ be a simple random sample without replacement of size $b$ from
$\{1,\dots,n\}$ and define the mini-batch average
\[
g_B(\theta):=\frac{1}{b}\sum_{i\in B}\psi_i .
\]

\begin{lemma}[Exact finite-population covariance of the mini-batch mean]
\label{lem:finite_pop_cov_exact}
Conditionally on the dataset,
\begin{equation}
\Cov\!\bigl(g_B(\theta)\mid z_{1:n}\bigr)
\;=\;
\frac{1}{b}\Bigl(1-\frac{b}{n}\Bigr)\,S_n(\theta).
\label{eq:finite_pop_exact}
\end{equation}
\end{lemma}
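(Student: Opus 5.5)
The plan is to reduce the mini-batch mean to a linear statistic in the sampling indicators and compute its covariance from their exact first and second moments under simple random sampling without replacement. Conditionally on $z_{1:n}$, the vectors $\psi_1,\dots,\psi_n$ are fixed, and all randomness comes from $B$. Write $I_i:=\mathbf{1}\{i\in B\}$, so that $g_B(\theta)=b^{-1}\sum_{i=1}^n I_i\psi_i$. Since $\sum_i I_i=b$ deterministically, we may center without changing the covariance:
\[
g_B(\theta)-\bar\psi_n=b^{-1}\sum_{i=1}^n I_i d_i,
\qquad d_i:=\psi_i-\bar\psi_n,
\qquad \textstyle\sum_i d_i=0 .
\]

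The first step records the inclusion moments under uniform sampling of size $b$ without replacement:
\[
\E[I_i]=\frac{b}{n},
\qquad
\E[I_iI_j]=\frac{b(b-1)}{n(n-1)}\quad(i\neq j).
\]
Hence $\Var(I_i)=\tfrac{b}{n}\bigl(1-\tfrac{b}{n}\bigr)$ and
\[
\Cov(I_i,I_j)=\frac{b(b-1)}{n(n-1)}-\frac{b^2}{n^2}=-\frac{b(n-b)}{n^2(n-1)} .
\]
In particular, $\E[g_B\mid z_{1:n}]=\bar\psi_n$, so $g_B$ is conditionally unbiased.

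The second step expands the covariance of the linear statistic:
\[
\Cov(g_B\mid z_{1:n})=b^{-2}\Bigl[\sum_i \Var(I_i)\,d_id_i^\top+\sum_{i\neq j}\Cov(I_i,I_j)\,d_id_j^\top\Bigr].
\]
Because $\sum_j d_j=0$, the cross sum collapses:
\[
\sum_{i\neq j}d_id_j^\top=\Bigl(\sum_i d_i\Bigr)\Bigl(\sum_j d_j\Bigr)^{\!\top}-\sum_i d_id_i^\top=-\sum_i d_id_i^\top=-(n-1)S_n .
\]
Substituting both pieces gives
\[
b^{-2}(n-1)S_n\Bigl[\frac{b(n-b)}{n^2}+\frac{b(n-b)}{n^2(n-1)}\Bigr]
=b^{-2}(n-1)S_n\cdot\frac{b(n-b)}{n(n-1)}
=\frac{1}{b}\Bigl(1-\frac{b}{n}\Bigr)S_n ,
\]
which is \eqref{eq:finite_pop_exact}.

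No step is genuinely hard. The only place where care is needed is the bookkeeping of the negative covariance term and the use of $\sum_i d_i=0$, which is exactly where the $1/(n-1)$ normalization in $S_n$ pays off. I would also note the edge cases: $b=n$ gives zero covariance, and $n=1$ is excluded, since $S_n$ needs $n\ge2$.
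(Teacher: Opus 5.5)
Your proposal is correct and follows the same route as the paper: write $g_B$ as a linear statistic in the inclusion indicators, use their exact without-replacement moments $\Var(I_i)=\tfrac{b}{n}(1-\tfrac{b}{n})$ and $\Cov(I_i,I_j)=-\tfrac{b(n-b)}{n^2(n-1)}$, expand, and collect terms. Your centering at $\bar\psi_n$, so that the cross sum collapses to $-(n-1)S_n$, simply makes explicit the step the paper leaves as ``collecting terms,'' and your algebra checks out.
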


\begin{proof}{Proof.}
Write $g_B(\theta)=\frac{1}{b}\sum_{i=1}^n I_i\,\psi_i$ where $I_i=\mathbf 1\{i\in B\}$.
Under simple random sampling without replacement, $\E[I_i]=b/n$,
$\Var(I_i)=\frac{b}{n}(1-\frac{b}{n})$, and for $i\neq j$,
$\Cov(I_i,I_j)=-\frac{b(n-b)}{n^2(n-1)}$.  Expanding
$\Cov(\sum_i I_i\psi_i\mid z_{1:n})$ and collecting terms yields
\eqref{eq:finite_pop_exact}. \Halmos
\end{proof}

\begin{remark}[Equivalent correction-factor forms]
\label{rem:fpc_forms}
Using $S_n$ with the $1/(n-1)$ normalization gives the factor $(1-b/n)$.
If instead one uses the $1/n$ covariance normalization, the factor becomes
$1-(b-1)/(n-1)$.  The two are algebraically equivalent up to the change of
normalization.
\end{remark}

Lemma~\ref{lem:finite_pop_cov_exact} is the clean MOR-style decomposition:
\emph{sampling design} contributes the scalar finite-population correction
$1-b/n$, while the \emph{geometry} lives in $S_n(\theta)$.

\subsection{From exchangeability to the population covariance}
\label{subsec:exchangeability-to-geometry}

We now convert the dataset-dependent covariance $S_n(\theta)$ into a
population-level object.

Let $(Z_i)_{i\ge 1}$ be exchangeable with de Finetti directing measure $\mu$
\citep{hewitt1955symmetric}.
Conditional on $\mu$, the sequence is i.i.d. with law $P_\mu$.  Define the
population covariance (conditional on $\mu$)
\begin{equation}
G_\mu(\theta)\;:=\;\Cov_\mu\bigl(\psi(\theta;Z)\bigr),
\qquad Z\sim P_\mu,
\label{eq:Gmu_def}
\end{equation}
and, when identifiability requires it, the projected version
$G_\mu^\star(\theta):=\Pi_\theta G_\mu(\theta)\Pi_\theta^\top$.

\begin{theorem}[Covariance identification under exchangeability]
\label{thm:godambe-alignment}
Assume $\E_\mu\|\psi(\theta;Z)\|_2^2<\infty$.
Let $g_B(\theta)=\frac{1}{b}\sum_{i\in B}\psi(\theta;Z_i)$ where $B$ is a uniform
subset of $\{1,\dots,n\}$ of size $b$ (sampling without replacement).
Then, conditional on $\mu$,
\begin{equation}
\Cov\!\bigl(g_B(\theta)\mid \mu\bigr)
\;=\;
\frac{1}{b}\,G_\mu(\theta).
\label{eq:godambe_alignment_exact}
\end{equation}
When the conditional mean $\E[\psi(\theta;Z)\mid\mu]$ does not vary with
$\mu$ (identifiability under a single data-generating distribution), the
unconditional covariance reads $\Cov(g_B(\theta))=b^{-1}G(\theta)$ where
$G(\theta):=\E_\mu[G_\mu(\theta)]$; after projection onto the identifiable
tangent space (Section~\ref{sec:setup}), we denote this
$G^\star(\theta)=\Pi_\theta G(\theta)\Pi_\theta^\top$.  The remainder of
the paper works with $G^\star$.
\emph{Interpretation.}  Equation~\eqref{eq:godambe_alignment_exact} is exact
(not an approximation) when one marginalizes over both the mini-batch index $B$
\emph{and} the dataset realization conditional on $\mu$.
The finite-population correction $(1-b/n)$ of Lemma~\ref{lem:finite_pop_cov_exact}
is present when one conditions on a fixed dataset, but it disappears at the
population level because the randomness of the dataset itself contributes a
complementary term through the total-covariance decomposition.
Equation~\eqref{eq:conditional_vs_population} below makes this bookkeeping
explicit.

Moreover,
\begin{equation}
\E\!\Big[\Cov(g_B(\theta)\mid Z_{1:n},\mu)\,\big|\,\mu\Big]
\;=\;
\frac{1}{b}\Bigl(1-\frac{b}{n}\Bigr)\,\E[S_n(\theta)\mid\mu]
\;=\;
\frac{1}{b}\Bigl(1-\frac{b}{n}\Bigr)\,G_\mu(\theta),
\label{eq:conditional_vs_population}
\end{equation}
and the remaining part of \eqref{eq:godambe_alignment_exact} comes from the
randomness of the dataset through the total-covariance decomposition.
\end{theorem}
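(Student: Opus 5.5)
The plan is to apply the law of total covariance conditional on $\mu$, splitting over the two layers of randomness: the dataset $Z_{1:n}$ and the index set $B$. We take $B$ independent of $(Z_i)$ and of $\mu$. Then
\[
\Cov(g_B\mid\mu)=\E\bigl[\Cov(g_B\mid Z_{1:n},\mu)\,\big|\,\mu\bigr]+\Cov\bigl(\E[g_B\mid Z_{1:n},\mu]\,\big|\,\mu\bigr).
\]

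The first term is handled by Lemma~\ref{lem:finite_pop_cov_exact}. Conditional on the dataset, the inner covariance is $b^{-1}(1-b/n)S_n(\theta)$. Conditional on $\mu$, the $Z_i$ are i.i.d.\ $P_\mu$, and the square-integrability hypothesis makes $S_n$ integrable. Since $S_n$ with the $1/(n-1)$ normalization is the unbiased sample covariance, $\E[S_n\mid\mu]=G_\mu(\theta)$. This gives \eqref{eq:conditional_vs_population}.

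The second term uses that, under uniform sampling, $\E[g_B\mid Z_{1:n},\mu]=\bar\psi_n(\theta)$. Conditional i.i.d.-ness gives $\Cov(\bar\psi_n\mid\mu)=n^{-1}G_\mu(\theta)$. Adding the two pieces,
\[
\frac1b\Bigl(1-\frac bn\Bigr)G_\mu+\frac1n G_\mu=\frac1b G_\mu ,
\]
which is \eqref{eq:godambe_alignment_exact}. As a cross-check, one could skip the decomposition altogether: conditional on $\mu$, and because $B$ is independent of the data, $(Z_i)_{i\in B}$ is itself an i.i.d.\ $P_\mu$ sample of size $b$. The identity is then immediate, and the decomposition serves only as the bookkeeping that the statement asks for.

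For the unconditional claim, apply total covariance again, this time over $\mu$:
\[
\Cov(g_B)=\E[\Cov(g_B\mid\mu)]+\Cov(\E[g_B\mid\mu]).
\]
We have $\E[g_B\mid\mu]=\E_\mu[\psi(\theta;Z)]$. Under the stated assumption that this does not vary with $\mu$, the second term vanishes, leaving $b^{-1}\E_\mu[G_\mu]=b^{-1}G(\theta)$. Conjugating by the deterministic projector $\Pi_\theta$ then gives the projected covariance $b^{-1}G^\star(\theta)$.

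The only delicate point is measure-theoretic. We need the conditional covariances to be well defined, which requires $\E\|\psi\|^2<\infty$ also after integrating over $\mu$ (or working $\mu$-a.s.). We also need the independence of $B$ from $(\mu,Z)$ to be stated explicitly, so that conditioning on $\mu$ preserves both the uniform law of $B$ and the conditional i.i.d.\ structure. The algebra itself is routine.
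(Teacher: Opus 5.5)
Your proof is correct. It differs from the paper's mainly in which argument carries \eqref{eq:godambe_alignment_exact}.

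The paper proves that identity by exactly what you call the cross-check. Conditional on $\mu$, the coordinates indexed by a uniform $B$ are $b$ i.i.d.\ draws from $P_\mu$, which gives $b^{-1}G_\mu$ at once. The paper then uses Lemma~\ref{lem:finite_pop_cov_exact} and unbiasedness of $S_n$ only to obtain \eqref{eq:conditional_vs_population}. It merely asserts that the rest ``comes from the randomness of the dataset.''

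Your primary route instead builds \eqref{eq:godambe_alignment_exact} from the two-level total-covariance decomposition. You actually compute the between-dataset term $\Cov(\bar\psi_n\mid\mu)=n^{-1}G_\mu$, so that $(b^{-1}-n^{-1})G_\mu+n^{-1}G_\mu=b^{-1}G_\mu$. This explicitly verifies the bookkeeping claim in the statement, which the paper never checks: the $-n^{-1}G_\mu$ deficit created by the finite-population correction is exactly refunded by the variability of $\bar\psi_n$. The price is that you lean on the finite-population algebra, and $S_n$ requires $n\ge 2$. The direct argument needs neither, and it shows the identity is purely a property of the sampled coordinates.

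Your unconditional step is the same total-covariance-over-$\mu$ argument the paper gives in Proposition~\ref{prop:fpc_reuse}(iii). The assumptions you make explicit are ones the paper uses only tacitly: independence of $B$ from $(\mu,Z_{1:n})$, and square-integrability after integrating over $\mu$.
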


\begin{proof}{Proof.}
Conditional on $\mu$, the sequence $(Z_i)_{i\ge 1}$ is i.i.d.\ with law
$P_\mu$; therefore any uniformly sampled subset of size $b$ has the same joint
distribution as the first $b$ coordinates.
Hence selecting indices $B$ without replacement from $\{1,\dots,n\}$ is
distributionally equivalent to taking the first $b$ terms, and $g_B(\theta)$ is
the average of $b$ i.i.d. draws from $P_\mu$. This yields
$\Cov(g_B(\theta)\mid\mu)=\frac{1}{b}G_\mu(\theta)$, proving
\eqref{eq:godambe_alignment_exact}.

For \eqref{eq:conditional_vs_population}, apply the exact identity
Lemma~\ref{lem:finite_pop_cov_exact} conditional on $(Z_{1:n},\mu)$ and take
$\E[\cdot\mid \mu]$. Under $\E_\mu\|\psi\|^2<\infty$,
$\E[S_n(\theta)\mid\mu]=G_\mu(\theta)$ (the $1/(n-1)$ sample-covariance normalization is unbiased).
\Halmos
\end{proof}

\begin{remark}[What ``alignment'' means operationally]
\label{rem:alignment_operational}
Equation \eqref{eq:godambe_alignment_exact} is the alignment statement used by
the diffusion/Lyapunov analysis: \emph{mini-batch noise has covariance
$\frac{1}{b}$ times a population matrix determined by the loss and the data
law}.  The finite-population correction in Lemma~\ref{lem:finite_pop_cov_exact}
matters when one conditions on a fixed dataset and reuses it many times; in the
standard stochastic-approximation regime with fresh randomness across
iterations, the population covariance \eqref{eq:godambe_alignment_exact} is the
relevant object.
\end{remark}

\begin{proposition}[Finite-population correction under dataset reuse]
\label{prop:fpc_reuse}
Under the notation of Theorem~\ref{thm:godambe-alignment}, three
conditioning levels yield distinct covariance expressions:
\begin{enumerate}[label=(\roman*),leftmargin=2em]
\item \emph{Conditional on the dataset $z_{1:n}$} (fixed pool, sampling without
      replacement):
      $\Cov(g_B(\theta)\mid z_{1:n}) = \frac{1}{b}(1-\frac{b}{n})\,S_n(\theta)$
      (Lemma~\ref{lem:finite_pop_cov_exact}).
\item \emph{Conditional on the directing measure $\mu$} (fresh data per
      iteration):
      $\Cov(g_B(\theta)\mid\mu) = \frac{1}{b}\,G_\mu(\theta)$
      (Theorem~\ref{thm:godambe-alignment}), with no finite-population
      correction.
\item \emph{Population level} (unconditional):
      $\Cov(g_B(\theta)) = \frac{1}{b}\,\E_\mu[G_\mu(\theta)] + \Cov_\mu(\E[g_B(\theta)\mid\mu])$.
      This reduces to $\frac{1}{b}\,G(\theta)$ when the conditional mean $\E[\psi(\theta;Z)\mid\mu]$ does not vary with $\mu$ (e.g., under identifiability with a single data-generating distribution).
\end{enumerate}
When a finite dataset is reused across many iterations (i.e., without-replacement
sampling from the same $z_{1:n}$ at every step), the conditional
covariance carries the factor $(1-b/n)$, which is non-negligible unless
$b=o(n)$.  The main diffusion, rate, and complexity results
(Sections~\ref{sec:diffusion}--\ref{sec:complexity}) are population-level
or fresh-sampling statements that use level~(ii).
\end{proposition}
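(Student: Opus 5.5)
The proposition has three parts, and I will prove each by reducing it to an earlier result or to one application of the law of total covariance.

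\emph{Part (i).} This is exactly Lemma~\ref{lem:finite_pop_cov_exact}. Conditional on $z_{1:n}$, the only randomness is the index set $B$. Writing $g_B=\frac1b\sum_i I_i\psi_i$ with the indicator moments listed in that proof gives $\frac1b(1-\frac bn)S_n(\theta)$. Nothing new is needed beyond citing it.

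\emph{Part (ii).} I will invoke Theorem~\ref{thm:godambe-alignment}. Conditional on $\mu$ the $Z_i$ are i.i.d.\ $P_\mu$ and $B$ is independent of them. So $(Z_i)_{i\in B}$ has the law of $b$ i.i.d.\ draws, and $\Cov(g_B\mid\mu)=b^{-1}G_\mu(\theta)$.

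To explain why no correction factor appears, I will use the total-covariance decomposition conditional on $\mu$:
\[
\Cov(g_B\mid\mu)=\E\bigl[\Cov(g_B\mid Z_{1:n},\mu)\mid\mu\bigr]+\Cov\bigl(\E[g_B\mid Z_{1:n},\mu]\mid\mu\bigr).
\]
By \eqref{eq:conditional_vs_population} the first term is $\frac1b(1-\frac bn)G_\mu$. The second term is $\Cov(\bar\psi_n\mid\mu)=\frac1n G_\mu$. Their sum is $\frac1bG_\mu$, which confirms the claim consistently. This bookkeeping step is the only one I would write out, since it is what justifies the phrase ``no finite-population correction''.

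\emph{Part (iii).} I apply the law of total covariance once more, now over $\mu$:
\[
\Cov(g_B)=\E_\mu[\Cov(g_B\mid\mu)]+\Cov_\mu(\E[g_B\mid\mu]).
\]
By (ii) the first term is $\frac1b\E_\mu[G_\mu]$. The second term equals $\Cov_\mu(m_\mu(\theta))$, where $m_\mu(\theta)=\E_\mu\psi(\theta;Z)$, because $\E[g_B\mid\mu]=m_\mu$. If $m_\mu$ does not depend on $\mu$, this term vanishes and the covariance is $b^{-1}G(\theta)$ with $G=\E_\mu G_\mu$. Projecting by the deterministic $\Pi_\theta$ commutes with these expectations, which yields $G^\star$.

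\emph{Dataset-reuse remark.} The factor $1-b/n$ deviates from $1$ by $b/n$, so it is negligible iff $b/n\to0$; I would state this directly.

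\emph{Main obstacle.} The only delicate point is integrability: the conditional covariances must exist and Fubini must apply. I handle this by assuming $\E\|\psi\|^2<\infty$ unconditionally. Under that assumption, $\E_\mu\|\psi\|^2<\infty$ for almost every $\mu$, and every term above is finite.
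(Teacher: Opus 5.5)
Your proposal is correct and matches the paper's proof: it likewise cites Lemma~\ref{lem:finite_pop_cov_exact} for (i) and Theorem~\ref{thm:godambe-alignment} for (ii), and gets (iii) from the total-covariance decomposition over $\mu$. Your explicit check that $\frac1b(1-\frac bn)G_\mu+\frac1n G_\mu=\frac1b G_\mu$ writes out bookkeeping the paper leaves to the interpretation paragraph of Theorem~\ref{thm:godambe-alignment}, and assuming $\E\|\psi\|^2<\infty$ unconditionally is a harmless strengthening that makes level (iii) well defined.
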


\begin{proof}{Proof.}
Level~(i) is Lemma~\ref{lem:finite_pop_cov_exact}.  Level~(ii) is
Theorem~\ref{thm:godambe-alignment}.  Level~(iii) follows by integrating
level~(ii) over $\mu$ under identifiability.  The total-covariance
decomposition
$\Cov(g_B) = \E_\mu[\Cov(g_B\mid\mu)] + \Cov_\mu(\E[g_B\mid\mu])$
reconciles levels~(ii) and~(iii), with the second term contributing when
$\E[g_B\mid\mu]$ varies with $\mu$. \Halmos
\end{proof}

\begin{remark}[Batch size rescales temperature, not geometry]
\label{rem:batch_temperature}
Equation~\eqref{eq:godambe_alignment_exact} implies that increasing the
batch size $b$ rescales the effective temperature
$\tau=\eta/b$ (Section~\ref{sec:diffusion}) but does \emph{not} alter the
noise ellipsoid: its shape is entirely determined by the statistical
geometry $G_\mu(\theta)$.
This decoupling---scale from the sampling budget, shape from the
loss---underlies the subsequent rate and complexity results.
\end{remark}

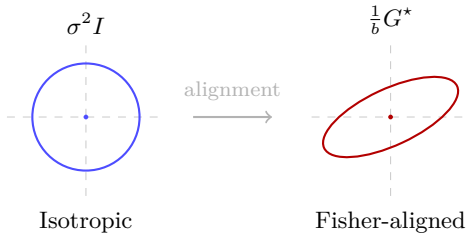
\begin{figure}[t]
\centering
\begin{tikzpicture}[scale=0.75]
\begin{scope}[shift={(-2.7,0)}]
  \draw[gray!40, dashed] (-1.4,0)--(1.4,0);
  \draw[gray!40, dashed] (0,-1.4)--(0,1.4);
  \draw[black, thick, dashed] (0,0) circle (0.95);
  \fill[black] (0,0) circle (1.2pt);
  \node[below] at (0,-1.5) {\footnotesize Isotropic};
  \node[above] at (0,1.3) {\footnotesize $\sigma^2 I$};
\end{scope}
\begin{scope}[shift={(2.7,0)}]
  \draw[gray!40, dashed] (-1.4,0)--(1.4,0);
  \draw[gray!40, dashed] (0,-1.4)--(0,1.4);
  \draw[black, thick, rotate=25]
    (0,0) ellipse (1.3 and 0.5);
  \fill[black] (0,0) circle (1.2pt);
  \node[below] at (0,-1.5) {\footnotesize Fisher-aligned};
  \node[above] at (0,1.3) {\footnotesize $\tfrac{1}{b}G^\star$};
\end{scope}
\draw[->, thick, gray!60] (-0.8,0) -- (0.6,0);
\node[above, gray!60] at (-0.1,0.12) {\scriptsize alignment};
\end{tikzpicture}
\caption{Isotropic noise (left, dashed circle) vs.\ Fisher-aligned noise
(right, solid ellipse) at the same total variance.  Batch size rescales
temperature but not shape.}
\label{fig:noise_schematic}
\end{figure}

\begin{remark}[Specialization to Fisher alignment]
\label{rem:fisher_corollary}
If $\ell(\theta;z)=-\log p_\theta(z)$ and the model is correctly specified, then
$\psi(\theta;z)=s_\theta(z)$ is the score and the Bartlett identity yields
$\Cov(s_{\theta}(Z))=F(\theta)$ at $\theta=\theta^\star$.  Projecting onto the
identifiable tangent space gives $G_\mu^\star(\theta^\star)=F^\star(\theta^\star)$.
Thus \eqref{eq:godambe_alignment_exact} becomes
$\Cov(g_B(\theta^\star)\mid\mu)=\frac{1}{b}F^\star(\theta^\star)$, i.e., Fisher alignment.
\end{remark}

\begin{remark}[No Gaussian assumption]
Neither Lemma~\ref{lem:finite_pop_cov_exact} nor
Theorem~\ref{thm:godambe-alignment} assumes Gaussian noise.  Gaussianity enters
later only as a \emph{diffusion approximation} (martingale CLT) when we pass from
the discrete recursion to an OU limit.
\end{remark}


\section{Diffusion Limits: Fluid Scale, Fluctuation Scale, and the OU Regime}
\label{sec:diffusion}

This section links the discrete-time SGD recursion to a continuous-time
stochastic process whose diffusion coefficient is determined by the
mini-batch covariance $G^\star(\theta)$ identified in
Section~\ref{sec:fisher-alignment}.  The link is \emph{not} a raw diffusion
limit at fixed batch size: the martingale quadratic variation of the raw
interpolation is of order $\eta/b$ and vanishes as $\eta\downarrow 0$, so
the raw process converges to the deterministic gradient-flow ODE.  The
nondegenerate stochastic limit appears only after centering around that
ODE and rescaling by $\sqrt{b/\eta}$, yielding a fluctuation process whose
limiting covariance structure is exactly $G^\star$.  Near a nondegenerate optimum, this fluctuation process is an
Ornstein--Uhlenbeck (OU) process whose stationary covariance $\Sigma_U$
satisfies a fluctuation-scale Lyapunov equation.  On the original iterate
scale, the linearized discrete recursion of
Proposition~\ref{prop:discrete_lyap} admits a unique stationary covariance
$\Sigma_\eta^{\mathrm{lin}}$ that matches $(\eta/b)\Sigma_U$ at leading
order in $\eta$, recovering the paper's ``effective temperature''
$\tau=\eta/b$ on the correct scale; we do not claim an invariant law for
the nonlinear SGD iterates.  Diffusion approximations are a standard tool in OR for
reducing stochastic systems to tractable Brownian models
\citep{harrison1985brownian,whitt2002stochastic}; the present setting is
analogous, with the sampling mechanism playing the role of the arrival
process.

Throughout this section we work in the \emph{fresh-sampling / population}
conditioning regime of Theorem~\ref{thm:godambe-alignment}: at each iterate
the mini-batch is a simple random sample drawn with fresh randomness from
the population (equivalently, conditional on the de~Finetti directing
measure $\mu$).  The fixed-dataset reuse case with $(1-b/n)$ correction
belongs to Lemma~\ref{lem:finite_pop_cov_exact} and
Proposition~\ref{prop:fpc_reuse} and is not used below.

\subsection{SGD recursion and the martingale-noise decomposition}
\label{subsec:diffusion_decomp}

Fix a batch size $b\in\N$ and a step size $\eta>0$.  Let
\[
h(\theta):=\nabla L(\theta)=\E[\psiFn(\theta;Z)],
\qquad
\psiFn(\theta;z):=\nabla_\theta \ell(\theta;z),
\]
and write the mini-batch gradient estimator as
$g_t(\theta_t)=h(\theta_t)+\xi_t(\theta_t)$ with
$\E[\xi_t(\theta_t)\mid\cF_t]=0$, where $\cF_t$ is the natural filtration.
The SGD recursion is
\begin{equation}
\theta_{k+1}^\eta=\theta_k^\eta-\eta\,h(\theta_k^\eta)-\eta\,\xi_{k+1}^\eta,
\label{eq:sgd_decomp}
\end{equation}
with the superscript $\eta$ emphasizing the dependence of the iterate path
on the step size.  Under the fresh-sampling conditioning,
\begin{equation}
\E\!\bigl[\xi_{k+1}^\eta(\xi_{k+1}^\eta)^\top\mid\cF_k^\eta\bigr]
\;=\;\frac{1}{b}\,G^\star(\theta_k^\eta)+r_{k+1}^\eta,
\label{eq:noise_cov}
\end{equation}
with remainder $r_{k+1}^\eta$ satisfying the uniform bound stated in
Assumption~\ref{ass:diffusion_noise} below.  We call $\tau:=\eta/b$ the
\emph{effective temperature}:
\begin{equation}
\tau:=\eta/b,
\label{eq:temp_tau}
\end{equation}
the scalar that sets the size of the stationary fluctuations on the
original iterate scale (Corollary~\ref{cor:lyapunov}).

The covariance identification of Section~\ref{sec:fisher-alignment}
determines the noise geometry, but at fixed batch size $b$ the raw
interpolation of SGD on algorithmic time $t=k\eta$ has martingale
quadratic variation of order $\eta/b$, which vanishes as $\eta\downarrow
0$.  Accordingly, the raw process converges to the deterministic
gradient-flow ODE.  The correct nondegenerate stochastic limit is
obtained by centering around that ODE and rescaling by $\sqrt{b/\eta}$,
yielding a fluctuation process whose limiting covariance structure is
exactly $G^\star$.

\subsection{Assumptions for the fluid and fluctuation limits}
\label{subsec:diffusion_assumptions}

\begin{assumption}[Local regularity and ODE flow]
\label{ass:diffusion_reg}
There is an open neighborhood $\cU\subseteq\R^d$ such that:
\begin{enumerate}[label=(\roman*)]
\item $h\in C^1(\cU;\R^d)$ and $\nabla h$ is locally Lipschitz on $\cU$;
\item $G^\star:\cU\to\mathbb S_+^d$ is continuous;
\item the ODE $\dot{\bar\theta}(t)=-h(\bar\theta(t))$ with $\bar\theta(0)=\theta_0$
      has a unique solution on $[0,T]$ remaining in a compact set
      $K\Subset\cU$.
\end{enumerate}
\end{assumption}

\begin{assumption}[Martingale noise and conditional covariance]
\label{ass:diffusion_noise}
The sequence $(\xi_{k+1}^\eta,\cF_{k+1}^\eta)$ is a martingale difference, and
the conditional covariance admits the expansion \eqref{eq:noise_cov} with
\[
\sup_{k\le T/\eta}\|r_{k+1}^\eta\|_{\op}\;\xrightarrow{\,\mathbb P\,}\;0
\qquad\text{as }\eta\downarrow 0.
\]
\end{assumption}

\begin{assumption}[Conditional fourth-moment bound on $K$]
\label{ass:fourth_moment}
There is a constant $C_K<\infty$ such that
\[
\sup_{\eta>0}\,\sup_{k\le T/\eta}\,
\E\!\left[\|\xi_{k+1}^\eta\|^4\,\mathbf 1_{\{\theta_k^\eta\in K\}}\,\big|\,
\cF_k^\eta\right]
\;\le\;C_K\quad\text{a.s.}
\]
\end{assumption}

\begin{assumption}[Localization]
\label{ass:localization}
$\Pr\!\bigl(\theta_k^\eta\in K \text{ for all } k\le T/\eta\bigr)\to 1$ as
$\eta\downarrow 0$.
\end{assumption}

\begin{assumption}[Initial fluctuations converge]
\label{ass:initial_fluct}
$\theta_0^\eta\to\theta_0$, and
$U_0^\eta:=\sqrt{b/\eta}\,(\theta_0^\eta-\theta_0)\Rightarrow U_0$ for some
random variable $U_0$.
\end{assumption}

\begin{remark}[Verifiability]
\label{rem:verifiable}
Assumptions~\ref{ass:diffusion_noise}--\ref{ass:fourth_moment} hold if
$\sup_{\theta\in\cU}\E[\|\psi(\theta;Z)\|^{4}]<\infty$ (guaranteed by
Assumption~\ref{ass:regularity_setup}(b)), $G^\star$ is continuous on
$\cU$, and the mini-batch is drawn by simple random sampling from an
exchangeable sequence.  Assumption~\ref{ass:localization} is standard in
stochastic approximation (Kushner--Yin style) and holds, e.g., when the
ODE is globally stable and the initial iterate is in the basin.
\end{remark}

\subsection{Fluid limit and fluctuation theorem}
\label{subsec:fluid_and_fluctuation}

\begin{theorem}[Fluid limit and fluctuation limit at fixed batch size]
\label{thm:diffusion_limit}
Fix $T>0$ and $b\in\N$, and suppose
Assumptions~\ref{ass:diffusion_reg}--\ref{ass:initial_fluct} hold.
Let $\hat\theta^\eta(t):=\theta_{\lfloor t/\eta\rfloor}^\eta$ be the step
interpolation of the SGD iterates \eqref{eq:sgd_decomp}, and let
$\bar\theta$ solve the ODE in Assumption~\ref{ass:diffusion_reg}(iii).
Then:
\begin{enumerate}[label=(\alph*)]
\item \emph{Fluid limit.}
\[
\sup_{t\in[0,T]}\bigl\|\hat\theta^\eta(t)-\bar\theta(t)\bigr\|
\;\xrightarrow{\,\mathbb P\,}\;0.
\]
\item \emph{Fluctuation FCLT.}  The rescaled fluctuation process
\[
U^\eta(t)\;:=\;\sqrt{\tfrac{b}{\eta}}\,\bigl(\hat\theta^\eta(t)-\bar\theta(t)\bigr)
\]
converges weakly in $D([0,T];\R^d)$ (equivalently, in $C([0,T];\R^d)$, as
the limit is continuous) to the unique weak solution $U$ of the linear SDE
\begin{equation}
dU_t \;=\;-\nabla h(\bar\theta(t))\,U_t\,dt
\;+\;C^\star(\bar\theta(t))\,dW_t,
\qquad U_0\text{ as in Assumption~\ref{ass:initial_fluct}},
\label{eq:fluctuation_sde}
\end{equation}
where $W$ is a standard $d$-dimensional Brownian motion and
$C^\star(\theta){C^\star(\theta)}^\top=G^\star(\theta)$ (e.g., the symmetric
PSD square root of $G^\star$).
\end{enumerate}
Equivalently, on the original iterate scale,
\begin{equation}
\hat\theta^\eta(t)\;=\;\bar\theta(t)
\;+\;\sqrt{\tfrac{\eta}{b}}\,U_t
\;+\;o_{\mathbb P}\!\Bigl(\sqrt{\tfrac{\eta}{b}}\Bigr),
\label{eq:iterate_expansion}
\end{equation}
uniformly on $[0,T]$.
\end{theorem}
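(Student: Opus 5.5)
The plan is the standard stochastic-approximation route: localize, compare the interpolated recursion with the ODE through a discrete Gronwall argument, linearize the fluctuation recursion, and apply a martingale FCLT to the rescaled noise sum. We first localize. Let $\tau_K^\eta:=\inf\{k:\theta_k^\eta\notin K\}$ and work with the stopped recursion. By Assumption~\ref{ass:localization}, $\Pr(\tau_K^\eta\le T/\eta)\to 0$, so every convergence statement for the stopped process transfers to the original one. On $K$, $h$ is Lipschitz with constant $L_K$ and $\nabla h$ is Lipschitz with constant $L_K'$ (Assumption~\ref{ass:diffusion_reg}(i)). Also $G^\star$ is bounded and uniformly continuous on $K$, and the conditional fourth moments of $\xi$ are bounded by $C_K$ (Assumption~\ref{ass:fourth_moment}).

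\emph{Fluid limit.} Write
\[
\theta_k^\eta=\theta_0^\eta-\eta\sum_{j<k}h(\theta_j^\eta)-M_k^\eta,
\qquad
M_k^\eta:=\eta\sum_{j<k}\xi_{j+1}^\eta .
\]
By Doob's $L^2$ inequality and \eqref{eq:noise_cov}, $\E\sup_{k\le T/\eta}\|M_k^\eta\|^2\le \eta^2(T/\eta)\bigl(\sup_K\|G^\star\|/b+o(1)\bigr)=O(\eta/b)$. Comparing with $\bar\theta(k\eta)=\theta_0-\int_0^{k\eta}h(\bar\theta)$, the Riemann-sum error is $O(\eta)$ because $\bar\theta$ is $C^1$ on $[0,T]$. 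A discrete Gronwall inequality then gives
\[
\sup_{k\le T/\eta}\|\theta_k^\eta-\bar\theta(k\eta)\|\le e^{L_KT}\Bigl(\|\theta_0^\eta-\theta_0\|+O(\eta)+\sup_k\|M_k^\eta\|\Bigr)\to 0 .
\]
Passing from grid points to the step interpolation costs only the modulus of continuity of $\bar\theta$, which is $O(\eta)$. This proves (a). The same argument, combined with Assumption~\ref{ass:initial_fluct}, yields the sharper bound $\sup_k\|\theta_k^\eta-\bar\theta(k\eta)\|=O_{\mathbb P}(\sqrt{\eta/b}+\eta)$, which is needed below.

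\emph{Fluctuation recursion.} Set $U_k^\eta:=\sqrt{b/\eta}\,(\theta_k^\eta-\bar\theta(k\eta))$. Taylor-expanding $h$ around $\bar\theta(k\eta)$ gives
\[
U_{k+1}^\eta=U_k^\eta-\eta\,\nabla h(\bar\theta(k\eta))\,U_k^\eta-\sqrt{\eta b}\,\xi_{k+1}^\eta+\sqrt{b/\eta}\,\rho_k^\eta .
\]
The remainder $\rho_k^\eta$ collects two terms: the Taylor remainder, of size $L_K'\,\|\theta_k^\eta-\bar\theta(k\eta)\|^2\eta$, and the ODE discretization error $\int_{k\eta}^{(k+1)\eta}[h(\bar\theta(s))-h(\bar\theta(k\eta))]ds=O(\eta^2)$. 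Summed over $T/\eta$ steps and multiplied by $\sqrt{b/\eta}$, the remainders total $O_{\mathbb P}(\sqrt{b/\eta}\,(\eta/b+\eta))=O_{\mathbb P}(\sqrt{\eta})\to 0$ for fixed $b$, using the sharper fluid bound. The noise term is
\[
N^\eta(t):=-\sqrt{\eta b}\sum_{j<\lfloor t/\eta\rfloor}\xi_{j+1}^\eta .
\]
Its predictable quadratic variation is $\eta\sum_j\bigl(G^\star(\theta_j^\eta)+b\,r_{j+1}^\eta\bigr)$. This converges in probability to $\int_0^t G^\star(\bar\theta(s))\,ds$, using the fluid limit, the continuity of $G^\star$, and Assumption~\ref{ass:diffusion_noise}. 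A conditional Lindeberg condition follows from the fourth-moment bound:
\[
\sum_j\E\bigl[\eta b\|\xi\|^2\mathbf 1\{\sqrt{\eta b}\|\xi\|>\epsilon\}\mid\cF_j\bigr]\le (T/\eta)\,\eta^2b^2C_K/\epsilon^2\to 0 .
\]
The martingale FCLT (e.g.\ Ethier--Kurtz, Thm.~7.1.4) then gives $N^\eta\Rightarrow\int_0^\cdot C^\star(\bar\theta(s))\,dW_s$.

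\emph{Conclusion and main obstacle.} Define the solution map $\Phi:(u_0,n)\mapsto u$ of the linear Volterra equation $u(t)=u_0-\int_0^t\nabla h(\bar\theta(s))\,u(s)\,ds+n(t)$. This map is continuous on $\R^d\times D([0,T];\R^d)$ in the Skorokhod topology at continuous limits $n$, because $\nabla h(\bar\theta(\cdot))$ is bounded and Gronwall applies. The recursion for $U^\eta$ is a Riemann-sum version of this equation plus a vanishing error, so $U^\eta=\Phi(U_0^\eta,N^\eta)+o_{\mathbb P}(1)$ uniformly on $[0,T]$. Joint convergence of $(U_0^\eta,N^\eta)$ follows because $U_0^\eta$ is $\cF_0$-measurable and the martingale FCLT holds conditionally; equivalently, one can take $U_0$ independent of $W$. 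The continuous mapping theorem then gives $U^\eta\Rightarrow U$. Uniqueness in law follows from linearity of \eqref{eq:fluctuation_sde}. The expansion \eqref{eq:iterate_expansion} is a restatement of this convergence via Skorokhod representation. I expect the delicate step to be the remainder control, because it needs the fluid error at the sharp rate $O_{\mathbb P}(\sqrt{\eta/b})$ rather than merely $o(1)$. I would obtain that rate first by a Gronwall bound on the stopped process before linearizing, which closes the argument without circularity.
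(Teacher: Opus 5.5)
Your proposal is correct and follows essentially the same route as the paper: Doob plus discrete Gr\"onwall for the fluid limit, then the linearized $\sqrt{b/\eta}$-recursion with remainders controlled by $\sup_k\|U_k^\eta\|=O_{\mathbb P}(1)$, then the Ethier--Kurtz martingale FCLT via quadratic-variation convergence and the fourth-moment Lindeberg bound. The differences are refinements rather than a new route: you localize by stopping instead of working on the event of Assumption~\ref{ass:localization}, and you identify the limit through continuity of the linear solution map $\Phi$ and the continuous mapping theorem, instead of the paper's Aldous-tightness, subsequential-limit and uniqueness argument. That step is cleaner here because the noise is additive, though it does require the joint convergence of $(U_0^\eta,N^\eta)$, which you rightly flag.
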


\begin{proof}{Proof.}
We follow the standard stochastic-approximation FCLT architecture
\citep[cf.][Ch.~8]{kushner2003stochastic}, tailored to the fresh-sampling
covariance \eqref{eq:noise_cov}.  Write $t_k:=k\eta$ and
$\bar\theta_k:=\bar\theta(t_k)$; on the localization event of
Assumption~\ref{ass:localization}, which has probability tending to~$1$,
the iterates stay in $K$ for all $k\le T/\eta$, and we work under this
event without further comment.

\textbf{Step 1 (fluid limit).}
Define the raw error $e_k^\eta:=\theta_k^\eta-\bar\theta_k$ and the ODE
local truncation error
$\rho_{k+1}^\eta:=\bar\theta_{k+1}-\bar\theta_k+\eta\,h(\bar\theta_k)$.
Local Lipschitzness of $h$ on $K$ gives
$\sup_{k\le T/\eta}\|\rho_{k+1}^\eta\|\le C_T\eta^2$ for some $C_T<\infty$.
Subtracting the Euler step for $\bar\theta$ from \eqref{eq:sgd_decomp},
\begin{equation}
e_{k+1}^\eta\;=\;e_k^\eta
\;-\;\eta\bigl(h(\theta_k^\eta)-h(\bar\theta_k)\bigr)
\;-\;\eta\,\xi_{k+1}^\eta\;-\;\rho_{k+1}^\eta.
\label{eq:e_recursion}
\end{equation}
Let $N_n^\eta:=\eta\sum_{j=0}^{n-1}\xi_{j+1}^\eta$; this is an
$\cF_k^\eta$-martingale.  Set
$\bar G_K:=\sup_{\theta\in K}\Tr G^\star(\theta)<\infty$
(finite by continuity of $G^\star$ on the compact set $K$).  By Doob's
maximal inequality and
Assumptions~\ref{ass:diffusion_noise}--\ref{ass:fourth_moment},
\[
\E\!\Bigl[\sup_{n\le T/\eta}\|N_n^\eta\|^2\Bigr]
\;\le\;4\,\eta^2\sum_{j<T/\eta}\E\|\xi_{j+1}^\eta\|^2
\;\le\;4\,\eta^2\cdot\frac{T}{\eta}\cdot\frac{\bar G_K+o(1)}{b}
\;=\;O(\eta/b).
\]
Hence $\sup_{n\le T/\eta}\|N_n^\eta\|\xrightarrow{\mathbb P}0$.  Using local
Lipschitzness of $h$ on $K$ (constant $L_K$) in \eqref{eq:e_recursion},
\[
\|e_n^\eta\|
\;\le\;\|e_0^\eta\|
\;+\;L_K\eta\sum_{j=0}^{n-1}\|e_j^\eta\|
\;+\;\sup_{m\le n}\|N_m^\eta\|
\;+\;C_T\,T\,\eta,
\]
and discrete Gr\"onwall gives
\[
\sup_{n\le T/\eta}\|e_n^\eta\|
\;\le\;e^{L_K T}\Bigl(\|e_0^\eta\|+\sup_{m\le T/\eta}\|N_m^\eta\|+C_T T\eta\Bigr)
\;\xrightarrow{\,\mathbb P\,}\;0,
\]
since $\|e_0^\eta\|\to 0$ by Assumption~\ref{ass:initial_fluct}.  This
establishes part~(a).

\textbf{Step 2 (rescaled recursion).}
Set $U_k^\eta:=\sqrt{b/\eta}\,e_k^\eta$.  Local Lipschitzness of $\nabla h$
on $K$ gives the Taylor expansion
$h(\theta_k^\eta)-h(\bar\theta_k)=A_k^\eta e_k^\eta+q_k^\eta$ with
$A_k^\eta:=\nabla h(\bar\theta_k)$ and
$\|q_k^\eta\|\le L'_K\|e_k^\eta\|^2$.
Multiplying \eqref{eq:e_recursion} by $\sqrt{b/\eta}$ and using
$\eta\sqrt{b/\eta}=\sqrt{\eta b}$,
\begin{equation}
U_{k+1}^\eta
\;=\;U_k^\eta\;-\;\eta\,A_k^\eta\,U_k^\eta
\;+\;\Delta M_{k+1}^\eta
\;+\;\varepsilon_{k+1}^\eta,
\label{eq:U_recursion}
\end{equation}
with
\begin{equation}
\Delta M_{k+1}^\eta\;:=\;-\sqrt{\eta b}\,\xi_{k+1}^\eta,
\qquad
\varepsilon_{k+1}^\eta\;:=\;-\sqrt{\eta b}\,q_k^\eta
\;-\;\sqrt{\tfrac{b}{\eta}}\,\rho_{k+1}^\eta.
\label{eq:DeltaM_eps}
\end{equation}

\textbf{Step 3 (tightness scale and remainders).}
From Step~1, $\sup_n\|e_n^\eta\|=e^{L_KT}(\|e_0^\eta\|+\sup\|N^\eta\|+C_TT\eta)$.
Multiplying by $\sqrt{b/\eta}$,
\[
\sup_n\|U_n^\eta\|\;\le\;e^{L_KT}\Bigl(\|U_0^\eta\|
+\sqrt{\tfrac{b}{\eta}}\sup_m\|N_m^\eta\|+C_TT\sqrt{b\eta}\Bigr).
\]
The first term is tight by Assumption~\ref{ass:initial_fluct};
$\E[(b/\eta)\sup_m\|N_m^\eta\|^2]\le 4 b/\eta\cdot O(\eta/b)=O(1)$, so the
second is $O_\mathbb P(1)$; the third tends to $0$.  Hence
\begin{equation}
\sup_{n\le T/\eta}\|U_n^\eta\|=O_\mathbb P(1).
\label{eq:U_tight}
\end{equation}
Using \eqref{eq:U_tight} in the Taylor-remainder bound,
\[
\Bigl\|\sum_{k<n}\sqrt{\eta b}\,q_k^\eta\Bigr\|
\;\le\;\sqrt{\eta b}\cdot L'_K\sum_{k<n}\|e_k^\eta\|^2
\;\le\;L'_K\,T\,\sqrt{\tfrac{\eta}{b}}\,\sup_k\|U_k^\eta\|^2
\xrightarrow{\mathbb P}0,
\]
and, since $\|\rho_{k+1}^\eta\|\le C_T\eta^2$,
\[
\Bigl\|\sum_{k<n}\sqrt{\tfrac{b}{\eta}}\,\rho_{k+1}^\eta\Bigr\|
\;\le\;\frac{T}{\eta}\,C_T\eta^2\sqrt{\tfrac{b}{\eta}}
\;=\;C_T T\sqrt{b\eta}\to 0.
\]
Hence
$\sup_{t\le T}\bigl\|\sum_{k<t/\eta}\varepsilon_{k+1}^\eta\bigr\|
\xrightarrow{\mathbb P}0$.

\textbf{Step 4 (martingale FCLT).}
Let $M^\eta(t):=\sum_{k<t/\eta}\Delta M_{k+1}^\eta=-\sqrt{\eta b}\sum_{k<t/\eta}\xi_{k+1}^\eta$.
By \eqref{eq:noise_cov} its predictable quadratic variation is
\[
\langle M^\eta\rangle(t)
\;=\;\eta b\sum_{k<t/\eta}\!\E\!\bigl[\xi_{k+1}^\eta(\xi_{k+1}^\eta)^\top\mid\cF_k^\eta\bigr]
\;=\;\eta\sum_{k<t/\eta}G^\star(\theta_k^\eta)\;+\;\eta b\sum_{k<t/\eta}r_{k+1}^\eta.
\]
The remainder is bounded uniformly in $t\le T$ by
$T b\cdot\sup_k\|r_{k+1}^\eta\|\xrightarrow{\mathbb P}0$ under
Assumption~\ref{ass:diffusion_noise}.
For the leading term, continuity of $G^\star$ on $K$ and the fluid limit
$\sup_k\|\theta_k^\eta-\bar\theta_k\|\to 0$ (Step~1) give
$\eta\sum_{k<t/\eta}\|G^\star(\theta_k^\eta)-G^\star(\bar\theta_k)\|
\to 0$ in probability uniformly in $t$; and
$\eta\sum_{k<t/\eta}G^\star(\bar\theta_k)\to\int_0^t G^\star(\bar\theta(s))\,ds$
is a Riemann-sum approximation with continuous integrand.  Hence
\[
\langle M^\eta\rangle(t)\;\Rightarrow\;\int_0^t G^\star(\bar\theta(s))\,ds
\qquad\text{uniformly on }[0,T].
\]
For the conditional Lindeberg condition, using $\mathbf 1_{\{x>\varepsilon\}}
\le x^2/\varepsilon^2$,
\[
\sum_{k<T/\eta}\!\E\!\bigl[\|\Delta M_{k+1}^\eta\|^2\mathbf 1_{\{\|\Delta M_{k+1}^\eta\|>\varepsilon\}}\mid\cF_k^\eta\bigr]
\;\le\;\frac{(\eta b)^2}{\varepsilon^2}\sum_{k<T/\eta}\E[\|\xi_{k+1}^\eta\|^4\mid\cF_k^\eta]
\;\le\;\frac{\eta\,b^2\,T\,C_K}{\varepsilon^2}\to 0.
\]
The martingale FCLT
\citep[Theorem~7.1.4]{ethier1986markov} then gives $M^\eta\Rightarrow M$ in
$D([0,T];\R^d)$, where $M$ is a continuous Gaussian martingale with
quadratic variation $\int_0^\cdot G^\star(\bar\theta(s))\,ds$.  L\'evy's
characterization yields a Brownian motion $W$ with
$M_t=\int_0^t C^\star(\bar\theta(s))\,dW_s$, and
$C^\star(\bar\theta(\cdot))$ can be taken as the (continuous) symmetric PSD
square root of $G^\star(\bar\theta(\cdot))$.

\textbf{Step 5 (identify the limiting SDE).}
Summing \eqref{eq:U_recursion} and using Step~3,
\[
U^\eta(t)\;=\;U_0^\eta\;-\;\int_0^t\!\nabla h\!\bigl(\bar\theta(\lfloor s/\eta\rfloor\eta)\bigr)\,
U^\eta(s)\,ds\;+\;M^\eta(t)\;+\;o_\mathbb P(1),
\]
uniformly on $[0,T]$.  Continuity of $s\mapsto\nabla h(\bar\theta(s))$ and
tightness $\sup_t\|U^\eta(t)\|=O_\mathbb P(1)$ give
$\int_0^t\nabla h(\bar\theta(\lfloor s/\eta\rfloor\eta))U^\eta(s)ds
-\int_0^t\nabla h(\bar\theta(s))U^\eta(s)ds\to 0$ uniformly in probability.
Tightness of $\{U^\eta\}$ in $D([0,T];\R^d)$ follows from \eqref{eq:U_tight}
and the controlled drift together with Aldous' criterion applied to the
martingale term (whose FCLT is in hand).  Any subsequential weak limit
$U$ therefore satisfies the stochastic integral equation
\[
U_t\;=\;U_0\;-\;\int_0^t\nabla h(\bar\theta(s))\,U_s\,ds
\;+\;\int_0^t C^\star(\bar\theta(s))\,dW_s,
\]
which is \eqref{eq:fluctuation_sde} in integrated form.  The linear SDE
\eqref{eq:fluctuation_sde} with continuous coefficients admits a unique
weak solution, so the entire sequence converges:
$U^\eta\Rightarrow U$.

\textbf{Step 6 (expansion).}
By construction $\hat\theta^\eta(t)-\bar\theta(t)=\sqrt{\eta/b}\,U^\eta(t)$;
combining (a) and (b) gives \eqref{eq:iterate_expansion}.  The asymptotic
equivalence of step and piecewise-linear interpolations under
Assumption~\ref{ass:fourth_moment} is standard and yields the same limit
for either choice.
\Halmos
\end{proof}

\begin{remark}[Why the raw process is deterministic in the limit]
\label{rem:raw_vanishes}
At fixed $b$, the martingale variation of the raw interpolation is
$\sum_{k<t/\eta}\eta^2\,b^{-1}G^\star(\theta_k^\eta)\sim(\eta/b)\int_0^t
G^\star(\bar\theta(s))\,ds$, which vanishes as $\eta\downarrow 0$.  Hence
the raw stochastic fluctuation is invisible at the original scale, and a
nondegenerate Gaussian limit is recovered only after centering and
rescaling by $\sqrt{b/\eta}$.  This is the standard stochastic-approximation
functional CLT regime; the diffusion lives on the fluctuation scale, not
in the raw iterate path.
\end{remark}

\subsection{The OU regime as the equilibrium specialization}
\label{subsec:ou_regime}

Suppose $\theta^\star$ is a stationary point: $h(\theta^\star)=0$, and
$\theta_0=\theta^\star$ so that the ODE flow is trivial,
$\bar\theta(t)\equiv\theta^\star$.  Then the drift coefficient in
\eqref{eq:fluctuation_sde} is the constant matrix
$H^\star:=\nabla h(\theta^\star)=\nabla^2 L(\theta^\star)$, and the diffusion
coefficient is the constant matrix $C^\star(\theta^\star)$.

\begin{assumption}[Nondegeneracy at $\theta^\star$]
\label{ass:ou_nondeg}
$H^\star\succ 0$ and $G^\star(\theta^\star)\succ 0$.
\end{assumption}

\begin{theorem}[OU fluctuation limit at a stationary point]
\label{thm:ou}
Under Assumption~\ref{ass:ou_nondeg} and the conditions of
Theorem~\ref{thm:diffusion_limit} with $\theta_0=\theta^\star$, the
fluctuation limit $U$ of Theorem~\ref{thm:diffusion_limit} solves the
Ornstein--Uhlenbeck SDE
\begin{equation}
dU_t\;=\;-H^\star\,U_t\,dt\;+\;C^\star(\theta^\star)\,dW_s,
\qquad
C^\star(\theta^\star){C^\star(\theta^\star)}^\top=G^\star(\theta^\star).
\label{eq:ou_sde}
\end{equation}
Equivalently, on the original iterate scale, the centered iterate
$\hat\theta^\eta(t)-\theta^\star$ admits the asymptotic expansion
\[
\hat\theta^\eta(t)-\theta^\star
\;=\;\sqrt{\tfrac{\eta}{b}}\,U_t\;+\;o_\mathbb P\!\bigl(\sqrt{\tfrac{\eta}{b}}\bigr).
\]
\end{theorem}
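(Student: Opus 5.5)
This result is a direct specialization of Theorem~\ref{thm:diffusion_limit}, so the plan is to check that its hypotheses hold with $\theta_0=\theta^\star$ and then evaluate the coefficients of the limiting linear SDE \eqref{eq:fluctuation_sde} along the trivial flow.

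\textbf{Step 1: the flow is constant.} Since $h(\theta^\star)=0$, the constant path $\bar\theta(t)\equiv\theta^\star$ solves $\dot{\bar\theta}=-h(\bar\theta)$ with $\bar\theta(0)=\theta^\star$. Under Assumption~\ref{ass:diffusion_reg}(i), $h$ is locally Lipschitz, so this is the unique solution. It stays in any compact neighbourhood $K\Subset\cU$ of $\theta^\star$, which gives Assumption~\ref{ass:diffusion_reg}(iii). The remaining hypotheses of Theorem~\ref{thm:diffusion_limit} are assumed in the statement. Note that $\theta_0=\theta^\star$ concerns only the ODE initial condition; Assumption~\ref{ass:initial_fluct} still allows a random $U_0$.

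\textbf{Step 2: identify the coefficients.} Substituting $\bar\theta\equiv\theta^\star$ into \eqref{eq:fluctuation_sde}, the drift matrix is $\nabla h(\theta^\star)=\nabla^2L(\theta^\star)=H^\star$. The diffusion coefficient is the constant matrix $C^\star(\theta^\star)$, taken as the symmetric PSD square root of $G^\star(\theta^\star)$. This gives \eqref{eq:ou_sde}; the $dW_s$ there should read $dW_t$. Uniqueness of weak solutions for linear SDEs with constant coefficients transfers from Theorem~\ref{thm:diffusion_limit}. Assumption~\ref{ass:ou_nondeg} is not needed for convergence itself. Its role is to make the limit a genuine, nondegenerate, mean-reverting OU process: $H^\star\succ0$ makes $e^{-H^\star t}$ contract, and $G^\star(\theta^\star)\succ0$ makes $C^\star$ invertible. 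I would note this because it underpins the stationary Lyapunov covariance used later.

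\textbf{Step 3: the expansion.} Apply \eqref{eq:iterate_expansion} with $\bar\theta(t)=\theta^\star$; it holds uniformly on $[0,T]$.

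There is no real obstacle. The only delicate point is being honest about what the expansion means: it is a statement in distribution, since $U$ lives on a possibly different probability space. I would phrase it as $\sqrt{b/\eta}\,(\hat\theta^\eta-\theta^\star)\Rightarrow U$ in $D([0,T];\R^d)$. If an almost-sure $o_{\mathbb P}$ form is wanted on a common space, I would invoke the Skorokhod representation.
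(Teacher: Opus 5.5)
Your proposal is correct and is essentially the paper's own proof: the paper likewise specializes \eqref{eq:fluctuation_sde} and \eqref{eq:iterate_expansion} of Theorem~\ref{thm:diffusion_limit} to the constant flow $\bar\theta\equiv\theta^\star$, reading off the drift $-H^\star U_t$ and the constant diffusion $C^\star(\theta^\star)\,dW_t$. Your extra remarks are sound refinements that the paper leaves implicit: the constant path is the unique ODE solution, $dW_s$ should read $dW_t$, Assumption~\ref{ass:ou_nondeg} is needed only for mean reversion and nondegeneracy rather than for convergence, and the expansion is a distributional statement that becomes pathwise via Skorokhod representation.
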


\begin{proof}{Proof.}
Specialize \eqref{eq:fluctuation_sde} to $\bar\theta(\cdot)\equiv\theta^\star$:
the drift becomes $-H^\star U_t$ and the diffusion becomes $C^\star(\theta^\star)dW_t$.
The original-scale expansion is \eqref{eq:iterate_expansion} with
$\bar\theta\equiv\theta^\star$. \Halmos
\end{proof}

\begin{corollary}[Fluctuation-scale Lyapunov equation and discrete-scale benchmark]
\label{cor:lyapunov}
Under Assumption~\ref{ass:ou_nondeg}, the OU process \eqref{eq:ou_sde} is
exponentially ergodic with stationary law $\cN(0,\Sigma_U)$, where
$\Sigma_U$ is the unique symmetric positive definite solution of the
\emph{fluctuation-scale Lyapunov equation}
\begin{equation}
H^\star\,\Sigma_U\;+\;\Sigma_U\,(H^\star)^\top
\;=\;G^\star(\theta^\star).
\label{eq:lyap_fluct}
\end{equation}
On the original iterate scale, Proposition~\ref{prop:discrete_lyap}
provides a discrete Lyapunov covariance
$\Sigma_\eta^{\mathrm{lin}}$ for the \emph{linearized} recursion, which at
leading order in $\eta$ equals $(\eta/b)\Sigma_U$ and satisfies
\begin{equation}
H^\star\,\Sigma_\eta^{\mathrm{lin}}+\Sigma_\eta^{\mathrm{lin}}\,(H^\star)^\top
\;=\;\frac{\eta}{b}\,G^\star(\theta^\star)+O(\eta)\cdot\Sigma_\eta^{\mathrm{lin}}.
\label{eq:lyapunov}
\end{equation}
This is the linearized-recursion benchmark on the original iterate scale;
we do not claim an invariant law for the nonlinear SGD iterates here.
Because $H^\star$ is symmetric positive definite, cyclicity of trace gives
the unconditional trace identity
\begin{equation}
2\,\Tr(\Sigma_U)\;=\;\Tr\!\bigl((H^\star)^{-1}G^\star(\theta^\star)\bigr).
\label{eq:trace_identity}
\end{equation}
\end{corollary}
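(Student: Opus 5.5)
The plan has three parts, matching the three claims of Corollary~\ref{cor:lyapunov}: the stationary law of the OU process, the original-scale benchmark, and the trace identity.

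\textbf{OU stationary law.} Under Assumption~\ref{ass:ou_nondeg}, $-H^\star$ is Hurwitz, since $H^\star$ is symmetric with $H^\star\succ 0$. So the solution of \eqref{eq:ou_sde} is the explicit Gaussian process
\[
U_t=e^{-H^\star t}U_0+\int_0^t e^{-H^\star(t-s)}C^\star\,dW_s .
\]
Its covariance is
\[
\Sigma(t)=e^{-H^\star t}\Cov(U_0)e^{-H^\star t}+\int_0^t e^{-H^\star s}G^\star e^{-H^\star s}\,ds .
\]
Because $\|e^{-H^\star s}\|_{\op}\le e^{-\lambda_{\min}(H^\star)s}$, the integral converges to
\[
\Sigma_U:=\int_0^\infty e^{-H^\star s}G^\star e^{-H^\star s}\,ds,
\]
and the dependence on the initial law decays at rate $e^{-\lambda_{\min}(H^\star)t}$. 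For exponential ergodicity, the cleanest route is a synchronous coupling: two solutions driven by the same $W$ differ by $e^{-H^\star t}(U_0-U_0')$, which gives exponential contraction in Wasserstein-2. Alternatively, the Gaussian transition kernels converge in total variation because their means and covariances converge exponentially. Differentiating under the integral,
\[
H^\star\Sigma_U+\Sigma_U H^\star=-\int_0^\infty \tfrac{d}{ds}\bigl(e^{-H^\star s}G^\star e^{-H^\star s}\bigr)\,ds=G^\star ,
\]
which is \eqref{eq:lyap_fluct}. Positive definiteness follows from $G^\star\succ 0$. Uniqueness holds because the operator $X\mapsto H^\star X+XH^\star$ has eigenvalues $h_i+h_j>0$ and is therefore invertible.

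\textbf{Original-scale benchmark.} Here I would invoke Proposition~\ref{prop:discrete_lyap}. The linearized recursion $\delta_{k+1}=(I-\eta H^\star)\delta_k-\eta\xi_{k+1}$ has noise covariance $G^\star/b$, so its stationary covariance solves
\[
\Sigma=(I-\eta H^\star)\Sigma(I-\eta H^\star)+\tfrac{\eta^2}{b}G^\star .
\]
Expanding gives
\[
\eta(H^\star\Sigma+\Sigma H^\star)=\tfrac{\eta^2}{b}G^\star+\eta^2H^\star\Sigma H^\star .
\]
Dividing by $\eta$ yields \eqref{eq:lyapunov}, with the $O(\eta)\cdot\Sigma^{\mathrm{lin}}_\eta$ term equal to $\eta H^\star\Sigma^{\mathrm{lin}}_\eta H^\star$. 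Applying the inverse of the Lyapunov operator shows $\Sigma^{\mathrm{lin}}_\eta=(\eta/b)\Sigma_U+O(\eta^2/b)$.

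\textbf{Trace identity.} Left-multiply \eqref{eq:lyap_fluct} by $(H^\star)^{-1}$ to get
\[
\Sigma_U+(H^\star)^{-1}\Sigma_U H^\star=(H^\star)^{-1}G^\star .
\]
Taking traces and using cyclicity on the second term gives $\Tr\bigl((H^\star)^{-1}\Sigma_U H^\star\bigr)=\Tr(\Sigma_U)$, hence $2\Tr(\Sigma_U)=\Tr\bigl((H^\star)^{-1}G^\star\bigr)$.

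\textbf{Main obstacle.} The only step needing some care is stating exponential ergodicity precisely, in particular choosing the metric. I expect the Wasserstein coupling argument to be the quickest, and the rest is routine linear algebra.
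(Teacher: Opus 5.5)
Your proposal is correct and follows essentially the same route as the paper. You build the integral representation $\Sigma_U=\int_0^\infty e^{-H^\star s}G^\star e^{-H^\star s}\,ds$ (the paper instead gets the equation from It\^o's formula on $U_tU_t^\top$, and uses this representation via Lemma~\ref{lem:lyap_rep_app} only for uniqueness), expand the discrete Lyapunov equation of Proposition~\ref{prop:discrete_lyap} and divide by $\eta$, and left-multiply by $(H^\star)^{-1}$ and use cyclicity of trace for the trace identity, all as the paper does. Your extra step of inverting $X\mapsto H^\star X+XH^\star$ to obtain $\Sigma_\eta^{\mathrm{lin}}=(\eta/b)\Sigma_U+O(\eta^2/b)$ makes the ``leading order'' claim sharper than the paper's, but it needs the a priori bound $\Sigma_\eta^{\mathrm{lin}}=O(\eta/b)$: either absorb the $\eta H^\star\Sigma_\eta^{\mathrm{lin}}H^\star$ term for small $\eta$, or read it off the eigenbasis formula $(\Sigma_\eta^{\mathrm{lin}})_{ij}=(\eta/b)G^\star_{ij}/(h_i+h_j-\eta h_ih_j)$, taken in the eigenbasis of $H^\star$.
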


\begin{proof}{Proof.}
Write $B:=(G^\star(\theta^\star))^{1/2}$, so $BB^\top=G^\star(\theta^\star)$.
Since $H^\star\succ 0$, $-H^\star$ is Hurwitz and the OU process is
exponentially ergodic with stationary law $\cN(0,\Sigma_U)$.  Applying
It\^o's formula to $U_tU_t^\top$ and taking stationary expectations yields
$-(H^\star\Sigma_U+\Sigma_U H^{\star\top})+BB^\top=0$, which is
\eqref{eq:lyap_fluct}; the integral representation
$\Sigma_U=\int_0^\infty e^{-H^\star s}G^\star(\theta^\star)e^{-H^{\star\top}s}ds$
converges and gives uniqueness (Lemma~\ref{lem:lyap_rep_app}).
Equation~\eqref{eq:lyapunov} follows from
Proposition~\ref{prop:discrete_lyap} after dividing out leading powers of
$\eta$ (Remark~\ref{rem:discrete_to_continuous}).  The trace identity
follows by left-multiplying \eqref{eq:lyap_fluct} by $(H^\star)^{-1}$ and
using $\Tr(H^{-1}\Sigma H)=\Tr(\Sigma)$. \Halmos
\end{proof}

\begin{remark}[Likelihood specialization]
\label{rem:fisher_specialization}
When $\ell(\theta;z)=-\log p_\theta(z)$ and the model is correctly
specified, $G^\star(\theta^\star)=F^\star(\theta^\star)$.  The
fluctuation-scale Lyapunov equation \eqref{eq:lyap_fluct} then reads
$H^\star\Sigma_U+\Sigma_U H^{\star\top}=F^\star(\theta^\star)$; the
linearized discrete-recursion benchmark of
Proposition~\ref{prop:discrete_lyap} satisfies
$\Sigma_\eta^{\mathrm{lin}}=(\eta/b)\Sigma_U+o(\eta/b)$ at leading
order in $\eta$ (Corollary~\ref{cor:lyapunov}).  Fisher geometry is
therefore explicit both in the transient fluctuations and in the
linearized stationary Lyapunov balance; we do not assert an invariant
law for the nonlinear SGD iterates here.
\end{remark}

\subsection{Discrete-time stationary covariance: an exact bridge}
\label{subsec:discrete_bridge}

The same $G^\star$ that governs the fluctuation-scale Lyapunov equation
\eqref{eq:lyap_fluct} also governs the exact linearized discrete-time
stationary covariance.  The continuous-time identity \eqref{eq:lyapunov} is
the small-$\eta$ reduction of the discrete Lyapunov equation below.

\begin{proposition}[Discrete-time linearized stationary covariance]
\label{prop:discrete_lyap}
Consider the linearized discrete-time recursion around $\theta^\star$ with
constant stepsize $\eta$ and batch size $b$:
\begin{equation}
\Delta_{t+1}
= (I - \eta H^\star)\,\Delta_t + \eta\,\zeta_{t+1},
\qquad
\E[\zeta_{t+1}\zeta_{t+1}^\top \mid \cF_t]
= \frac{1}{b}\,G^\star(\theta^\star) + r_t,
\label{eq:discrete_recursion}
\end{equation}
where $\|r_t\|_{\op} = o(b^{-1})$ as in \eqref{eq:noise_cov}.  If
$\rho(I - \eta H^\star) < 1$ (equivalently,
$\eta < 2/\lambda_{\max}(H^\star)$), the linearized covariance recursion
is stable and admits a unique stationary covariance
$\Sigma_\eta := \lim_{t\to\infty}\Cov(\Delta_t)$ satisfying the
\emph{discrete Lyapunov equation}
\begin{equation}
\Sigma_\eta
= (I - \eta H^\star)\,\Sigma_\eta\,(I - \eta H^\star)^\top
  + \frac{\eta^2}{b}\,G^\star(\theta^\star),
\label{eq:discrete_lyapunov}
\end{equation}
up to the same local remainder terms already used in the fluctuation analysis.
\end{proposition}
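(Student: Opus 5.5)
We will prove Proposition~\ref{prop:discrete_lyap} by propagating second moments through the linear recursion \eqref{eq:discrete_recursion} and then solving the resulting affine matrix recursion as a fixed-point problem. Write $A:=I-\eta H^\star$ and $Q:=\frac{1}{b}G^\star(\theta^\star)$. Because $\zeta_{t+1}$ is a martingale difference, $\E[\zeta_{t+1}\mid\cF_t]=0$, so $\E\Delta_t=A^t\E\Delta_0\to 0$. Also $\E[\Delta_t\zeta_{t+1}^\top]=\E[\Delta_t\,\E[\zeta_{t+1}\mid\cF_t]^\top]=0$. Expanding $\Delta_{t+1}\Delta_{t+1}^\top$ and taking expectations then gives
\[
P_{t+1}=A P_t A^\top+\eta^2 Q+\eta^2\,\E r_t,
\qquad P_t:=\E[\Delta_t\Delta_t^\top].
\]
Since $\Cov(\Delta_t)=P_t-\E\Delta_t\,\E\Delta_t^\top$ and the mean term decays geometrically, it suffices to study $P_t$.

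Next we treat the linear operator $\mathcal L(X):=AXA^\top$ on symmetric matrices. Because $H^\star$ is symmetric, $A$ is symmetric. Its spectral radius satisfies $\rho(A)=\max_j|1-\eta h_j|<1$ exactly when $0<\eta<2/\lambda_{\max}(H^\star)$, using $H^\star\succ0$ (Assumption~\ref{ass:ou_nondeg}); this gives the stated equivalence. The spectrum of $\mathcal L$ consists of the products of pairs of eigenvalues of $A$, so $\rho(\mathcal L)=\rho(A)^2<1$. Hence $I-\mathcal L$ is invertible, and
\[
\Sigma_\eta:=\sum_{k\ge0}A^k(\eta^2Q)A^{k}
\]
converges. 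It is the unique solution of \eqref{eq:discrete_lyapunov}: if two solutions existed, their difference $D$ would satisfy $D=\mathcal L(D)$ and hence $D=\mathcal L^n(D)\to0$. Unrolling the recursion gives
\[
P_t=\mathcal L^t(P_0)+\sum_{k<t}\mathcal L^k(\eta^2 Q+\eta^2\E r_{t-1-k}).
\]
Here $\mathcal L^t(P_0)\to0$ geometrically, and the $Q$-part converges to $\Sigma_\eta$, which yields the limit.

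The main obstacle is the remainder $r_t$. The statement allows it only ``up to local remainder terms,'' so we will not claim that it vanishes. Instead we bound its contribution by $\eta^2\sum_k\|A^k\|^2\sup_t\|r_t\|_{\op}\le \eta^2\sup_t\|r_t\|_{\op}/(1-\rho(A)^2)$. Since $1-\rho(A)^2\asymp \eta\lambda_{\min}(H^\star)$ for small $\eta$, this contribution is $O(\eta)\cdot o(b^{-1})$, which is $o(\eta/b)$. We therefore get $\limsup_t\|\Cov(\Delta_t)-\Sigma_\eta\|_{\op}=o(\eta/b)$, which matches the fluctuation-scale precision used in Theorem~\ref{thm:diffusion_limit}. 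If $r_t$ has a limit, the same argument gives exact convergence to the solution with $Q$ replaced by $Q+\lim r_t$.

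Finally, as a consistency check for Corollary~\ref{cor:lyapunov}, we expand \eqref{eq:discrete_lyapunov} and divide by $\eta$:
\[
H^\star\Sigma_\eta+\Sigma_\eta H^\star=\frac{\eta}{b}G^\star+\eta H^\star\Sigma_\eta H^\star.
\]
This shows $\Sigma_\eta=(\eta/b)\Sigma_U+O(\eta^2/b)$ by comparison with \eqref{eq:lyap_fluct}.
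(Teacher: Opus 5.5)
Your proposal is correct and follows essentially the same route as the paper. Both propagate second moments through the linear recursion using $\E[\zeta_{t+1}\mid\cF_t]=0$, then identify $\Sigma_\eta$ as the unique fixed point of $X\mapsto AXA^\top+\eta^2 Q$ under $\rho(A)<1$; the paper argues via a contraction on PSD matrices, and you argue via $\rho(\mathcal L)=\rho(A)^2<1$ and the Neumann series. Your explicit bound $\eta^2\sup_t\|r_t\|_{\op}/(1-\rho(A)^2)=o(\eta/b)$ on the remainder's contribution makes precise what the paper leaves as ``up to the same local remainder terms.''
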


\begin{proof}{Proof.}
From \eqref{eq:discrete_recursion},
$\Cov(\Delta_{t+1})=(I-\eta H^\star)\Cov(\Delta_t)(I-\eta H^\star)^\top
+\eta^2\E[\zeta_{t+1}\zeta_{t+1}^\top\mid\cF_t]$ using $\E[\zeta_{t+1}\mid\cF_t]=0$.
Under $\rho(I-\eta H^\star)<1$, the map
$\Sigma\mapsto(I-\eta H^\star)\Sigma(I-\eta H^\star)^\top+Q$ is a contraction
on positive semidefinite matrices, so the covariance iteration converges to
the unique fixed point \eqref{eq:discrete_lyapunov}.  The spectral radius
condition is equivalent to $0<\eta<2/\lambda_{\max}(H^\star)$, since
$H^\star\succ 0$.  The statement is about the second-moment recursion for
the linearized iterates; geometric ergodicity of the nonlinear SGD iterates
would require additional assumptions (drift + minorization) that we do not
impose. \Halmos
\end{proof}

\begin{remark}[Small-stepsize reduction and the continuous limit]
\label{rem:discrete_to_continuous}
Expanding \eqref{eq:discrete_lyapunov},
$H^\star\Sigma_\eta+\Sigma_\eta H^{\star\top}
=(\eta/b)G^\star+\eta\,H^\star\Sigma_\eta H^{\star\top}$.
When $\eta\lambda_{\max}(H^\star)$ is small, the $O(\eta)$ discretization
correction is negligible, and the discrete equation reduces to the
continuous-time, original-scale identity \eqref{eq:lyapunov}.  The
covariance identification $\E[\xi\xi^\top\mid\cF]\approx b^{-1}G^\star$ is
logically prior to both: the discrete Lyapunov equation is its exact
linearized consequence, and the continuous equation is the small-$\eta$
reduction that matches the fluctuation-scale limit of
Theorem~\ref{thm:diffusion_limit} after multiplying back by $\eta/b$.
\end{remark}


\section{Convergence Rates in Fisher Geometry}
\label{sec:rates}

This section establishes \emph{matching} upper and lower bounds for SGD when the
error is measured in a \emph{frozen} Fisher metric at a local optimum
$\theta^\star$. Once the mini-batch noise covariance is identified
(Section~\ref{sec:fisher-alignment}), the remaining work is a localized
stochastic-approximation argument in a fixed geometry, followed by an
information-theoretic lower bound expressed in the same metric.
\emph{Convention.} All rates in this section are stated in
the identified statistical risk $\mathbb{E}[\theta_T^\top F^\star\theta_T]$
unless explicitly noted otherwise; any Euclidean norm that appears is
immediately converted.

\subsection{Local Fisher Geometry and a Frozen-Metric Regime}

Work in a neighborhood of $\theta^\star$ where the Fisher information varies
slowly. For clean rate statements we freeze the metric
\[
F^\star := F(\theta^\star),\qquad \|v\|_{F^\star}^2 := v^\top F^\star v .
\]
(Allowing a slowly varying metric $F(\theta)$ is standard via localization; the
frozen-metric bounds below are the local core and are what we use later for
oracle complexity.)

\begin{lemma}[Sufficient condition for the frozen-metric regime]
\label{lem:frozen-metric}
Suppose $F(\theta)$ is Lipschitz on a neighborhood $\mathcal U$ of $\theta^\star$:
$\|F(\theta)-F(\theta^\star)\|_{\op}\le L_F\|\theta-\theta^\star\|_2$ for all
$\theta\in\mathcal U$. If the SGD iterates satisfy
$\|\theta_t-\theta^\star\|_2=o(1)$ almost surely, then for every $\epsilon>0$
there exists $t_0$ such that for $t\ge t_0$,
$(1-\epsilon)F^\star \preceq F(\theta_t)\preceq (1+\epsilon)F^\star$.
\end{lemma}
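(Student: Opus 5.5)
The plan is to reduce the Loewner-order sandwich to an operator-norm perturbation bound, and then use the almost-sure convergence of the iterates pathwise. I will use the positive definiteness of $F^\star$: in the correctly specified likelihood case, Assumption~\ref{ass:geom_nondegen_setup} gives $F^\star=G^\star(\theta^\star)\succeq\lambda_{\min}I$ with $\lambda_{\min}>0$. I write $f_{\min}:=\lambda_{\min}(F^\star)>0$.

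\emph{Step 1 (deterministic perturbation lemma).} Let $E$ be symmetric with $\|E\|_{\op}\le\delta$. Then $-\delta I\preceq E\preceq\delta I$. Since $F^\star\succeq f_{\min}I$, we have $I\preceq f_{\min}^{-1}F^\star$, and therefore $-(\delta/f_{\min})F^\star\preceq E\preceq(\delta/f_{\min})F^\star$. Taking $E=F(\theta)-F^\star$ and $\delta\le\epsilon f_{\min}$ gives $(1-\epsilon)F^\star\preceq F(\theta)\preceq(1+\epsilon)F^\star$.

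\emph{Step 2 (localization).} Fix $\epsilon>0$ and set $r_\epsilon:=\epsilon f_{\min}/L_F$. Choose $r_0>0$ so that the Euclidean ball $B(\theta^\star,r_0)\subseteq\mathcal U$, which is possible because $\mathcal U$ is a neighborhood of $\theta^\star$. For every $\theta$ with $\|\theta-\theta^\star\|_2<\min(r_0,r_\epsilon)$, the Lipschitz hypothesis gives $\|F(\theta)-F^\star\|_{\op}\le L_F\|\theta-\theta^\star\|_2<\epsilon f_{\min}$. Step~1 then yields the sandwich at $\theta$.

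\emph{Step 3 (pathwise conclusion).} Fix $\omega$ in the probability-one event on which $\|\theta_t-\theta^\star\|_2\to0$. By the definition of convergence there is $t_0=t_0(\omega,\epsilon)$ such that $\|\theta_t(\omega)-\theta^\star\|_2<\min(r_0,r_\epsilon)$ for all $t\ge t_0$. Applying Step~2 at $\theta=\theta_t(\omega)$ gives the claim. I will state explicitly that $t_0$ is random, and that the conclusion holds almost surely.

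The main obstacle is not analytic but concerns the hypotheses. The argument needs $f_{\min}>0$. If $F^\star$ is only positive semidefinite, as can happen with a redundant parametrization, then the lower inequality can fail in $\ker F^\star$ whenever $F(\theta_t)$ is nonzero there. In that case I would restrict everything to the identifiable tangent space. Concretely, I would replace $F(\theta)$ by the projected $F^\star(\theta)=\Pi F(\theta)\Pi^\top$ with a frozen projector $\Pi=\Pi_{\theta^\star}$, and replace $f_{\min}$ by the smallest nonzero eigenvalue. Steps~1--3 then go through verbatim on $T_{\theta^\star}$, using $\|\Pi(F(\theta)-F(\theta^\star))\Pi^\top\|_{\op}\le L_F\|\theta-\theta^\star\|_2$.
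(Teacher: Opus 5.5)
Your proof is correct and complete. The paper states this lemma without proof, so there is no route to diverge from. Your three steps are exactly the argument the statement presupposes: the relative perturbation bound ($-\delta I\preceq E\preceq\delta I$ together with $I\preceq F^\star/\lambda_{\min}(F^\star)$), localization inside $\mathcal U$, and a pathwise choice of $t_0$.

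Two points you make explicit are genuinely needed and are left implicit in the statement.
\begin{itemize}
\item \emph{Positivity of $F^\star$.} If $v\in\ker F^\star$, the sandwich would force $F(\theta_t)v=0$ for all large $t$, and a Lipschitz $F$ need not satisfy this. In the paper, $F^\star\succ 0$ comes from Assumption~\ref{ass:geom_nondegen_setup}; because that assumption is stated against $I$ for the projected matrix, it in fact forces $\Pi_{\theta^\star}=I$. It also comes from the use of $(F^\star)^{-1/2}$ in $\mu_F$.
\item \emph{Randomness of $t_0$.} You correctly write $t_0=t_0(\omega,\epsilon)$. A deterministic $t_0$ does not follow from almost-sure convergence alone.
\end{itemize}

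Your argument also shows that only continuity of $F$ at $\theta^\star$ is used. The Lipschitz constant merely makes the radius $r_\epsilon$ explicit.

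Your projected variant is sound once Step~1 is run with the frozen projector $\Pi$ in place of $I$, so that $-\delta\Pi\preceq E\preceq\delta\Pi$ and $\Pi\preceq F^\star/\lambda^{+}_{\min}$. This requires $\Pi$ to project onto $\mathrm{range}\,F(\theta^\star)$. Note that the variant then sandwiches $\Pi F(\theta_t)\Pi$, not $F(\theta_t)$ itself.
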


Let $h(\theta):=\nabla L(\theta)$ and decompose the mini-batch estimator as
\[
g_t(\theta_t) = h(\theta_t) + \xi_t,\qquad \E[\xi_t\mid \cF_t]=0,
\]
where $\cF_t$ is the natural filtration. Under Fisher alignment
(Section~\ref{sec:fisher-alignment}), the conditional covariance satisfies the
local approximation
\begin{equation}
\label{eq:xi-cov}
\E[\xi_t\xi_t^\top \mid \cF_t]
\;=\;
\frac{1}{b}\,F^\star \;+\; r_t,
\qquad \|r_t\|_{\op}\le \frac{c_r}{b}\,\rho_t,
\end{equation}
with $\rho_t\to 0$ in the local regime.

Define the Fisher-strong-convexity constant at $\theta^\star$:
\begin{equation}
\label{eq:muF}
\mu_F
\;:=\;
\lambda_{\min}\!\left(\frac{(F^\star)^{1/2}H^\star(F^\star)^{-1/2} + (F^\star)^{-1/2}H^\star(F^\star)^{1/2}}{2}\right),
\qquad
H^\star:=\nabla^2 L(\theta^\star).
\end{equation}
\begin{assumption}[Local coercivity in the frozen Fisher geometry]
\label{ass:muF_pos}
The constant $\mu_F$ defined in~\eqref{eq:muF} is strictly positive.
This is not automatic from $H^\star\succ 0$ and $F^\star\succ 0$ alone; it is an explicit
compatibility condition between the loss curvature and noise geometry.
It holds whenever $H^\star$ and $F^\star$ are jointly diagonalizable, and more generally whenever
$H^\star$ and $F^\star$ are sufficiently aligned.
\end{assumption}

Define the effective dimension
\begin{equation}
\label{eq:deff}
d_{\eff}(F^\star)
\;:=\;
\frac{\Tr(F^\star)}{\|F^\star\|_{\op}}
\;=\;
\frac{\Tr(F^\star)}{\lambda_{\max}(F^\star)}
\;\in(0,d].
\end{equation}
\begin{remark}[Effective dimension as stable rank]
\label{rem:stable-rank}
The quantity $d_{\eff}(F^\star)$ is exactly the \emph{stable rank} of $F^\star$
\citep{tropp2015introduction}: $d_{\eff}\le d$ always, with equality iff
$F^\star$ is isotropic, and $d_{\eff}\to 1$ when a single direction dominates.
It replaces $d$ in both the upper bound (Theorem~\ref{thm:upper-bound}) and the
oracle complexity (Theorem~\ref{thm:complexity_revised}), so that when Fisher
information is concentrated ($d_{\eff}\ll d$), the guarantees are
correspondingly tighter than classical Euclidean bounds.
\end{remark}

\subsection{Upper Bound}

\begin{lemma}[One-step Fisher Lyapunov drift]
\label{lem:one-step}
Let $\Delta_t:=\theta_t-\theta^\star$ and $V_t:=\|\Delta_t\|_{F^\star}^2$.
Assume: (i) $L$ is locally $M$-smooth near $\theta^\star$; (ii) $H^\star\succ 0$;
(iii)~\eqref{eq:xi-cov} holds. Then, as long as $\theta_t$ stays in the local
neighborhood,
\begin{equation}
\label{eq:one-step}
\E[V_{t+1}\mid \cF_t]
\;\le\;
\Bigl(1-2\mu_F\eta_t + c_0\eta_t^2\Bigr)V_t
\;+\;
\eta_t^2\,\frac{1}{b}\,\Tr\!\bigl((F^\star)^2\bigr)
\;+\;
\eta_t^2\,\frac{c_1}{b}\,\Tr(F^\star)\,\rho_t,
\end{equation}
where $c_0,c_1$ depend only on $(M,\|F^\star\|_{\op},c_r)$.
\end{lemma}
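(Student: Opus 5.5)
We expand the quadratic Lyapunov function along one SGD step and take conditional expectations. From $\Delta_{t+1}=\Delta_t-\eta_t h(\theta_t)-\eta_t\xi_t$ and $\E[\xi_t\mid\cF_t]=0$,
\[
\E[V_{t+1}\mid\cF_t]
= V_t - 2\eta_t\,\Delta_t^\top F^\star h(\theta_t) + \eta_t^2\, h(\theta_t)^\top F^\star h(\theta_t) + \eta_t^2\,\E[\xi_t^\top F^\star \xi_t\mid\cF_t].
\]
The cross term between $h(\theta_t)$ and $\xi_t$ vanishes. The argument then handles the three remaining terms separately: the first-order drift, the squared-drift term, and the noise term.

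\textbf{Drift term.} We write $h(\theta_t)=H^\star\Delta_t+q_t$, where $q_t$ is the Taylor remainder. Local $M$-smoothness together with local Lipschitzness of $\nabla h$ (Assumption~\ref{ass:regularity_setup}, Assumption~\ref{ass:diffusion_reg}) controls $q_t$ in the neighborhood. Substituting $u:=(F^\star)^{1/2}\Delta_t$ gives
\[
\Delta_t^\top F^\star H^\star \Delta_t = u^\top (F^\star)^{1/2}H^\star(F^\star)^{-1/2}u = u^\top \mathrm{Sym}\bigl((F^\star)^{1/2}H^\star(F^\star)^{-1/2}\bigr)u \ge \mu_F\|u\|^2=\mu_F V_t.
\]
This is exactly where definition~\eqref{eq:muF} is built to enter: a scalar quadratic form only sees the symmetric part of the matrix. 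The remainder contributes $2\eta_t|\Delta_t^\top F^\star q_t|$. I would bound it by $C\eta_t\|\Delta_t\|\,V_t$, converting Euclidean to Fisher norms via $\lambda_{\min}(F^\star)$. In the local neighborhood this is a small multiple of $\eta_t V_t$, and it can be absorbed either into a slight deflation of $\mu_F$ or into the $\rho_t$-type local terms.

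This remainder is the main obstacle. The stated bound has no first-order $\eta_t$ error beyond $-2\mu_F\eta_t V_t$, so the cleanest route is to read ``local neighborhood'' as one where $h$ is effectively linear. Alternatively, one takes $M$-smoothness in the quadratic sense, so that $\Delta_t^\top F^\star q_t$ is second order in $\Delta_t$ and can be absorbed by shrinking the neighborhood. I would first try to treat $h(\theta)=H^\star\Delta$ exactly up to terms folded into $c_0\eta_t^2V_t$, and note explicitly that the nonlinear correction requires a neighborhood radius depending on $\mu_F$.

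\textbf{Squared-drift term.} Using $\|h(\theta_t)\|\le M\|\Delta_t\|$ and norm equivalence,
\[
h^\top F^\star h \le \|F^\star\|_{\op} M^2\|\Delta_t\|^2 \le \|F^\star\|_{\op}M^2\lambda_{\min}(F^\star)^{-1}V_t .
\]
This yields the $c_0\eta_t^2V_t$ term. (Strictly, $c_0$ also depends on $\lambda_{\min}(F^\star)$, which I would absorb into the constants.)

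\textbf{Noise term.} Using \eqref{eq:xi-cov},
\[
\E[\xi_t^\top F^\star\xi_t\mid\cF_t] = \Tr\bigl(F^\star\,\E[\xi_t\xi_t^\top\mid\cF_t]\bigr) = \tfrac1b\Tr\bigl((F^\star)^2\bigr)+\Tr(F^\star r_t).
\]
The trace inequality $|\Tr(F^\star r_t)|\le \Tr(F^\star)\|r_t\|_{\op}\le \tfrac{c_r}{b}\Tr(F^\star)\rho_t$, valid since $F^\star\succeq0$, gives the final term with $c_1=c_r$.

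Collecting the three bounds yields \eqref{eq:one-step}.
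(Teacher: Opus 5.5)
Your route is the paper's route step for step. You expand $V_{t+1}$ and kill the cross term with $\E[\xi_t\mid\cF_t]=0$. You pass to $x_t=(F^\star)^{1/2}\Delta_t$ so the quadratic form only sees $\mathrm{Sym}\bigl((F^\star)^{1/2}H^\star(F^\star)^{-1/2}\bigr)$. Then you bound $h^\top F^\star h$ by smoothness and the noise via $|\Tr(F^\star r_t)|\le\Tr(F^\star)\|r_t\|_{\op}$. Your aside on constants is correct and sharper than the statement: $h^\top F^\star h\le M^2\kappa_F V_t$, so $c_0$ genuinely depends on $\lambda_{\min}(F^\star)$, not only on $(M,\|F^\star\|_{\op},c_r)$.

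The step you hesitate over is a genuine gap, and the paper's proof has the same one. It writes $\Delta_t^\top F^\star h(\theta_t)=\Delta_t^\top F^\star H^\star\Delta_t+o(V_t)$ and then silently drops the $o(V_t)$. With $h(\theta_t)=H^\star\Delta_t+q_t$, the remainder enters as $2\eta_t|\Delta_t^\top F^\star q_t|\le 2\eta_t\,\kappa_F\,\omega(\|\Delta_t\|)\,V_t$. Here $\omega$ is a modulus of continuity of $\nabla h$ at $\theta^\star$, with $\omega(r)\lesssim r$ if $\nabla h$ is Lipschitz. This term is first order in $\eta_t$. So it cannot go into $c_0\eta_t^2V_t$ on a fixed neighborhood; that would work only if the radius were $O(\eta_t)$. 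Nor can it go into the $\rho_t$ term, which is additive and $O(\eta_t^2)$. Shrinking the neighborhood makes its coefficient small but never second order in $\eta_t$.

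Indeed, with the exact constant $\mu_F$ the inequality is false in general. Take $d=1$, $F^\star=1$, $\theta^\star=0$, $h(\theta)=\theta-\theta^3$ and $r_t=0$, so $\mu_F=1$ and $M=1$. The inequality then reduces to $2\Delta_t^2\le\eta_t\bigl(c_0-(1-\Delta_t^2)^2\bigr)$, which fails for every fixed $\Delta_t\neq0$ once $\eta_t$ is small.

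What is provable is the following. For every $\epsilon>0$ there is a radius $r_\epsilon$, depending on $\epsilon$, $\kappa_F$ and $\omega$, such that \eqref{eq:one-step} holds whenever $\|\Delta_t\|\le r_\epsilon$, with $\mu_F$ replaced by $\mu_F-\epsilon$. Of your options, only this deflation is valid; treating $h$ as exactly linear just restricts to quadratic $L$. The deflated lemma is all that Theorem~\ref{thm:upper-bound} needs, since that proof already gives away an $\epsilon$ in the exponent $2\mu_F\eta_0$.
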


\begin{proof}{Proof.}
From $\theta_{t+1}=\theta_t-\eta_t(h(\theta_t)+\xi_t)$,
$\Delta_{t+1}=\Delta_t-\eta_t h(\theta_t)-\eta_t\xi_t$.
Expanding $V_{t+1}=\Delta_{t+1}^\top F^\star\Delta_{t+1}$ and taking conditional
expectation (using $\E[\xi_t\mid \cF_t]=0$) yields
\[
\E[V_{t+1}\mid\cF_t]
=
V_t - 2\eta_t \Delta_t^\top F^\star h(\theta_t)
+ \eta_t^2\, h(\theta_t)^\top F^\star h(\theta_t)
+ \eta_t^2\,\Tr\!\Bigl(F^\star\,\E[\xi_t\xi_t^\top\mid\cF_t]\Bigr).
\]
Using $h(\theta^\star+\Delta)=H^\star\Delta+o(\|\Delta\|)$, we have
$\Delta_t^\top F^\star h(\theta_t)=\Delta_t^\top F^\star H^\star\Delta_t+o(V_t)$.
Writing $x_t:=(F^\star)^{1/2}\Delta_t$ gives
$\Delta_t^\top F^\star H^\star\Delta_t = x_t^\top\,\operatorname{Sym}\bigl((F^\star)^{1/2}H^\star(F^\star)^{-1/2}\bigr)\,x_t\ge \mu_F V_t$.
Local $M$-smoothness implies $h(\theta_t)^\top F^\star h(\theta_t)\le c_0 V_t$. 
For the noise term, plug~\eqref{eq:xi-cov}:
\[
\Tr\!\Bigl(F^\star\,\E[\xi_t\xi_t^\top\mid\cF_t]\Bigr)
=
\frac{1}{b}\Tr((F^\star)^2) + \Tr(F^\star r_t).
\]
Finally $|\Tr(F^\star r_t)|\lesssim \Tr(F^\star)\|r_t\|_{\op}
\le (c_1/b)\Tr(F^\star)\rho_t$, giving~\eqref{eq:one-step}. \Halmos
\end{proof}

\begin{theorem}[Upper bound in frozen Fisher geometry]
\label{thm:upper-bound}
Assume Lemma~\ref{lem:one-step}. Take $\eta_t=\eta_0/t$ with $\eta_0>1/(2\mu_F)$,
and assume $\sum_{t\ge 1} \eta_t^2 \rho_t < \infty$.
Then for all $T\ge 2$,
\begin{equation}
\label{eq:upper-bound-final}
\E\!\left[\|\theta_T-\theta^\star\|_{F^\star}^2\right]
\;\le\;
\frac{C_1}{b}\cdot\frac{\Tr\!\bigl((F^\star)^2\bigr)}{\mu_F}\cdot\frac{1}{T}
\;+\;
C_2\cdot\frac{\|\theta_1-\theta^\star\|_{F^\star}^2}{T^{2\mu_F\eta_0}}
\;+\;
\frac{C_3}{b}\cdot\frac{\Tr(F^\star)}{T},
\end{equation}
where $C_1,C_2,C_3$ depend only on $(\eta_0,\mu_F,c_0,c_1)$.

Moreover, using $\Tr((F^\star)^2)\le \lambda_{\max}(F^\star)\,\Tr(F^\star)$, we obtain
\begin{align}
\label{eq:upper-bound-deff}
\E\!\left[\|\theta_T-\theta^\star\|_{F^\star}^2\right]
&\;\le\;
\frac{\widetilde C_1}{b}\cdot
\frac{\lambda_{\max}(F^\star)\,\Tr(F^\star)}{\mu_F}\cdot\frac{1}{T}
\;+\;
\widetilde C_2\cdot\frac{\|\theta_1-\theta^\star\|_{F^\star}^2}{T^{2\mu_F\eta_0}}\nonumber\\
&\;=\;
\frac{\widetilde C_1}{b}\cdot
\frac{\lambda_{\max}(F^\star)^2}{\mu_F}\cdot
\frac{d_{\eff}(F^\star)}{T}
\;+\;
\widetilde C_2\cdot\frac{\|\theta_1-\theta^\star\|_{F^\star}^2}{T^{2\mu_F\eta_0}}.
\end{align}
\end{theorem}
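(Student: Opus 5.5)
The plan is to take expectations in the one-step drift inequality of Lemma~\ref{lem:one-step} and solve the resulting scalar recursion with $\eta_t=\eta_0/t$. Write $v_t:=\E[V_t]$, $a:=2\mu_F\eta_0>1$, $\sigma^2:=\Tr((F^\star)^2)$ and $\tau:=\Tr(F^\star)$. Taking total expectation in \eqref{eq:one-step} and using the tower property gives
\[
v_{t+1}\le\Bigl(1-\tfrac{a}{t}+\tfrac{c_0\eta_0^2}{t^2}\Bigr)v_t+\frac{\eta_0^2}{b\,t^2}\bigl(\sigma^2+c_1\tau\,\E\rho_t\bigr).
\]
I treat $\rho_t$ as deterministic, or replace it by $\E\rho_t$, so that the hypothesis $\sum_t\eta_t^2\rho_t<\infty$ applies. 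The localization caveat ``as long as $\theta_t$ stays in the neighborhood'' is handled as in Theorem~\ref{thm:diffusion_limit}: work on the localization event, or stop the process at the exit time and note that the drift inequality holds for the stopped process.

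Next I unroll the recursion. Let $\pi_{s,T}:=\prod_{t=s}^{T-1}(1-a/t+c_0\eta_0^2/t^2)$. For $t\ge t_1:=\lceil c_0\eta_0^2\rceil\vee\lceil 2a\rceil$, each factor lies in $(0,1)$ and satisfies $\log(1-a/t+c/t^2)\le -a/t+c/t^2$. Together with $\sum_{t=s}^{T-1}1/t\ge\log(T/s)$ and $\sum 1/t^2<\infty$, this gives $\pi_{s,T}\le C(s/T)^{a}$. The finitely many early factors with $t<t_1$ are bounded in absolute value by a constant depending only on $(\eta_0,\mu_F,c_0)$, which is absorbed into $C_2$; that factor is what produces the term $C_2 v_1/T^{a}$. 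The noise contribution becomes
\[
\sum_{s=1}^{T-1}\pi_{s+1,T}\,\frac{\eta_0^2}{b\,s^2}\,\sigma^2\;\lesssim\;\frac{\eta_0^2\sigma^2}{b\,T^{a}}\sum_{s\le T}s^{a-2}\;\lesssim\;\frac{\eta_0^2\sigma^2}{b\,(a-1)\,T}.
\]
This is where $a>1$ is used. Since $a-1=2\mu_F\eta_0-1$ and $\eta_0^2/(a-1)$ is a function of $(\eta_0,\mu_F)$, I rewrite the constant as $C_1/\mu_F$ to match the displayed form; this is a cosmetic normalization.

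The $\rho$ term is handled the same way, giving
\[
\frac{c_1\tau\eta_0^2}{b\,T^{a}}\sum_s s^{a-2}\rho_s.
\]
If $\rho_s$ is bounded, which holds in the local regime, the same estimate yields $C_3\tau/(bT)$. Alternatively, splitting the sum at $s=T/2$ and using $\rho_s\to0$ makes this term $o(1/T)$; either way it fits inside $C_3\tau/(bT)$. The second display then follows from $\Tr((F^\star)^2)\le\lambda_{\max}\Tr(F^\star)$ and $\Tr(F^\star)=\lambda_{\max}d_{\eff}$. The $C_3$ term is absorbed into the first by noting $\Tr(F^\star)\le\lambda_{\max}\Tr(F^\star)/\mu_F\cdot(\mu_F/\lambda_{\max})$, so the constant $\widetilde C_1$ picks up a factor $\mu_F/\lambda_{\max}(F^\star)$. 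That factor is bounded because $\mu_F\le\lambda_{\max}(H^\star)\le M$ and, in the correctly specified case, $H^\star=F^\star$ near $\theta^\star$. I would state this dependence explicitly in $\widetilde C_1$.

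I expect the main obstacle to be the product estimate with a rigorous handling of the early iterations. For small $t$ the factor $1-a/t+c_0\eta_0^2/t^2$ can be negative or exceed one. I would first try bounding its absolute value by $1+a+c_0\eta_0^2$ for $t<t_1$, keeping the constant $(1+a+c_0\eta_0^2)^{t_1}t_1^{a}$ and absorbing it into $C_1$ through $C_3$. A secondary issue is that expectations of $\rho_t$ under localization need the summability hypothesis to be interpreted in expectation, and I would state that interpretation explicitly.
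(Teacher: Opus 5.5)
Your proof is correct and follows the same route as the paper (total expectation in \eqref{eq:one-step}, unrolling, a product bound $\pi_{s,T}\le C(s/T)^{a}$, and $a=2\mu_F\eta_0>1$ to get $T^{-a}\sum_{s<T}s^{a-2}=O(1/T)$), and it is tighter than the paper's write-up in two places: you keep the exact exponent $2\mu_F\eta_0$, whereas the paper's $\epsilon$-device only gives $T^{-(2\mu_F-\epsilon)\eta_0}$, and you control the $\rho$-remainder using boundedness of $\rho_t$ together with $\pi_{s+1,T}\le C(s/T)^{a}$, which is what the $1/T$ rate actually requires, since the paper's appeal to $\sum_t\eta_t^2\rho_t<\infty$ with $A_{t+1,T}\le 1$ only yields $O(1)$. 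One side remark is wrong and should be dropped: the process stopped at the exit time $\tau_{\mathrm{ex}}$ does not satisfy the contraction inequality, because $V_t$ is frozen after exit, so localization would have to go through $V_t\mathbf{1}\{\tau_{\mathrm{ex}}>t\}$ and would then only bound $\E[V_T\mathbf{1}\{\tau_{\mathrm{ex}}>T\}]$; as in the paper, however, the hypothesis ``Assume Lemma~\ref{lem:one-step}'' lets you apply the one-step inequality at every $t$, so the remark is not needed.
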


\begin{proof}{Proof.}
Take total expectations in~\eqref{eq:one-step} to obtain
$\E[V_{t+1}]\le a_t\,\E[V_t]+u_t$ with
$a_t:=1-2\mu_F\eta_t+c_0\eta_t^2$ and
$u_t:=\eta_t^2\frac{1}{b}\Tr((F^\star)^2)+\eta_t^2\frac{c_1}{b}\Tr(F^\star)\rho_t$.
Fix any $\epsilon\in(0,2\mu_F)$.
For $\eta_t=\eta_0/t$ and $t$ large enough that $c_0\eta_0^2/t\le\epsilon\eta_0$,
we have $a_t\le 1-(2\mu_F-\epsilon)\eta_0/t$; set
$\alpha:=(2\mu_F-\epsilon)\eta_0$, which satisfies $\alpha>1$ whenever
$\eta_0>1/(2\mu_F)$ and $\epsilon$ is chosen small enough.
Define $A_{t,T}:=\prod_{s=t}^{T-1}a_s$ (Lemma~\ref{lem:product_bound_app}).
Unrolling gives $\E[V_T]\le A_{1,T}V_1+\sum_{t=1}^{T-1}A_{t+1,T}u_t$.
With the consistent exponent $\alpha$, the product bounds are
$A_{1,T}\le C\,T^{-\alpha}$ and $A_{t+1,T}\le C\,(t/T)^{\alpha}$.
Since $\alpha>1$, the standard Robbins--Monro calculation gives
$\sum_{t=1}^{T-1}A_{t+1,T}\,\eta_t^2 = O(1/T)$, yielding the leading
$(\Tr((F^\star)^2)/b)(1/T)$ term.
The remainder contribution is controlled because
$\sum_t\eta_t^2\rho_t<\infty$ and $A_{t+1,T}\le 1$, adding
$O((\Tr(F^\star)/b)(1/T))$.
The transient term $A_{1,T}V_1=O(T^{-\alpha})$ appears in the bound
\eqref{eq:upper-bound-final} with exponent $\alpha\le 2\mu_F\eta_0$;
sending $\epsilon\downarrow 0$ gives the exponent $2\mu_F\eta_0$ stated
in \eqref{eq:upper-bound-final} up to an arbitrarily small loss.
The reduction to \eqref{eq:upper-bound-deff} follows from
$\Tr((F^\star)^2)\le \lambda_{\max}(F^\star)\Tr(F^\star)$. \Halmos
\end{proof}

\begin{remark}[Consistency with the OU viewpoint]
\label{rem:ou_consistency}
The $1/(Tb)$ scaling in the frozen Fisher norm is consistent with the
OU/Lyapunov stationary analysis: both predict the same leading-order
dependence on temperature $\tau=\eta/b$ and intrinsic curvature $\mu_F$.
\end{remark}

\subsection{Lower Bound}

We state the clean lower bound in the local parametric Fisher setting with
i.i.d.\ observations.  Extensions to adaptive oracles and general losses
require additional machinery and are left as discussion (Remarks
\ref{rem:beyond_iid}--\ref{rem:lb_godambe}).

\begin{assumption}[Local parametric regularity for van~Trees]
\label{ass:vt_regularity}
Let $\{P_\theta:\theta\in\Theta_{\loc}\}$ be a parametric family on a
neighborhood $\Theta_{\loc}$ of $\theta^\star$ with densities $p_\theta$.
Assume:
\begin{enumerate}[label=(\roman*)]
\item $\theta\mapsto p_\theta(z)$ is twice continuously differentiable
      for a.e.\ $z$;
\item the Fisher information $F(\theta):=\E_\theta[s_\theta s_\theta^\top]$
      is continuous and positive definite on $\Theta_{\loc}$;
\item $\int\|s_\theta(z)\|^2\,p_\theta(z)\,dz$ is uniformly bounded on
      $\Theta_{\loc}$;
\item there exists a smooth prior density $\pi$ supported on $\Theta_{\loc}$
      with finite Fisher information $J_\pi:=\int\|\nabla\log\pi(\theta)\|^2
      \pi(\theta)\,d\theta<\infty$.
\end{enumerate}
\end{assumption}

\begin{proposition}[Local i.i.d.\ Fisher-case lower bound via van~Trees]
\label{thm:lower-bound}
Suppose Assumption~\ref{ass:vt_regularity} holds.  Consider the local
estimation model in which, for each $\theta\in\Theta_{\loc}$, one observes
$N$ i.i.d.\ samples from $P_\theta$ and forms an estimator $\hat\theta_N$
(a measurable function of the samples).  Then
\begin{equation}
\label{eq:lower-bound-final}
\inf_{\hat\theta_N}\;\sup_{\theta\in\Theta_{\loc}}
\E_{\theta}\!\left[\|\hat\theta_N-\theta\|_{F(\theta)}^2\right]
\;\ge\;
\frac{c_{\mathrm{loc}}\,d}{N},
\end{equation}
for a constant $c_{\mathrm{loc}}>0$ that depends on the local
regularity of $F(\theta)$ on $\Theta_{\loc}$ and on the prior regularity
$J_\pi$ from Assumption~\ref{ass:vt_regularity}(iv).  In particular, if
each SGD iteration uses a fresh mini-batch of size $b$ (so $N=Tb$
fresh-sample gradient calls), the Fisher-metric risk is bounded below by
$c_{\mathrm{loc}}\,d/(Tb)$ on the i.i.d.\ population oracle.
\end{proposition}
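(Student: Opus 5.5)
The plan is to use the multivariate van~Trees (Bayesian Cram\'er--Rao) inequality and then convert its Euclidean-type bound to the $F(\theta)$-weighted norm. First, the minimax risk dominates the Bayes risk under the prior $\pi$ of Assumption~\ref{ass:vt_regularity}(iv):
\[
\inf_{\hat\theta_N}\sup_{\theta\in\Theta_{\loc}}\E_\theta\|\hat\theta_N-\theta\|_{F(\theta)}^2
\;\ge\;\inf_{\hat\theta_N}\int \E_\theta\bigl[(\hat\theta_N-\theta)^\top F(\theta)(\hat\theta_N-\theta)\bigr]\pi(\theta)\,d\theta .
\]
Since the weight depends on $\theta$, I will use the weighted form of van~Trees (Gill--Levit). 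For a matrix-valued $C(\theta)$ and any smooth $d\times d$ field, it gives
\[
\int\E_\theta\bigl[(\hat\theta-\theta)^\top C(\theta)(\hat\theta-\theta)\bigr]\pi\,d\theta
\;\ge\;\frac{\bigl(\int \Tr(C(\theta)^{1/2}\cdots)\pi\bigr)^2}{N\int\Tr(\cdots F\cdots)\pi+\tilde J_\pi}.
\]
I will pick the test field so that the numerator becomes $\bigl(\int\Tr(I)\pi\bigr)^2=d^2$ and the information term becomes $N\int\Tr(F^{-1/2}FF^{-1/2})\pi=Nd$. Concretely, I take $C=F(\theta)$ and $B(\theta)=F(\theta)^{-1/2}$; the product $C^{1/2}B$ is then the identity.

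The remaining term, $\tilde J_\pi$, collects prior-information and derivative contributions of the form $\int\|\nabla\cdot(B\pi)\|^2/\pi$. This is finite if $F^{-1/2}$ is $C^1$ and bounded with bounded derivatives on the support of $\pi$. Continuity and positive definiteness on a compact support give boundedness via (ii). Differentiability is the point that needs care: I would either mollify $F$ (replace it by a smooth approximation within factor $1\pm\epsilon$ uniformly, which is fine on a compact set), or simply use the frozen choice $B=F(\theta^\star)^{-1/2}$ and pay a constant from equivalence of $F(\theta)$ and $F^\star$ on $\Theta_{\loc}$. The frozen choice is simpler, and I will take that route. Weighting by $F(\theta)\succeq c_-F^\star$ with $c_-:=\inf_{\Theta_{\loc}}\lambda_{\min}(F^{\star-1/2}F(\theta)F^{\star-1/2})$, the standard multivariate van~Trees for the quadratic loss in $F^{\star1/2}(\hat\theta-\theta)$ gives
\[
\text{Bayes risk}\;\ge\;\frac{c_-\,d^2}{N\int\Tr(F^{\star-1/2}F(\theta)F^{\star-1/2})\pi\,d\theta+\Tr(F^{\star-1}\mathcal J_\pi)}
\;\ge\;\frac{c_-\,d^2}{N c_+ d+\lambda_{\max}(F^{\star-1})\Tr\mathcal J_\pi}.
\]
Here $c_+$ is the matching upper constant and $\mathcal J_\pi$ is the prior information matrix. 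This uses the i.i.d.\ additivity $I_N(\theta)=NF(\theta)$, which is justified by (i) and (iii).

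For $N\ge N_0:=\lambda_{\max}(F^{\star-1})J_\pi/(c_+d)$ the denominator is at most $2Nc_+d$. That yields $c_{\mathrm{loc}}=c_-/(2c_+)$, which depends only on the variation of $F$ over $\Theta_{\loc}$, and $J_\pi$ enters only through $N_0$. For small $N$, shrinking $c_{\mathrm{loc}}$ by a factor depending on $J_\pi$ covers all $N\ge1$. The SGD corollary is then immediate: any output of $T$ iterations with fresh batches is a measurable function of $N=Tb$ i.i.d.\ samples (plus independent internal randomness, which van~Trees tolerates by conditioning), so the bound applies with $N=Tb$.

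The main obstacle is verifying the regularity conditions of van~Trees. These are absolute continuity of $\theta\mapsto p_\theta(z)\pi(\theta)$ and vanishing of $\pi$ on the boundary, so that the integration-by-parts identity $\int\!\!\int(\hat\theta-\theta)\nabla(p_\theta\pi)=I$ holds. Assumption~(iv), a smooth prior compactly supported inside $\Theta_{\loc}$, and (i) together deliver these. I would state this explicitly rather than just cite it.
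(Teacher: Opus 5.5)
Your argument is correct, and it has the same skeleton as the paper's proof: minimax risk dominates the Bayes risk under $\pi$, then multivariate van~Trees with $I_N=NF$, then the prior term is absorbed for $N\ge N_0$.

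The real difference is how the $\theta$-dependent weight is handled, and here your version is the one that works. The paper asserts $r_\pi(\hat\theta_N)\ge d/(N\bar f+J_\pi/d)$ for the Fisher-weighted risk. That expression is exactly the Euclidean van~Trees bound $d^2/(N\int\Tr F\,\pi+J_\pi)$, so the passage from $\|\cdot\|_2$ to $\|\cdot\|_{F(\theta)}$ is skipped. The resulting $c_{\mathrm{loc}}=1/(2\bar f)$ is also not invariant under rescaling $\theta$. In the $\mathcal N(\theta,\sigma^2)$ location model with $\sigma^2>2$ it would exceed $1/N$, the Fisher risk of the sample mean.

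Your sandwich $c_-F^\star\preceq F(\theta)\preceq c_+F^\star$, followed by van~Trees for the linear reparametrization $(F^\star)^{1/2}\theta$, supplies exactly the missing conversion. It gives a dimensionless constant $c_-/(2c_+)$, and you also cover $N<N_0$, which the paper omits.

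Two small points:
\begin{itemize}
\item Take the inf/sup defining $c_\pm$ over the compact set $\mathrm{supp}\,\pi$, not over all of $\Theta_{\loc}$. On an open neighborhood, $c_-$ can be $0$.
\item The differentiability worry that pushed you off the weighted route can be avoided, because $F$ never needs to be differentiated. Start from the integration-by-parts identity $\E[(\hat\theta-\theta)^\top u]=d$ with $u=\nabla_\theta\log(p_\theta^{\otimes N}\pi)$. Write $(\hat\theta-\theta)^\top u=(F(\theta)^{1/2}(\hat\theta-\theta))^\top(F(\theta)^{-1/2}u)$ and apply Cauchy--Schwarz. Since $\E_\theta[u^\top F(\theta)^{-1}u]=Nd+(\nabla\log\pi)^\top F(\theta)^{-1}\nabla\log\pi$, the Bayes risk is at least $d^2/\bigl(Nd+\int(\nabla\log\pi)^\top F(\theta)^{-1}\nabla\log\pi\,\pi\,d\theta\bigr)$. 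This gives $c_{\mathrm{loc}}=1/2$ for large $N$, with no dependence on how much $F$ varies.
\end{itemize}
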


\begin{proof}{Proof.}
Standard van~Trees argument \citep{gill1995applications}.  By
Assumption~\ref{ass:vt_regularity}(ii), the per-coordinate local Fisher
scale $\bar f:=\sup_{\theta\in\Theta_{\loc}}\Tr F(\theta)/d<\infty$ is
$N$-independent.  Fix a smooth prior $\pi$ on $\Theta_{\loc}$ with finite
$J_\pi$ (Assumption~\ref{ass:vt_regularity}(iv)); the van~Trees
inequality for the Fisher-weighted Bayes risk gives
$r_\pi(\hat\theta_N)\ge d/(N\bar f+J_\pi/d)$, and for $N\ge N_0:=J_\pi/(d\bar f)$
the denominator is at most $2N\bar f$, yielding
$r_\pi(\hat\theta_N)\ge c_{\mathrm{loc}}\,d/N$ with
$c_{\mathrm{loc}}:=1/(2\bar f)>0$ depending only on the local
regularity of $F(\theta)$ on $\Theta_{\loc}$ and independent of both $N$
and $d$.  Minimax risk dominates Bayes risk for any such prior, proving
\eqref{eq:lower-bound-final}. \Halmos
\end{proof}

\begin{remark}[Noise-covariance condition used in the upper bounds]
\label{rem:oracle_cov_cond}%
\label{ass:oracle_info}%
The upper bounds of Theorem~\ref{thm:upper-bound} and the oracle complexity
of Theorem~\ref{thm:complexity_revised} use the conditional-covariance bound
$\E[\xi_t\xi_t^\top\mid\cF_t]\preceq b^{-1}F^\star$, which is exactly the
fresh-sampling specialization of Theorem~\ref{thm:godambe-alignment} in the
correctly specified likelihood case.  No structure beyond this covariance
identification is required.
\end{remark}

\begin{remark}[Rate matching, not geometry matching]
\label{rem:lb_quantities}
Proposition~\ref{thm:lower-bound} establishes a lower bound of \emph{rate
order} $\Omega(d/N)$ in the i.i.d.\ parametric Fisher setting.  The
quantities $d_{\mathrm{eff}}$ and $\kappa_F$ appear only in the \emph{upper}
bound and oracle complexity (Theorems~\ref{thm:upper-bound} and
\ref{thm:complexity_revised}), not in the lower bound.  Accordingly, we
claim rate-order matching in the local Fisher case; we do \emph{not} claim
that the lower bound matches the geometry dependence of the upper bound.
\end{remark}

\begin{remark}[Adaptive-oracle and general-loss extensions]
\label{rem:beyond_iid}
Two natural extensions of Proposition~\ref{thm:lower-bound} are left to
future work.  \emph{(a)~Adaptive oracles.}  A sequential oracle transcript
$Y_t=\psi(\theta_t;Z_t)$ with a predictable per-step Fisher-information
bound should yield \eqref{eq:lower-bound-final} through a sequential
van~Trees argument, provided one establishes a uniform chain-rule bound on
the joint transcript likelihood (requiring regularity conditions such as
uniform local asymptotic equicontinuity that we do not develop here).
\emph{(b)~General losses.}  A local asymptotic normality (LAN) condition at
$\theta^\star$ with Godambe information
$J_{\mathrm{God}}(\theta^\star)=H^{\star\top}G^{\star-1}H^\star$ would
produce a sandwich analogue of \eqref{eq:lower-bound-final}; see
\citet{lecam1986asymptotic} and \citet{vandervaart1998asymptotic}.
\end{remark}

\begin{remark}[Legacy labels for cross-references]
\label{thm:lower-bound-fisher}%
\label{rem:lb_godambe}%
These labels are retained for backward compatibility; the i.i.d.\ Fisher
lower bound is Proposition~\ref{thm:lower-bound}, and the general-loss
extension is discussed in Remark~\ref{rem:beyond_iid}.
\end{remark}

\begin{corollary}[Matching rate order in the local Fisher case]
\label{cor:matching-bounds}
Under Theorem~\ref{thm:upper-bound} and the noise identification of
Theorem~\ref{thm:godambe-alignment} in the correctly specified likelihood
setting,
\[
\E\!\left[\|\theta_T-\theta^\star\|_{F^\star}^2\right]
\;=\;
\Theta\!\left(\frac{1}{N}\right),
\qquad N=Tb,
\]
at the level of rate order.  Dependence on intrinsic geometry
($d_{\mathrm{eff}}$, $\kappa_F$) appears only in the upper bound.
\end{corollary}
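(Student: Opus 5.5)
The claim has two halves: an upper bound $O(1/N)$ and a lower bound $\Omega(1/N)$, each with $N=Tb$. I would prove them separately and then combine.

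\emph{Upper bound.} Invoke Theorem~\ref{thm:upper-bound} with step size $\eta_t=\eta_0/t$ and $\eta_0>1/(2\mu_F)$. This is possible because Assumption~\ref{ass:muF_pos} gives $\mu_F>0$. The step size also needs $\sum_t\eta_t^2\rho_t<\infty$, which holds whenever $\rho_t$ is bounded, since $\sum_t t^{-2}<\infty$.

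The first and third terms of \eqref{eq:upper-bound-final} are $O(1/(bT))=O(1/N)$, with constants depending only on $F^\star$, $\mu_F$ and $\eta_0$. The transient term is $O(T^{-2\mu_F\eta_0})$. Since the exponent exceeds $1$, this is $o(1/T)$. To get $O(1/N)$ rather than $O(1/T)$ for it, there are two options. One is to choose $\eta_0$ large enough that $2\mu_F\eta_0\ge 1+\log b/\log T$. The cleaner one is to treat $b$ as fixed while $T\to\infty$, so that $1/T\asymp b/N$ and the transient is absorbed at rate order. I would use the second reading and state it explicitly.

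Next I must check that the covariance condition \eqref{eq:xi-cov} actually holds in this setting. Theorem~\ref{thm:godambe-alignment} and Remark~\ref{rem:fisher_corollary} give $\Cov(g_B\mid\mu)=b^{-1}G(\theta)$ under fresh sampling, with $G(\theta^\star)=F^\star$. Continuity of $G$ and localization then give $r_t=b^{-1}(G(\theta_t)-F^\star)$ with $\rho_t\to0$.

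\emph{Lower bound.} SGD with fresh mini-batches uses exactly $N=Tb$ i.i.d.\ samples, so $\theta_T$ is a measurable function of them. Proposition~\ref{thm:lower-bound} therefore applies and gives $\sup_\theta\E_\theta\|\theta_T-\theta\|^2_{F(\theta)}\ge c_{\mathrm{loc}}d/N$.

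The statement, however, bounds risk in the frozen metric $F^\star$ at a single point. Two gaps must be closed.
\begin{itemize}
\item \emph{Metric.} On a small enough $\Theta_{\loc}$, continuity of $F$ gives $F(\theta)\preceq 2F^\star$ after shrinking, so the two metrics differ by at most a constant factor.
\item \emph{Sup versus pointwise.} The lower bound holds in the local minimax sense, uniformly over a neighbourhood of $\theta^\star$. This is the sense in which $\Theta(1/N)$ should be read: the matching is of rate order in $N$, uniformly over the local model. The pointwise upper bound holds for each $\theta$ in the neighbourhood, with constants uniform by continuity of $\mu_F$, $\kappa_F$ and the traces.
\end{itemize}

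The main obstacle is this uniformity. The upper bound in Theorem~\ref{thm:upper-bound} is stated at a single $\theta^\star$, while the lower bound is a supremum over $\Theta_{\loc}$. I would handle it by rerunning Theorem~\ref{thm:upper-bound} at every $\theta\in\Theta_{\loc}$, using the same $\eta_0$ chosen larger than $1/(2\inf_{\Theta_{\loc}}\mu_F(\theta))$. I would also bound $C_1,C_2,C_3$ by their suprema over the compact closure. This yields $\sup_\theta\E_\theta\|\theta_T-\theta\|^2_{F(\theta)}=O(1/N)$, which matches the $\Omega(1/N)$ lower bound. The geometry factors $d_{\mathrm{eff}}$ and $\kappa_F$ enter only through these constants, as the corollary states.
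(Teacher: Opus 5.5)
Your proposal is correct and follows the paper's route: the corollary is the juxtaposition of the $O(1/(Tb))$ bound of Theorem~\ref{thm:upper-bound} with Proposition~\ref{thm:lower-bound}. In that pairing, \eqref{eq:xi-cov} is supplied by Theorem~\ref{thm:godambe-alignment} in the correctly specified case, and fresh-sample SGD is viewed as an estimator built from $N=Tb$ i.i.d.\ draws. Your additional steps---absorbing the transient at fixed $b$, comparing the $F(\theta)$ and $F^\star$ metrics, and running the upper bound uniformly over $\Theta_{\loc}$ with a single $\eta_0$ so that it faces the minimax lower bound on the same quantity---make explicit what the paper leaves implicit when it pairs a pointwise upper bound at $\theta^\star$ with a local minimax lower bound.
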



\section{Oracle Complexity}
\label{sec:complexity}

This section records a standard high-probability conversion of the
Fisher-geometry mean-square bounds of Section~\ref{sec:rates} into an
oracle-complexity guarantee for $\varepsilon$-stationarity in the Fisher
dual norm.  The proof is a textbook descent-lemma plus martingale
concentration argument \citep[cf.][Ch.~2]{nesterov2004introductory,wainwright2019high};
the novelty is not the complexity theorem itself but the fact that,
once the mini-batch covariance is identified as $b^{-1}F^\star$
(Theorem~\ref{thm:godambe-alignment}), the resulting constants read as
intrinsic statistical quantities ($\kappa_F$, $d_{\mathrm{eff}}$) rather
than as Euclidean conditioning.

Throughout, we work locally near $\theta^\star$ and freeze the metric
$F^\star := F(\theta^\star)\succ 0$ (for general losses, replace $F^\star$ by
$G^\star(\theta^\star)$; see Remark~\ref{rem:complexity_godambe}).

\subsection{Stationarity in the Fisher dual norm}

\begin{definition}[$\varepsilon$-stationarity in Fisher dual norm]
\label{def:epsilon-stationarity}
Let $F^\star\succ 0$. We say $\theta$ is \emph{$\varepsilon$-stationary} (in the
Fisher metric) if
\[
\|\nabla L(\theta)\|_{(F^\star)^{-1}}
\;:=\;
\sqrt{\nabla L(\theta)^\top (F^\star)^{-1}\nabla L(\theta)}
\;\le\;\varepsilon.
\]
\end{definition}

\subsection{From Fisher distance to Fisher-dual stationarity}

\begin{lemma}[Distance-to-stationarity conversion]
\label{lem:dist-to-grad}
Assume $L$ is $M$-smooth on a neighborhood $\mathcal{U}$ of $\theta^\star$ and
$F^\star\succ 0$. Then for all $\theta\in\mathcal{U}$,
\begin{equation}
\label{eq:grad-vs-dist}
\|\nabla L(\theta)\|_{(F^\star)^{-1}}^2
\;\le\;
\frac{M^2}{\lambda_{\min}(F^\star)^2}\,\|\theta-\theta^\star\|_{F^\star}^2.
\end{equation}
Moreover, if $\theta\to\theta^\star$ and
$\nabla L(\theta)=H^\star(\theta-\theta^\star)+o(\|\theta-\theta^\star\|)$, then locally
\begin{equation}
\label{eq:grad-vs-dist_local}
\|\nabla L(\theta)\|_{(F^\star)^{-1}}^2
\;\le\;
(1+o(1))\,\lambda_{\max}\!\bigl(F^{\star-1/2}H^\star F^{\star-1/2}\bigr)^2\,
\|\theta-\theta^\star\|_{F^\star}^2.
\end{equation}
\end{lemma}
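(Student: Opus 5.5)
The plan is to prove both inequalities by passing to the whitened coordinate $x:=(F^\star)^{1/2}(\theta-\theta^\star)$, so that $\|x\|_2^2=\|\theta-\theta^\star\|_{F^\star}^2$. Two facts are used throughout. First, $\nabla L(\theta^\star)=0$ because $\theta^\star$ is an interior local minimizer of $L$ on the open set $\Theta$. Second, for any vector $v$,
\[
\|v\|_{(F^\star)^{-1}}^2=\|(F^\star)^{-1/2}v\|_2^2 .
\]

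\emph{Global bound \eqref{eq:grad-vs-dist}.} Shrinking $\mathcal U$ if necessary, I take it to be a convex neighborhood (say a ball) of $\theta^\star$, so that the segment $[\theta^\star,\theta]$ lies in $\mathcal U$. Then $M$-smoothness together with $\nabla L(\theta^\star)=0$ gives
\[
\|\nabla L(\theta)\|_2\le M\|\theta-\theta^\star\|_2 .
\]
Two Rayleigh-quotient bounds finish the argument. For the gradient,
\[
\|v\|_{(F^\star)^{-1}}^2\le \lambda_{\max}\bigl((F^\star)^{-1}\bigr)\,\|v\|_2^2=\|v\|_2^2/\lambda_{\min}(F^\star).
\]
For the displacement, $\|\Delta\|_2^2\le \|\Delta\|_{F^\star}^2/\lambda_{\min}(F^\star)$. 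Chaining these three inequalities yields the constant $M^2/\lambda_{\min}(F^\star)^2$.

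\emph{Local bound \eqref{eq:grad-vs-dist_local}.} Set $A:=(F^\star)^{-1/2}H^\star(F^\star)^{-1/2}$, which is symmetric and positive definite because $H^\star\succ0$. Write $\nabla L(\theta)=H^\star\Delta+R(\Delta)$ with $\|R(\Delta)\|_2=o(\|\Delta\|_2)$. This gives the decomposition
\[
(F^\star)^{-1/2}\nabla L(\theta)=Ax+(F^\star)^{-1/2}R(\Delta).
\]
The first term satisfies $\|Ax\|_2\le\lambda_{\max}(A)\|x\|_2$. For the second,
\[
\|(F^\star)^{-1/2}R\|_2\le \lambda_{\min}(F^\star)^{-1/2}\,o(\|\Delta\|_2).
\]
Since $\|\Delta\|_2\le\lambda_{\min}(F^\star)^{-1/2}\|x\|_2$, this term is $o(\|x\|_2)$. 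Because $\lambda_{\max}(A)>0$, it can be written as $o(1)\,\lambda_{\max}(A)\|x\|_2$. Squaring and using the triangle inequality then gives
\[
\|\nabla L(\theta)\|_{(F^\star)^{-1}}^2\le(1+o(1))^2\lambda_{\max}(A)^2\|x\|_2^2,
\]
which is \eqref{eq:grad-vs-dist_local}.

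\emph{Expected obstacle.} The argument is essentially routine. The only real care needed is in the first part: the mean-value step $\|\nabla L(\theta)\|_2\le M\|\theta-\theta^\star\|_2$ requires the segment $[\theta^\star,\theta]$ to lie inside the region where $M$-smoothness holds. I handle this by restricting to a convex (ball) neighborhood. In the second part, the point to watch is that the little-$o$ remainder must be absorbed multiplicatively. That uses $\lambda_{\max}(A)>0$, which follows from $H^\star\succ0$ and $F^\star\succ0$.
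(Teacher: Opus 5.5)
Your proof is correct. For \eqref{eq:grad-vs-dist} it is the paper's argument: $\|\nabla L(\theta)\|_2\le M\|\theta-\theta^\star\|_2$ from smoothness and $\nabla L(\theta^\star)=0$, followed by the same two Rayleigh-quotient conversions. The paper reads $M$-smoothness as $M$-Lipschitzness of $\nabla L$ on $\mathcal U$ and applies it to the pair $(\theta,\theta^\star)$, so it needs no convexity. Your shrinking of $\mathcal U$ to a ball is needed only under a Hessian-bound reading, and it is harmless because the lemma is local.

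For \eqref{eq:grad-vs-dist_local} your route differs from the paper's. The paper's proof only says to replace $M$ by $\|H^\star\|_{\op}$ up to $1+o(1)$. Taken literally through the same two decoupled Rayleigh steps, that yields the constant $\|H^\star\|_{\op}^2/\lambda_{\min}(F^\star)^2$. With $A=(F^\star)^{-1/2}H^\star(F^\star)^{-1/2}$, this dominates $\lambda_{\max}(A)^2$, because $\|A\|_{\op}\le\|(F^\star)^{-1/2}\|_{\op}^2\,\|H^\star\|_{\op}$. It can be much larger: $H^\star=F^\star=\mathrm{diag}(1,100)$ gives $10^4$ versus $1$. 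So the literal sketch does not reach the stated constant.

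Your whitening $x=(F^\star)^{1/2}(\theta-\theta^\star)$ keeps curvature and metric coupled in the single operator $A$. That is what delivers the sharper constant actually claimed, which is the one relevant to the $\kappa_F$ interpretation. You also correctly isolate where $\lambda_{\max}(A)>0$ is needed, namely to absorb the remainder multiplicatively; it follows from $H^\star\succ 0$, the paper's standing local-stability assumption. Without it the multiplicative form fails: for $L(\theta)=\theta^4$ one has $H^\star=0$ and the right-hand side vanishes while the left does not.
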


\begin{proof}{Proof.}
By $M$-smoothness and $\nabla L(\theta^\star)=0$,
$\|\nabla L(\theta)\|_2\le M\|\theta-\theta^\star\|_2$.
The dual-norm bound $\|v\|_{(F^\star)^{-1}}^2\le \lambda_{\min}(F^\star)^{-1}\|v\|_2^2$
converts Euclidean gradient norm to Fisher dual norm, and
$\|\theta-\theta^\star\|_2^2\le \lambda_{\min}(F^\star)^{-1}\|\theta-\theta^\star\|_{F^\star}^2$
converts Fisher distance to Euclidean.
Combining: $\|\nabla L\|_{(F^\star)^{-1}}^2
\le \lambda_{\min}(F^\star)^{-1}M^2\lambda_{\min}(F^\star)^{-1}\|\theta-\theta^\star\|_{F^\star}^2
= M^2/\lambda_{\min}(F^\star)^2 \cdot \|\theta-\theta^\star\|_{F^\star}^2$.
The local refinement follows by replacing $M$ by $\|H^\star\|_{\op}$ up to
$1+o(1)$ via the linearization. \Halmos
\end{proof}

\subsection{High-probability oracle complexity}

\begin{theorem}[High-probability oracle complexity under the identified noise covariance]
\label{thm:complexity_revised}
Assume the local regularity conditions of Section~\ref{sec:setup} hold on a
neighborhood $\mathcal{U}$ of $\theta^\star$, and suppose that on $\mathcal{U}$
the mini-batch noise is Fisher-aligned in the sense that, for some batch size $b\ge 1$,
\begin{equation}
\label{eq:complexity_noise}
\E[\xi_t\mid\cF_t]=0,
\qquad
\E[\xi_t\xi_t^\top\mid\cF_t]\preceq \frac{1}{b}\,F^\star .
\end{equation}
Additionally, assume conditional sub-Gaussianity of the noise:
for all $v\in\R^d$ with $\|v\|=1$ and all $\lambda\in\R$,
$\E[\exp(\lambda\,v^\top\xi_t)\mid\cF_t]\le
\exp(\lambda^2\|F^\star\|_{\op}/(2b))$ a.s.
(This is satisfied, for example, when per-sample gradients are almost surely bounded on $\mathcal{U}$.)
Run SGD with constant batch size $b$ and constant stepsize $\eta\le 1/(4M)$:
\[
\theta_{t+1}=\theta_t-\eta\bigl(\nabla L(\theta_t)+\xi_t\bigr).
\]
Define the Fisher condition number and intrinsic dimension
\[
\kappa_F := \frac{\lambda_{\max}(F^\star)}{\lambda_{\min}(F^\star)},
\qquad
d_{\mathrm{eff}}(F^\star):=\frac{\Tr(F^\star)}{\lambda_{\max}(F^\star)}.
\]
Then there exists a universal constant $C>0$ such that, for any $\delta\in(0,1)$,
with probability at least $1-\delta$,
\begin{equation}
\label{eq:complexity_main}
\min_{0\le t\le T-1}\;
\|\nabla L(\theta_t)\|_{(F^\star)^{-1}}^2
\;\le\;
C\left(
\frac{L(\theta_0)-L(\theta^\star)}{\eta\,T\,\lambda_{\min}(F^\star)}
\;+\;
\frac{\eta}{b}\,\kappa_F\,d_{\mathrm{eff}}(F^\star)
\right)\log\!\frac{1}{\delta}.
\end{equation}
In particular, balancing the two terms on the right-hand side over $\eta$
yields
\[
\min_{0\le t\le T-1}\|\nabla L(\theta_t)\|_{(F^\star)^{-1}}^2
\;\le\;
C'\sqrt{\frac{\kappa_F\,d_{\mathrm{eff}}(F^\star)\,(L(\theta_0)-L(\theta^\star))}{N\,\lambda_{\min}(F^\star)}}
\,\log\!\frac{1}{\delta},
\]
where $N=Tb$.  Hence the squared Fisher-dual stationarity gap decays as
$N^{-1/2}$, and to achieve $\min_{t<T}\|\nabla L(\theta_t)\|_{(F^\star)^{-1}}
\le \varepsilon$ with probability at least $1-\delta$ it suffices that
\begin{equation}
\label{eq:oracle_calls}
N \;=\; \Theta\!\left(\frac{\kappa_F\,d_{\mathrm{eff}}(F^\star)\,(L(\theta_0)-L(\theta^\star))}{\lambda_{\min}(F^\star)\,\varepsilon^{4}}\,
\log^2\!\frac{1}{\delta}\right).
\end{equation}
The $\varepsilon^{-4}$ rate is the standard nonconvex-SGD rate for
\emph{squared}-gradient stationarity at level $\varepsilon^2$; it is not
the $\varepsilon^{-2}$ rate of (strongly) convex stochastic optimization.
\end{theorem}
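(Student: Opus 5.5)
The argument is the standard descent-lemma route in the Euclidean geometry, with the Fisher quantities entering only at the conversion and noise steps. By $M$-smoothness on $\mathcal U$ and the update $\theta_{t+1}=\theta_t-\eta(\nabla L(\theta_t)+\xi_t)$,
\[
L(\theta_{t+1})\le L(\theta_t)-\eta\|\nabla L(\theta_t)\|_2^2-\eta\langle\nabla L(\theta_t),\xi_t\rangle+\tfrac{M\eta^2}{2}\|\nabla L(\theta_t)+\xi_t\|_2^2 .
\]
With $\eta\le 1/(4M)$ and $\|a+b\|^2\le 2\|a\|^2+2\|b\|^2$, this gives $L(\theta_{t+1})\le L(\theta_t)-\tfrac{3\eta}{4}\|\nabla L(\theta_t)\|_2^2-\eta\langle\nabla L(\theta_t),\xi_t\rangle+M\eta^2\|\xi_t\|_2^2$.

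Summing over $t<T$ and telescoping, using $L(\theta_T)\ge L(\theta^\star)$ (local minimality on $\mathcal U$, assuming the iterates remain there, e.g.\ via a stopping-time localization as in Assumption~\ref{ass:localization}), yields
\[
\tfrac{3\eta}{4}\sum_{t<T}\|\nabla L(\theta_t)\|_2^2\le \Delta_0-\eta\sum_{t<T}\langle\nabla L(\theta_t),\xi_t\rangle+M\eta^2\sum_{t<T}\|\xi_t\|_2^2,
\]
where $\Delta_0=L(\theta_0)-L(\theta^\star)$. The Fisher dual norm is then handled by $\|v\|_{(F^\star)^{-1}}^2\le\lambda_{\min}(F^\star)^{-1}\|v\|_2^2$, exactly as in Lemma~\ref{lem:dist-to-grad}. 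Dividing by $\eta T\lambda_{\min}(F^\star)$ and bounding the minimum by the average over $t<T$ gives the first term of \eqref{eq:complexity_main}. For the noise contribution, \eqref{eq:complexity_noise} gives $\E[\|\xi_t\|_2^2\mid\cF_t]\le\Tr(F^\star)/b$. After division it contributes $M\eta\,\Tr(F^\star)/(b\lambda_{\min}(F^\star))=M\eta\,\kappa_F d_{\mathrm{eff}}/b$, since $\Tr(F^\star)/\lambda_{\min}(F^\star)=\kappa_F d_{\mathrm{eff}}(F^\star)$. The factor $M$ is absorbed into $C$, or equivalently the statement is read with $M$ normalized into the step-size constraint.

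The main obstacle is upgrading the two stochastic sums to high probability, which is where the sub-Gaussian hypothesis and the $\log(1/\delta)$ factor enter.
\begin{enumerate}
\item For the martingale term $-\eta\sum_t\langle\nabla L(\theta_t),\xi_t\rangle$, sub-Gaussianity gives conditional increments with variance proxy $\eta^2\|\nabla L(\theta_t)\|_2^2\|F^\star\|_{\op}/b$. A Freedman-type exponential supermartingale bound, $\exp(\lambda S_t-\tfrac{\lambda^2}{2}\sum\text{proxy})$ combined with Ville's inequality, yields with probability $1-\delta/2$ that the sum is at most $\tfrac{\eta}{4}\sum\|\nabla L(\theta_t)\|_2^2+\tfrac{2\eta\|F^\star\|_{\op}}{b}\log(2/\delta)$, choosing $\lambda\asymp b/(\eta\|F^\star\|_{\op})$. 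The first piece is absorbed into the left-hand side. The second is at most $\eta\,\Tr(F^\star)\log(2/\delta)/b$ and, after division by $\eta T\lambda_{\min}$, is lower order, bounded by the noise term for $T\ge 1$.
\item For $\sum_t\|\xi_t\|_2^2$, I would use that a sub-Gaussian vector has sub-exponential squared norm, with $\E\exp(c\,b\|\xi_t\|^2/\Tr(F^\star))$ bounded. Summing via another exponential-martingale bound gives $\sum_t\|\xi_t\|^2\lesssim (T\,\Tr(F^\star)/b)\log(2/\delta)$ with probability $1-\delta/2$. The delicate point is that the coordinatewise proxy is $\|F^\star\|_{\op}/b$, so naively the trace scale is $d\|F^\star\|_{\op}$. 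To obtain $\Tr(F^\star)$ rather than $d\lambda_{\max}$, I would combine the conditional mean bound $\Tr(F^\star)/b$ with a Bernstein deviation whose range term involves only $\|F^\star\|_{\op}$. I expect this step to need the most care, and it may cost a constant inflation of $\log(1/\delta)$.
\end{enumerate}
A union bound over the two events gives \eqref{eq:complexity_main}.

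The corollaries are then arithmetic. Minimizing $a/(\eta T)+\eta c/b$ over $\eta$ at $\eta^\ast=\sqrt{ab/(cT)}$ gives $2\sqrt{ac/(Tb)}$ with $N=Tb$, $a=\Delta_0/\lambda_{\min}$ and $c=\kappa_F d_{\mathrm{eff}}$. This is the $N^{-1/2}$ bound, valid when $\eta^\ast\le 1/(4M)$, i.e.\ for $N$ large; otherwise the first term dominates and is already smaller. Setting the resulting bound $\le\varepsilon^2$ and solving for $N$ gives \eqref{eq:oracle_calls} with the $\log^2(1/\delta)$ factor, where $\Theta$ is read as a sufficient order.
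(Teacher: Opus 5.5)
Your architecture matches the paper's: summed descent lemma, concentration of the cross-term martingale and of $Q_T:=\sum_{t<T}\|\xi_t\|_2^2$, then $\|v\|_{(F^\star)^{-1}}^2\le\|v\|_2^2/\lambda_{\min}(F^\star)$ with $\Tr(F^\star)/\lambda_{\min}(F^\star)=\kappa_F d_{\mathrm{eff}}(F^\star)$, and finally balancing. Your cross-term step is more careful than the paper's Step~4. The paper puts the random quantity $S_T$ into an Azuma-type variance proxy and then solves a quadratic. Your fixed-$\lambda$ exponential supermartingale with Ville's inequality, absorbing $\tfrac{\eta}{4}S_T$, is the legitimate way to handle that.

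\textbf{Where it breaks: item~2.} Under the hypothesis as stated (scalar proxy $\|F^\star\|_{\op}/b$ in every direction, plus $\E[\xi_t\xi_t^\top\mid\cF_t]\preceq F^\star/b$), $\|\xi_t\|_2^2$ need not be sub-exponential at scale $\Tr(F^\star)/b$. There is also no Bernstein bound whose range is of order $\|F^\star\|_{\op}/b$. A counterexample:
\begin{itemize}
\item Take $b=1$, $F^\star=\mathrm{diag}(1,1/d,\dots,1/d)$, and $\xi=B\,(0,r_2,\dots,r_d)$ with $B\sim\mathrm{Bernoulli}(1/d)$ and independent Rademacher signs $r_i$.
\item Then $\xi$ is centered, $\E[\xi\xi^\top]\preceq F^\star$, and $\E[e^{\lambda v^\top\xi}]\le e^{\lambda^2/2}$ for unit $v$.
\item Yet $\|\xi\|_2^2=(d-1)B$, while $\Tr(F^\star)<2$.
\end{itemize}
For $T\ll d/\log(1/\delta)$, the event $Q_T\ge d-1$ has probability of order $T/d\gg\delta$. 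On that event the average $S_T/T$ that your argument bounds really is too large. An example is $L=\tfrac{M}{2}\|\theta\|_2^2$ started at $\theta^\star$, with $M\eta d\gg\log(1/\delta)$ and $T\le 1/(2M\eta)$. So the min-by-average route itself breaks there, not merely the bookkeeping.

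\textbf{How the paper handles it, and the repair.} The paper's Step~3 gets past this only by invoking Hanson--Wright with a matrix proxy $\Sigma_t\preceq F^\star/b$, which is stronger than the statement assumes, and by asking for $T\gtrsim d\log(1/\delta)$. The clean fix is to adopt that matrix proxy explicitly: $\E[e^{v^\top\xi_t}\mid\cF_t]\le e^{v^\top F^\star v/(2b)}$ for all $v\in\R^d$.
\begin{itemize}
\item The Hsu--Kakade--Zhang moment-generating bound for quadratic forms then gives $\log\E[e^{\lambda\|\xi_t\|_2^2}\mid\cF_t]\le\tfrac32\lambda\Tr(F^\star)/b$ at $\lambda=b/(4\|F^\star\|_{\op})$.
\item The same supermartingale argument you used for the cross term then yields $Q_T\le\tfrac32 T\Tr(F^\star)/b+4\|F^\star\|_{\op}\log(2/\delta)/b$ for every $T$.
\item With only the scalar proxy, you must restrict to $T\gtrsim d\log(1/\delta)$.
\end{itemize}

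\textbf{Two smaller corrections.}
\begin{itemize}
\item The cross-term leftover becomes $\asymp\kappa_F\log(2/\delta)/(bT)$ after division. This is below $M\eta\kappa_F d_{\mathrm{eff}}/b$ only when $T\gtrsim 1/(M\eta\, d_{\mathrm{eff}})$, not for every $T\ge 1$ as you claim. For smaller $T$, split cases. If $b\Delta_0\gtrsim\eta\|F^\star\|_{\op}$, the leftover is dominated by the first term. Otherwise $\|\nabla L(\theta_0)\|_2^2\le 2M\Delta_0$ already puts the $t=0$ term below a constant times the noise term. The paper's quadratic solve produces the same leftover and absorbs it without comment.
\item What either proof actually delivers carries $\log(2/\delta)$ or $\log(4/\delta)$ and an $M$-dependent constant, as you noted. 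The literal factor $\log(1/\delta)$ cannot be right as $\delta\to 1$: for $T=1$ the left side is deterministic and generally positive, while the right side tends to $0$.
\end{itemize}
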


\begin{proof}{Proof.}
\textbf{Step 1 (descent inequality, correct sign).}
By $M$-smoothness,
$L(\theta_{t+1})\le L(\theta_t)+\langle\nabla L(\theta_t),\theta_{t+1}-\theta_t\rangle
+\tfrac{M}{2}\|\theta_{t+1}-\theta_t\|^2$
with $\theta_{t+1}-\theta_t=-\eta(\nabla L(\theta_t)+\xi_t)$.  Expanding
$\|\nabla L(\theta_t)+\xi_t\|^2=\|\nabla L(\theta_t)\|^2+2\langle\nabla L(\theta_t),\xi_t\rangle+\|\xi_t\|^2$,
\[
L(\theta_{t+1})
\;\le\;
L(\theta_t)\;-\;\eta\bigl(1-\tfrac{M\eta}{2}\bigr)\|\nabla L(\theta_t)\|^2
\;-\;\eta(1-M\eta)\,\langle\nabla L(\theta_t),\xi_t\rangle
\;+\;\tfrac{M\eta^2}{2}\|\xi_t\|^2.
\]
Under $\eta\le 1/(4M)$, $1-M\eta/2\ge 7/8$ and $1-M\eta\ge 3/4$.  Denote
$\alpha:=1-M\eta\in[3/4,1)$.

\textbf{Step 2 (define martingale).}
Let $\Delta M_{t+1}:=-\eta\alpha\,\langle\nabla L(\theta_t),\xi_t\rangle$ and
$M_T:=\sum_{t=0}^{T-1}\Delta M_{t+1}$; since $\E[\xi_t\mid\cF_t]=0$,
$(M_t)$ is a martingale adapted to $(\cF_t)$.  Write
$\Delta L:=L(\theta_0)-L(\theta^\star)$,
$S_T:=\sum_{t<T}\|\nabla L(\theta_t)\|^2$, and
$Q_T:=\sum_{t<T}\|\xi_t\|^2$.  Summing the Step-1 inequality from
$t=0$ to $T-1$,
\begin{equation}
\label{eq:descent_sum}
\tfrac{7\eta}{8}\,S_T
\;\le\;
\Delta L\;+\;M_T\;+\;\tfrac{M\eta^2}{2}\,Q_T.
\end{equation}

\textbf{Step 3 ($Q_T$ tail via a Hanson--Wright-type bound).}
Under conditional sub-Gaussianity of $\xi_t$ with variance proxy
$\Sigma_t\preceq b^{-1}F^\star$, the Hanson--Wright inequality for
sub-Gaussian vectors
\citep[Thm.~6.2.1]{vershynin2018high} gives, for an absolute constant
$c_0$, $\Pr(\,|\|\xi_t\|^2-\E[\|\xi_t\|^2\mid\cF_t]|>s\mid\cF_t)
\le 2\exp(-c_0\min\{s^2 b^2/\|F^\star\|_F^2,\,sb/\|F^\star\|_{\op}\})$;
so $\|\xi_t\|^2-\E[\|\xi_t\|^2\mid\cF_t]$ is conditionally sub-exponential
with parameters $(\|F^\star\|_F^2/b^2,\|F^\star\|_{\op}/b)$.  Applying
Bernstein's inequality \citep[Ch.~2]{wainwright2019high} over $t<T$
gives, with probability at least $1-\delta/2$,
\begin{equation}
\label{eq:QT_bound}
Q_T\;\le\;\frac{T}{b}\,\Tr(F^\star)
\;+\;c_1\Bigl(\tfrac{\|F^\star\|_F}{b}\sqrt{T\log(2/\delta)}+\tfrac{\|F^\star\|_{\op}}{b}\log(2/\delta)\Bigr).
\end{equation}
Since $\|F^\star\|_F^2\le d\,\|F^\star\|_{\op}^2$, the deviation term is
$O(b^{-1}\sqrt{d\,T}\,\|F^\star\|_{\op}\sqrt{\log(1/\delta)})$, lower order
than the leading $(T/b)\Tr(F^\star)$ when $T\gtrsim d\log(1/\delta)$.

\textbf{Step 4 (two-sided martingale concentration).}
The scalar $\langle\nabla L(\theta_t)/\|\nabla L(\theta_t)\|,\,\xi_t\rangle$ is
conditionally sub-Gaussian with parameter at most $\|F^\star\|_{\op}/b$, so
$\Delta M_{t+1}$ is conditionally sub-Gaussian with parameter
$v_t^2:=\eta^2\alpha^2\,\|\nabla L(\theta_t)\|^2\,\|F^\star\|_{\op}/b$.
The standard two-sided Azuma--Hoeffding/Freedman-type bound for conditionally
sub-Gaussian martingale sums
\citep[Thm.~2.19]{wainwright2019high} gives, with probability at least
$1-\delta/2$,
\begin{equation}
\label{eq:MT_bound}
|M_T|\;\le\;\sqrt{2\sum_{t<T}v_t^2\,\log(4/\delta)}
\;=\;\eta\alpha\,\sqrt{\frac{2\|F^\star\|_{\op}}{b}\,S_T\,\log(4/\delta)}.
\end{equation}

\textbf{Step 5 (combine).}
On the intersection of the events in \eqref{eq:QT_bound}--\eqref{eq:MT_bound}
(probability $\ge 1-\delta$ by union bound), substitute into
\eqref{eq:descent_sum} and use $|M_T|$ in place of $M_T$:
\begin{align}
\tfrac{7\eta}{8}\,S_T
&\;\le\;
\Delta L
\;+\;\eta\sqrt{\tfrac{2\|F^\star\|_{\op}}{b}\,S_T\,\log(4/\delta)}
\;+\;\tfrac{M\eta^2 T}{2b}\,\Tr(F^\star)\nonumber\\
&\qquad
\;+\;c_2\,\tfrac{M\eta^2\|F^\star\|_{\op}}{b}\Bigl(\sqrt{T\log(2/\delta)}+\log(2/\delta)\Bigr).
\label{eq:combined_descent}
\end{align}
The last term scales as $\eta^2\sqrt{T}/b$ (up to log factors), strictly
lower order than the leading variance term $\eta^2 T\Tr(F^\star)/b$ once
$T\gtrsim\log(1/\delta)$, and is absorbed into the constant $C$ below.
Solving the resulting quadratic inequality in $\sqrt{S_T}$ for
$S_T/T\ge\min_{t<T}\|\nabla L(\theta_t)\|^2$ yields
\[
\min_{t<T}\|\nabla L(\theta_t)\|^2
\;\le\;
C\!\left(\frac{\Delta L}{\eta T}+\frac{\eta}{b}\Tr(F^\star)\right)\log(4/\delta),
\]
for a constant $C$ depending only on $M$, $c_0$, and $c_1$.

\textbf{Step 6 (Fisher dual norm).}
Use $\|v\|_{(F^\star)^{-1}}^2\le\lambda_{\min}(F^\star)^{-1}\|v\|_2^2$
and $\Tr(F^\star)/\lambda_{\min}(F^\star)=\kappa_F\,d_{\eff}(F^\star)$:
\[
\min_{t<T}\|\nabla L(\theta_t)\|_{(F^\star)^{-1}}^2
\;\le\;
C\!\left(
\frac{\Delta L}{\eta T\lambda_{\min}(F^\star)}
+\frac{\eta}{b}\,\kappa_F\,d_{\eff}(F^\star)
\right)\!\log\!\frac{1}{\delta},
\]
establishing \eqref{eq:complexity_main}.
For \eqref{eq:oracle_calls}, balance the two terms by taking
$\eta^\star=\sqrt{\Delta L\,b/\bigl(T\,\lambda_{\min}(F^\star)\,\kappa_F\,d_{\eff}\bigr)}$,
which gives the balanced RHS $\sim\sqrt{\Delta L\,\kappa_F\,d_{\eff}/(T b\,\lambda_{\min})}\,\log(1/\delta)$,
i.e., the squared-norm gap decays as $N^{-1/2}$ in $N=Tb$.  Requiring this
squared gap to be at most $\varepsilon^2$ yields the $\varepsilon^{-4}$
scaling of \eqref{eq:oracle_calls}. \Halmos
\end{proof}

\begin{remark}[Statistical vs.\ Euclidean conditioning]
\label{rem:stat_vs_euclidean}
Classical oracle-complexity results use the Hessian condition number
$\kappa_H$ and ambient dimension $d$; here they are replaced by the Fisher
counterparts $\kappa_F$ and $d_{\eff}$.
The substitution is not cosmetic: a problem may be Euclidean-stiff
($\kappa_H\gg 1$) yet statistically well-conditioned ($\kappa_F=O(1)$) when
curvature and information share the same eigenstructure; conversely, isotropic
curvature does not guarantee favorable statistical complexity.
\end{remark}

\begin{remark}[General losses / Godambe geometry]
\label{rem:complexity_godambe}
For general losses, replace $F^\star$ by $G^\star(\theta^\star)$ throughout and
interpret the dual norm accordingly. The alignment condition
\eqref{eq:complexity_noise} is then exactly the local conclusion of
Theorem~\ref{thm:godambe-alignment}.
\end{remark}

\begin{remark}[Iteration vs.\ oracle complexity]
Equation~\eqref{eq:oracle_calls} counts \emph{oracle calls} $N=Tb$. If reporting
iteration complexity, keep $b$ explicit and interpret $T=N/b$.
\end{remark}

\begin{remark}[Interpreting the complexity constants]
\label{rem:complexity_interpretation}
The product $\kappa_F\,d_{\eff}$ in \eqref{eq:oracle_calls} measures the total cost of
resolving all statistically relevant directions at the hardest
direction's scale (see Remark~\ref{rem:stable-rank} for the role of
$d_{\eff}$ as the stable rank of $F^\star$).
Equation~\eqref{eq:oracle_calls} links the sampling budget $N$ to achievable
stationarity through intrinsic problem geometry.
\end{remark}

\begin{corollary}[Batch-size scaling under a fixed sample budget]
\label{cor:batch-scaling}
Fix a total sampling budget $N$ and run $T=N/b$ iterations of constant-stepsize
SGD with batch size $b$.  Balancing the bias term
$\sim(L(\theta_0)-L^\star)/(\eta T\lambda_{\min}(F^\star))$ and the
variance term $\sim\eta\kappa_F d_{\eff}/b$ in
\eqref{eq:complexity_main} yields the balanced stepsize
$\eta^\star\propto\sqrt{b/(T\kappa_F d_{\eff})}$.
\emph{Feasibility.}  The balanced stepsize must respect the theorem's own
step-size cap $\eta\le 1/(4M)$.  Substituting $T=N/b$ gives
$\eta^\star = b\sqrt{\Delta L/(N\,\lambda_{\min}(F^\star)\,\kappa_F\,d_{\eff})}$,
so the cap $\eta^\star\le 1/(4M)$ imposes the feasibility condition
\[
b\;\le\;\sqrt{\frac{N\,\lambda_{\min}(F^\star)\,\kappa_F\,d_{\eff}}{16\,M^2\,\Delta L}}.
\]
The batch-invariance conclusion below is a statement within this
feasibility region (so $b$ may grow like $\sqrt{N}$, not linearly in $N$),
not at arbitrary $b$.  Substituting $T=N/b$ and taking $\eta=\eta^\star$
within the feasibility region, the resulting \emph{squared} Fisher-dual
stationarity gap satisfies
\[
\min_{t<T}\|\nabla L(\theta_t)\|_{(F^\star)^{-1}}^2
\;=\;
\cO\!\left(\sqrt{\frac{\kappa_F\,d_{\eff}}{N}}\right),
\]
so the norm gap decays as $N^{-1/4}$.  Within the feasibility region
the bound is independent of $b$ at leading order, which suggests---under
a fixed oracle budget---that small batches do not degrade the
worst-case guarantee while gaining more frequent iterates; the
identified geometry determines the \emph{constant} but not the
\emph{scaling} in $N$.
\end{corollary}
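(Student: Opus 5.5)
The plan is to treat Corollary~\ref{cor:batch-scaling} as a direct specialization of the high-probability bound \eqref{eq:complexity_main} in Theorem~\ref{thm:complexity_revised}. Write $\Delta L:=L(\theta_0)-L(\theta^\star)$ and $\lambda:=\lambda_{\min}(F^\star)$. With constant batch size $b$ and constant stepsize $\eta\le 1/(4M)$, the theorem bounds the squared Fisher-dual gap by $C\bigl(a/\eta+\beta\eta\bigr)\log(1/\delta)$, where
\[
a:=\frac{\Delta L}{T\lambda},\qquad \beta:=\frac{\kappa_F d_{\eff}}{b}.
\]
For fixed $a,\beta>0$, minimizing $a/\eta+\beta\eta$ over $\eta>0$ is elementary: the minimizer is $\eta^\star=\sqrt{a/\beta}$ and the minimum is $2\sqrt{a\beta}$. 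Substituting gives
\[
\eta^\star=\sqrt{\frac{b\,\Delta L}{T\lambda\kappa_F d_{\eff}}}\propto\sqrt{\frac{b}{T\kappa_F d_{\eff}}}
\]
in the form stated. Then setting $T=N/b$ gives $\eta^\star=b\sqrt{\Delta L/(N\lambda\kappa_F d_{\eff})}$.

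The next step is feasibility. The theorem is only valid for $\eta\le 1/(4M)$, so $\eta^\star$ is admissible exactly when $b\sqrt{\Delta L/(N\lambda\kappa_F d_{\eff})}\le 1/(4M)$. Squaring and rearranging gives the stated condition
\[
b\le\sqrt{\frac{N\lambda\kappa_F d_{\eff}}{16M^2\Delta L}}.
\]
This shows $b$ may grow like $\sqrt N$ but not like $N$. Outside this region the balanced choice is unavailable, and the bound would have to be evaluated at the capped stepsize $\eta=1/(4M)$; I would mention this only as the reason the conclusion is restricted to the feasibility region.

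Finally, I would evaluate the balanced value with $T=N/b$:
\[
2\sqrt{a\beta}=2\sqrt{\frac{\Delta L\,b}{N\lambda}\cdot\frac{\kappa_F d_{\eff}}{b}}=2\sqrt{\frac{\Delta L\,\kappa_F d_{\eff}}{N\lambda}}.
\]
The factor $b$ cancels, so the bound is independent of $b$ at leading order. Absorbing $\Delta L$, $\lambda$ and $\log(1/\delta)$ into the $\cO$ gives $\cO(\sqrt{\kappa_F d_{\eff}/N})$ for the squared gap. Taking square roots gives the $N^{-1/4}$ rate for the norm gap.

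The one point needing care, and the main obstacle, is that Theorem~\ref{thm:complexity_revised} absorbed lower-order deviation terms into its constant under conditions such as $T\gtrsim d\log(1/\delta)$. With $T=N/b$ and $b$ up to order $\sqrt N$, we have $T\gtrsim\sqrt N$, so these conditions hold for $N$ large enough. I would state this explicitly as part of the ``leading order'' qualifier rather than re-derive those terms.
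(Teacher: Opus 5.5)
Your proposal is correct and follows essentially the same route as the paper, which embeds this derivation in the corollary statement itself and in Step~6 of the proof of Theorem~\ref{thm:complexity_revised}: evaluate \eqref{eq:complexity_main} at the balanced stepsize $\eta^\star=\sqrt{a/\beta}$, check $\eta^\star\le 1/(4M)$ to get the feasibility range for $b$, and observe that $b$ cancels once $T=N/b$ is substituted. You also note that $T=N/b\gtrsim\sqrt N$ inside the feasibility region, so the lower-order deviation terms absorbed in the theorem's proof stay negligible; this is a sensible check that the paper leaves implicit.
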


\section{Numerical Validation}
\label{sec:numerical}

The empirical program is organized in three layers, each answering a
distinct question about the identified covariance and its downstream
consequences:
\begin{enumerate}[label=(\roman*),leftmargin=2em]
\item \textbf{Direct identification (Experiment~1).}  Does
$\Cov(g_B(\theta^\star))\approx b^{-1}G^\star(\theta^\star)$ hold in an
actual mini-batching setup---including under model misspecification,
where curvature and noise geometry diverge?
\item \textbf{Fluctuation-scale geometric illustrations (Experiment~2).}
What does the Lyapunov structure implied by the identified covariance
\emph{look} like in a low-dimensional setting where individual
eigendirections can be visualized?
\item \textbf{Raw-scale quantitative validations (Experiments~3--6).}
Do the original-iterate-scale predictions---discrete Lyapunov plateau
at constant step, $1/N$ rate under Robbins--Monro schedules, directional
amplification, and the fluctuation-scale $(\eta/b)$-collapse (signature
of Theorem~\ref{thm:diffusion_limit})---hold quantitatively in direct
SGD on higher-dimensional problems?
\end{enumerate}
Layer~(i) is the direct check of Theorem~\ref{thm:godambe-alignment}.
Layer~(ii) is interpretive: it
visualizes the fluctuation-scale Lyapunov geometry on the normalized
time-changed OU process of Theorem~\ref{thm:diffusion_limit}, and is
\emph{not} a direct Monte Carlo verification of raw SGD.  Layer~(iii)
carries the main empirical weight: Experiments~3--6 are direct SGD
recursions on the original iterate scale, and are the quantitative
counterparts of the Lyapunov, rate, directional, and scaling-collapse
predictions derived in Sections~\ref{sec:diffusion}--\ref{sec:complexity}.
Additional robustness diagnostics (approximate exchangeability) are
deferred to Appendix~\ref{app:robustness}; reproducibility details
(seeds, shared parameters) are in the code archive.

\subsection{Claim 1: The Mini-Batch Covariance Is Identified by the Sampling Design}
\label{subsec:claim_identification}

Before probing downstream consequences, we verify the covariance
identification of Theorem~\ref{thm:godambe-alignment} directly.

\subsubsection{Experiment 1: Direct Covariance Identification (Probit DGP, Logistic Fit).}
\label{subsec:exp1_identification}

We generate $n=50{,}000$ observations from a probit data-generating
process and fit a logistic model, so that the optimization target is the
KL projection $\beta^\star$ rather than the true $\beta_0$.  Under
misspecification the score covariance and curvature no longer coincide:
$J^\star:=\Var(\nabla_\beta\ell(\beta;X,Y)|_{\beta^\star})\neq H^\star:=
\nabla^2\E[-\ell(\beta;X,Y)]|_{\beta^\star}$.  The covariance-identification
theorem nevertheless predicts
$\Cov(g_B(\beta^\star))\approx b^{-1}J^\star$, where $J^\star$ plays the
role of $G^\star(\theta^\star)$ in the general-loss notation.  We
estimate $J^\star$ from the full-sample scores and compare its diagonal
entries to those of the empirical mini-batch covariance (batch size
$b=256$, 800 replicates).  Figure~\ref{fig:partD_misspecification} shows
close agreement (relative Frobenius difference $\approx 0.05$): the
identification holds even under misspecification, and mini-batch
fluctuations are governed by the score covariance rather than by
curvature alone.

\begin{figure}
\FIGURE
{\includegraphics[width=0.68\linewidth]{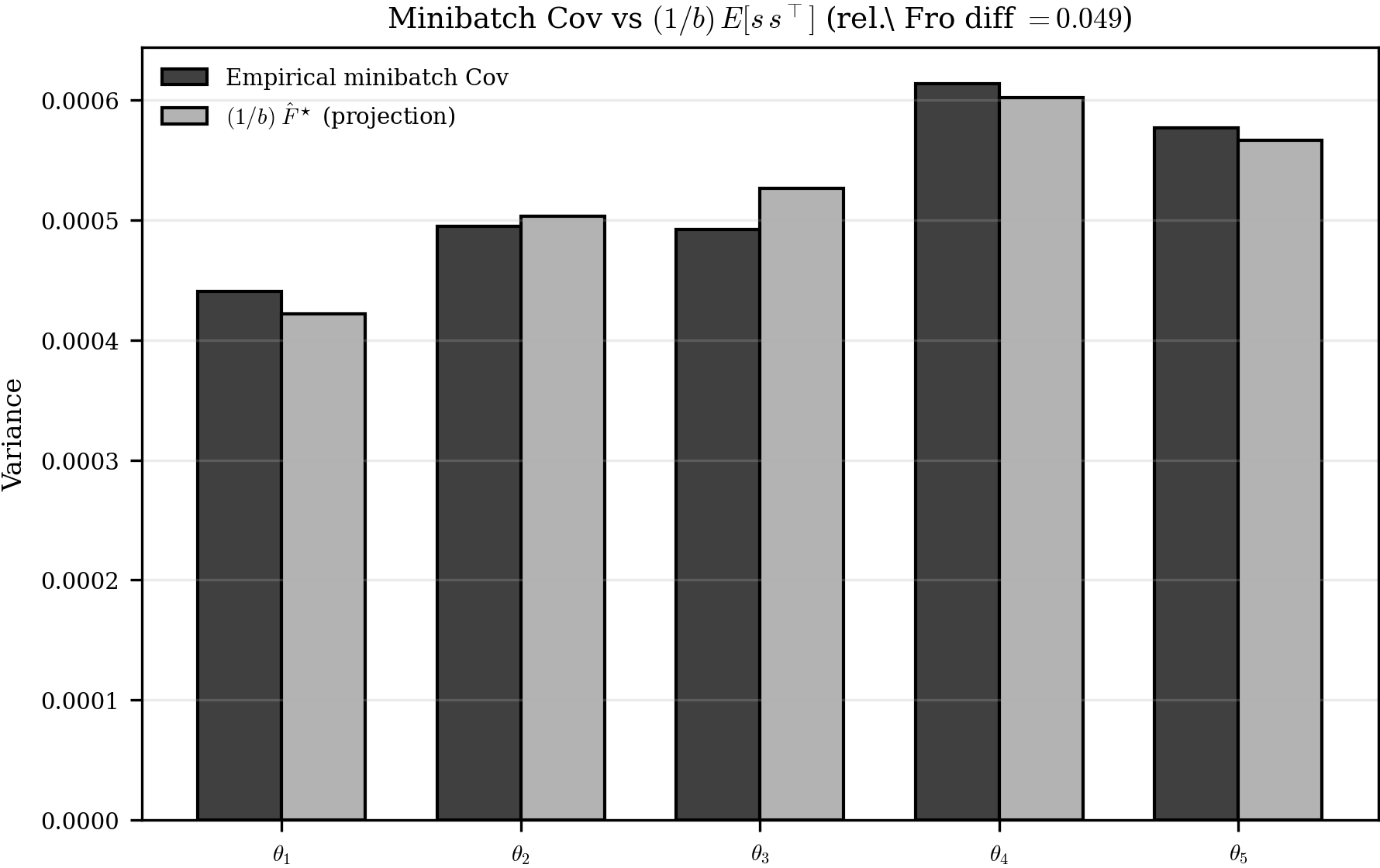}}
{Experiment~1: direct covariance identification under misspecification
(probit DGP, logistic fit).  Diagonal entries of the empirical mini-batch
covariance vs.\ $b^{-1}\hat J^\star$ (projected score covariance at
$\beta^\star$); relative Frobenius difference $\approx 0.05$.
\label{fig:partD_misspecification}}
{}
\end{figure}

\subsection{Claim 2: The Lyapunov Geometry Is Visible in Low Dimensions}
\label{subsec:claim_geom}

Once the identification is empirically in hand (Experiment~1), it is
useful to visualize what the Lyapunov structure \emph{looks like} in a
two-dimensional setting before moving to the $d=10$ quantitative tests.
In the fluctuation-scale OU surrogate of
Theorem~\ref{thm:diffusion_limit}, the stationary covariance $\Sigma_U$
solves $H^\star\Sigma_U+\Sigma_U H^{\star\top}=G^\star$, and
Experiment~2 illustrates three consequences that are each most naturally
displayed in $d=2$: the $1/b$ scaling of marginal variance, the
redistribution of variance under curvature--noise misalignment, and the
failure of trace-matched isotropic surrogates to reproduce directional
structure.
\emph{These are interpretive illustrations, not direct quantitative
validation of SGD:} the simulator integrates the time-changed OU process
of \eqref{eq:ou_batch_num} rather than running SGD iterates, and these
subplots are included because a 2D visualization of the Lyapunov
geometry aids intuition for the higher-dimensional quantitative tests
that follow.  The quantitative burden is carried by Experiments~3--6,
which run direct SGD recursions.

\subsubsection{Experiment 2: 2D Normalized OU Visualization of Lyapunov Structure.}
\label{subsec:exp2_2dlyap}

For visualization we simulate a \emph{pedagogical normalized OU process}
chosen so that marginal variance scales as $1/b$ and is independent of
$\eta$ at leading order:
\begin{equation}
d\theta_t \;=\; -\eta\,H\,\theta_t\,dt \;+\; \sqrt{\frac{2\eta}{b}}\;F^{1/2}\,dW_t,
\label{eq:ou_batch_num}
\end{equation}
with $H=\mathrm{diag}(1,0.1)$ and varying noise shape $F$.  This
normalization is chosen for visualization only; it is \emph{not} the
literal fluctuation SDE of Theorem~\ref{thm:diffusion_limit}, which
scales out the $b$-dependence into the $\sqrt{b/\eta}$ rescaled process
$U^\eta$.  At stationarity, the continuous-time Lyapunov equation for
\eqref{eq:ou_batch_num} reads $H\Sigma+\Sigma H=(2/b)F$, and the
resulting marginal variances visualize the $1/b$ scaling, curvature--noise
misalignment effects, and rotation diagnostics of the Lyapunov geometry
in two dimensions.

\paragraph{Batch-size sweep ($1/b$ scaling).}
With $F=I_2$ and a wide sweep of $b$,
Figure~\ref{fig:partA_lyapunov_batch_scaling} overlays simulated
stationary variances with the fluctuation-scale Lyapunov prediction:
clean $1/b$ scaling holds across the sweep, with the flat coordinate
mixing more slowly (weaker curvature) but reaching the same equilibrium
law.

\begin{figure}
\FIGURE
{\includegraphics[width=\linewidth]{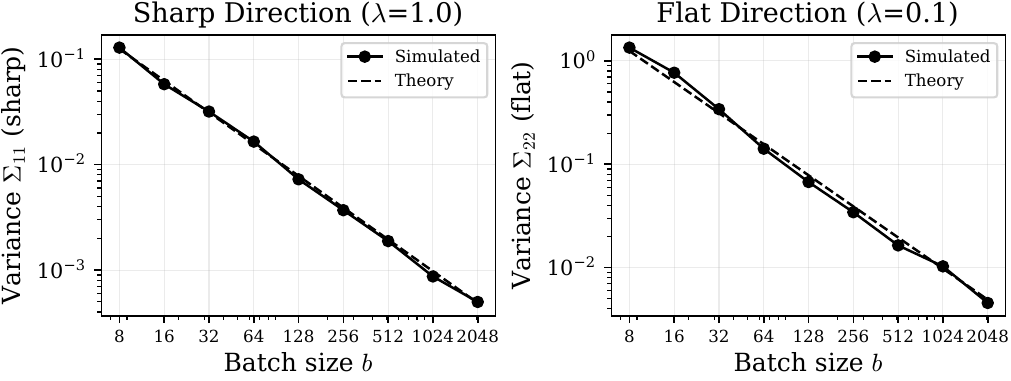}}
{Experiment~2, 2D batch-size sweep: stationary variances vs.\ $b$ (log
scales); solid = simulation, dashed = fluctuation-scale Lyapunov prediction.
\label{fig:partA_lyapunov_batch_scaling}}
{$d=2$, $H=\mathrm{diag}(1,0.1)$, $F=I_2$, 500 replicates, 200k steps.}
\end{figure}

\paragraph{Angle sweep (curvature--noise misalignment).}
Rotating an anisotropic noise matrix
$F(\varphi)=R(\varphi)\mathrm{diag}(1.5,0.5)R(\varphi)^\top$ relative to
the curvature basis, Figure~\ref{fig:partB_anisotropic_alignment} shows
a predictable transfer of stationary variance across coordinates as
$\varphi$ varies: alignment governs which directions receive noise
energy at equilibrium, even when the scalar temperature $\eta/b$ is held
fixed.  The discrete Lyapunov benchmark tracks the simulation closely.

\begin{figure}
\FIGURE
{\includegraphics[width=\linewidth]{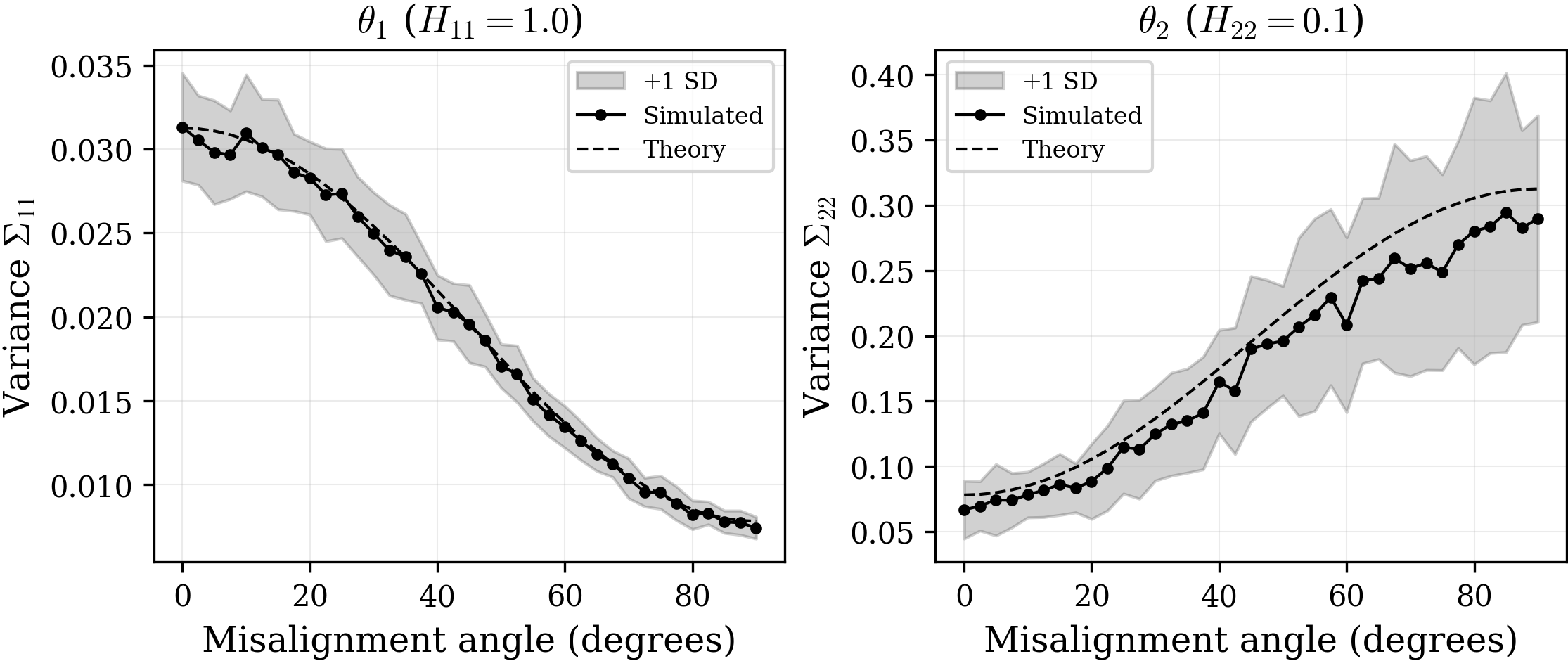}}
{Experiment~2, 2D angle sweep: stationary variances vs.\ misalignment
angle $\varphi$.  Bands show replicate variability; dashed curves are
the discrete Lyapunov prediction.
\label{fig:partB_anisotropic_alignment}}
{$d=2$, $H=\mathrm{diag}(1,0.1)$, 500 replicates per angle.}
\end{figure}

\paragraph{Rotation diagnostic (geometry vs.\ scalar temperature).}
We compare a geometric OU with rotating diffusion
$F_\star(\varphi)$ against a trace-matched isotropic surrogate
$\sigma^2 I$ with $\sigma^2=\Tr F_\star/d$.
Figures~\ref{fig:partC_ellipses}--\ref{fig:partC_crosscov} show that the
geometric stationary covariance $\Sigma_F(\varphi)$ tilts with
$\varphi$ (nonzero cross-covariance, redistributed marginals), while
the isotropic surrogate remains axis-aligned at all angles.  This is
the signature the trace-matched surrogate \emph{cannot} reproduce, and
is the low-dimensional preview of the directional amplification
measured in $d=10$ by Experiment~5.

\begin{figure}
\FIGURE
{\includegraphics[width=\linewidth]{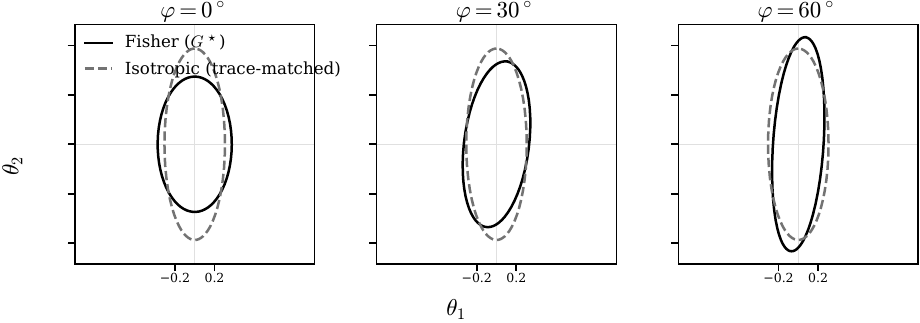}}
{Experiment~2, 2D rotation diagnostic: stationary covariance ellipses at
three rotation angles.  The geometric equilibrium tilts with $\varphi$;
the isotropic trace-matched surrogate remains axis-aligned.
\label{fig:partC_ellipses}}
{$d=2$, $b=64$; ellipses are 95\% level sets of the Lyapunov solution.}
\end{figure}

\begin{figure}
\FIGURE
{\includegraphics[width=\linewidth]{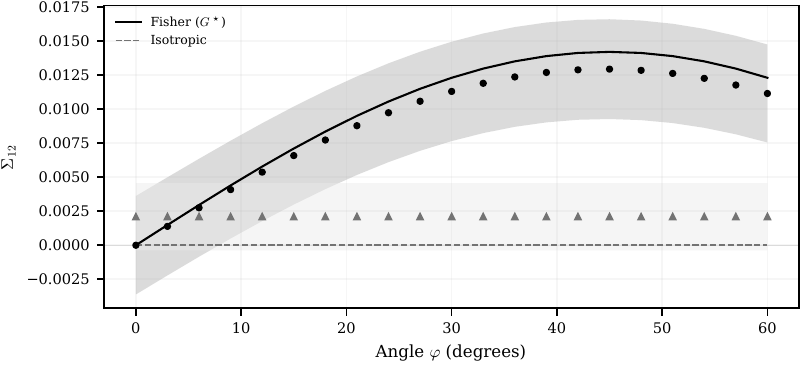}}
{Experiment~2, 2D rotation diagnostic: cross-covariance $\Sigma_{12}$
vs.\ rotation angle.  Isotropic trace-matched noise fails to reproduce
off-diagonal covariance; bands show $\pm 1$ s.d.\ across seeds.
\label{fig:partC_crosscov}}
{$d=2$, 6 seeds, $T=2{,}500$, burn-in $= 800$.}
\end{figure}

\subsection{Claim 3: The Lyapunov Plateau Holds Quantitatively in $d=10$}
\label{subsec:claim_plateau}

The 2D experiments display the qualitative geometry; we now move to
$d=10$ for the first high-dimensional quantitative checkpoint.  This is
where the \emph{quantitative} weight of the Lyapunov prediction is
tested: Experiment~3 runs direct SGD on a $d=10$ problem and asks
whether the empirically observed steady-state Fisher risk matches the
level predicted by combining the identified covariance $G^\star$ with
the discrete Lyapunov equation (Proposition~\ref{prop:discrete_lyap}).
If the identification from Theorem~\ref{thm:godambe-alignment} is
quantitatively correct, both the anisotropic and isotropic empirical
risks should converge to the Lyapunov-predicted plateau
$\Tr(F^\star\Sigma_\eta^{\mathrm{lin}})$ as $T$ grows, with convergence
ratios $\widehat R_T/R_\infty\to 1$.

\subsubsection{Experiment 3: OU/Lyapunov Plateau in $d=10$ (Anisotropic vs.\ Isotropic).}
\label{subsec:exp3_ou_plateau}

The curvature is $H=\mathrm{diag}(\lambda_1,\ldots,\lambda_{10})$ with
eigenvalues log-spaced between~1 and~100, and we set $F^\star=H^\star$
throughout the synthetic $d=10$ benchmark (the quadratic-Gaussian
surrogate in which curvature and Fisher information coincide, so the
Fisher-metric risk $\E[\|\theta_T\|_{F^\star}^2]$ directly measures
Hessian-weighted error).  The noise shape $G^\star$ is a random SPD
matrix with eigenvalues in $[0.5,5]$, and we compare an isotropic
control $G_{\mathrm{iso}}=\sigma^2 I$ with $\sigma^2=\Tr(G^\star)/d$
(trace-matched).  Under constant step size $\eta=0.5/\lambda_{\max}(H)$
and batch size $b=64$, we run $R=2{,}000$ replicates and record the
Fisher-metric risk $\E[\|\theta_T\|_{F^\star}^2]$ over increasing
horizons $T\in\{300,800,2000,5000,10000\}$.

Table~\ref{tab:ou_plateau} confirms the Lyapunov prediction: both the
anisotropic and isotropic empirical risks converge to the predicted
stationary level as $T$ grows, with convergence ratios
$\widehat R_T/R_\infty$ within $1\%$ of unity across all horizons.  The
two plateaux nearly coincide because trace-matching is exact in this
diagonal-$H$ setting; the distinction between anisotropic and isotropic
geometry is therefore invisible at the level of scalar risk, and only
emerges in higher-order structure (Experiment~5).

\begin{table}[t]
\centering\small
\begin{tabular}{@{}rcccccc@{}}
\toprule
 & \multicolumn{2}{c}{Empirical Risk $\widehat{R}_T$} &
   \multicolumn{2}{c}{Lyapunov $R_\infty$} &
   \multicolumn{2}{c}{$\widehat{R}_T / R_\infty$} \\
\cmidrule(lr){2-3}\cmidrule(lr){4-5}\cmidrule(lr){6-7}
$T$ & Aniso & Iso & Aniso & Iso & Aniso & Iso \\
\midrule
    300 & $8.55\!\times\!10^{-4}$ & $8.60\!\times\!10^{-4}$ & $8.63\!\times\!10^{-4}$ & $8.58\!\times\!10^{-4}$ & 0.991 & 1.002 \\
    800 & $8.64\!\times\!10^{-4}$ & $8.49\!\times\!10^{-4}$ & & & 1.002 & 0.990 \\
  2\,000 & $8.58\!\times\!10^{-4}$ & $8.50\!\times\!10^{-4}$ & & & 0.995 & 0.992 \\
  5\,000 & $8.72\!\times\!10^{-4}$ & $8.58\!\times\!10^{-4}$ & & & 1.010 & 1.001 \\
 10\,000 & $8.63\!\times\!10^{-4}$ & $8.47\!\times\!10^{-4}$ & & & 1.000 & 0.987 \\
\bottomrule
\end{tabular}
\caption{Experiment~3 (OU/Lyapunov plateau, $d=10$): empirical
Fisher-metric risk vs.\ discrete Lyapunov prediction
$R_\infty=\Tr(F^\star\Sigma_\eta^{\mathrm{lin}})$.  Convergence ratios
$\widehat{R}_T/R_\infty\to 1$ confirm quantitative accuracy.
\label{tab:ou_plateau}}
{\footnotesize $d=10$, $H=\mathrm{diag}(1,\ldots,100)$, random $G^\star$
with eigenvalues in $[0.5,5]$, $\eta=0.5/\lambda_{\max}$, $b=64$,
$R=2{,}000$ replicates.}
\end{table}

\subsection{Claim 4: Decaying Step Sizes Recover a $1/N$ Rate with a Geometry-Determined Constant}
\label{subsec:claim_rate}

Under a Robbins--Monro step-size schedule the Fisher-metric risk decays
as $C/N$ where $N=Tb$ is the total oracle budget and the constant $C$
is determined by the intrinsic geometry $(G^\star,H^\star,F^\star)$.
Specifically, the SA central-limit theorem gives an asymptotic
covariance $V$ satisfying $({\eta_0 H^\star-\tfrac12 I})V+V({\eta_0
H^\star-\tfrac12 I})=\frac{\eta_0^2}{b}G^\star$, and $N\cdot\mathrm{Risk}
\to b\,\Tr(F^\star V)$.  This is consistent with the upper bound
(Theorem~\ref{thm:upper-bound}) in rate order and with the i.i.d.\
parametric Fisher lower-bound benchmark
(Proposition~\ref{thm:lower-bound}).

\subsubsection{Experiment 4: $1/N$ Rate and Fisher Constant (Decaying Step Size).}
\label{subsec:exp4_rate}

We switch to a decaying step size $\eta_t=\eta_0/(t+t_0)$ with
$\eta_0=1.2$, $t_0=50$, and the same $(H,G^\star,G_{\mathrm{iso}})$ from
Experiment~3, with oracle budget $N=Tb$ and $b=64$.
Figure~\ref{fig:risk_rate} (left) plots the Fisher-metric risk on
log--log axes: both anisotropic and isotropic risks decay as $1/N$
(fitted slopes $\approx -0.98$), consistent with
Theorem~\ref{thm:upper-bound} in rate order.
The right panel shows the scaled risk $N\cdot\widehat{\mathrm{Risk}}$,
which stabilizes near $\approx 14.2$ (anisotropic) and $\approx 14.0$
(isotropic)---both close to the SA--CLT prediction $b\,\Tr(F^\star V)$
where $V$ solves the Lyapunov equation above.

\begin{figure}
\FIGURE
{\includegraphics[width=\linewidth]{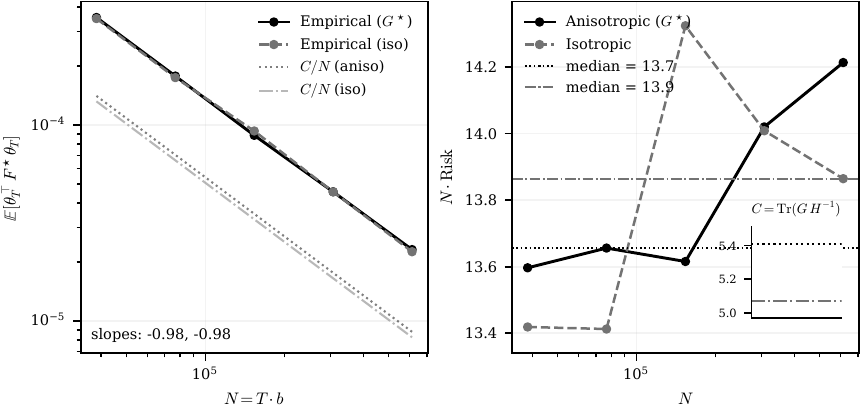}}
{Experiment~4 (decaying step size, $d=10$).
\textbf{Left:} Fisher-metric risk vs.\ $N$ (log--log); slopes
$\approx -1$.  \textbf{Right:} Scaled risk $N\cdot\widehat{\mathrm{Risk}}$
stabilizes; inset shows $C=\Tr(G^\star H^{-1})$.
\label{fig:risk_rate}}
{$\eta_t=1.2/(t+50)$, $b=64$, 800 replicates.}
\end{figure}

\emph{Interpretation: two related-but-distinct constants.}  The
constant-step Lyapunov quantity $\Tr(G^\star H^{\star-1})$ (inset,
$\approx 5.41$) and the decaying-step asymptotic constant
$b\,\Tr(F^\star V)$ (plateau, $\approx 14.0$--$14.2$) are related but not
identical: the former is the stationary
$\Tr(F^\star\Sigma_\eta^{\mathrm{lin}})$ of the constant-step discrete
Lyapunov recursion at leading order in $\eta$, while the latter is the
$N\to\infty$ asymptotic constant of the scaled risk under a
Robbins--Monro schedule, which solves a \emph{different} Lyapunov
equation shifted by $-\tfrac12 I$.  The shift reflects the variance
deflation introduced by the decaying schedule.  Both constants are
determined by the identified geometry $(G^\star,H^\star,F^\star)$, but
their numerical values differ, and the ratio between them depends on the
schedule (here $\eta_0=1.2$, $t_0=50$).  The empirical scaled risk
$N\cdot\widehat{\mathrm{Risk}}$ lies above the i.i.d.\ parametric
Fisher lower-bound benchmark of Proposition~\ref{thm:lower-bound},
consistent with rate-order matching; we do not claim a specific value
for the theorem constant.

\subsection{Claim 5: Directional Amplification Discriminates Geometry from Scalar Temperature}
\label{subsec:claim_directional}

Experiments~3 and~4 compared anisotropic and isotropic noise at the
level of trace-scale summaries, where trace-matching makes the two
surrogates nearly indistinguishable.  This is precisely where
scalar-temperature reasoning fails.  In the synthetic benchmark
$F^\star=H^\star$, so the plotted Fisher-metric risk is
$\Tr(F^\star\Sigma)=\Tr(H^\star\Sigma)$.  Multiplying the
fluctuation-scale Lyapunov equation $H^\star\Sigma_U+\Sigma_U H^{\star\top}=G^\star$
on the left by the identity and taking traces gives
$2\,\Tr(H^\star\Sigma_U)=\Tr(G^\star)$, which depends on $G^\star$ only
through its trace; a trace-matched isotropic surrogate
($G_{\mathrm{iso}}$ with $\Tr G_{\mathrm{iso}}=\Tr G^\star$) therefore
reproduces the scalar Fisher-risk summary by construction, even though
its \emph{directional} distribution of residual risk is entirely
different.  The discriminating signature is
\emph{directional}: anisotropic noise concentrates stationary risk
along the top eigendirections of $G^\star$, while the isotropic
surrogate spreads residual error uniformly.  Experiment~5 measures
this direction-by-direction.

\subsubsection{Experiment 5: Directional Variance Amplification.}
\label{subsec:exp5_directional}

For $k=1,2,3$ we project $\theta_T$ onto the top eigenvectors $v_k$ of
$G^\star$ and compute the directional second moment
$\E[\langle v_k,\theta_T\rangle^2]$ under the same decaying-step setup
as Experiment~4.  Figure~\ref{fig:directional} (left) shows that both
models decay as $1/N$, but the anisotropic curve is systematically
above the isotropic one along the top eigendirections.  The right
panel plots the ratio $\E_{\mathrm{aniso}}/\E_{\mathrm{iso}}$: at the
largest budget the empirical ratios are $(1.76, 1.33, 1.35)$ for
$k=1,2,3$, close to the linearized Lyapunov prediction $(1.77, 1.40, 1.33)$
obtained from $H^\star\Sigma_U+\Sigma_U H^{\star\top}=G^\star$.  The
ratios are strictly greater than one in every direction and largest
along the top eigenvector of $G^\star$.  Trace-matched isotropic noise matches the scalar-risk summaries of
Experiments~3--4 but cannot reproduce this direction-by-direction risk
allocation; the identified geometry determines which parameter
directions carry the most residual risk.

\begin{figure}
\FIGURE
{\includegraphics[width=\linewidth]{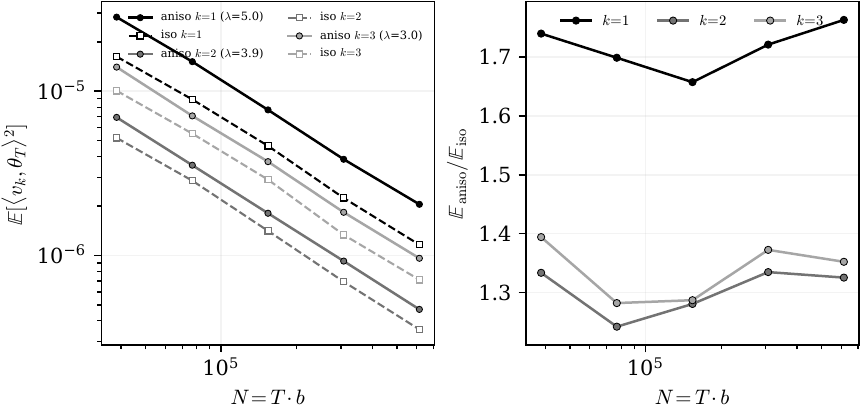}}
{Experiment~5 (directional variance amplification, $d=10$).
\textbf{Left:} Directional second moments along top eigenvectors of
$G^\star$.
\textbf{Right:} Ratio of anisotropic to isotropic directional variances;
ratios $>1$ mirror the eigenvalue ordering of $G^\star$.
Trace-matched isotropic noise preserves $\mathrm{Tr}(G^\star)$ but
spreads residual risk uniformly; the identified anisotropic noise
concentrates it along the top noise eigendirections. \label{fig:directional}}
{Same setup as Experiment~4; top $k=1,2,3$ eigenvectors of $G^\star$.}
\end{figure}

\subsection{Claim 6: Fluctuation-Scale $(\eta/b)$-Collapse on the Raw Iterate Path}
\label{subsec:claim_collapse}

Theorem~\ref{thm:diffusion_limit} and Corollary~\ref{cor:lyapunov} predict
that, in the small-stepsize local regime,
$\mathbb E[\|\theta_T-\theta^\star\|_{F^\star}^2]=(\eta/b)\,\Tr(F^\star\Sigma_U)+o(\eta/b)$
with $\Sigma_U$ solving $H^\star\Sigma_U+\Sigma_U H^{\star\top}=G^\star(\theta^\star)$.
Equivalently, the scaled Fisher risk $(b/\eta)\,\widehat{\mathrm{Risk}}$
should collapse to a single $\eta,b$-free curve at leading order.  This
is the direct empirical signature of the fluctuation FCLT
(Theorem~\ref{thm:diffusion_limit}) on the original iterate scale.

\subsubsection{Experiment 6: Raw-Scale Collapse Diagnostic.}
\label{subsec:exp6_collapse}

We sweep eight $(\eta,b)$ configurations with $\eta\in\{0.001,0.002,0.005\}$,
$b\in\{32,64,128\}$ (so $\eta\lambda_{\max}(H^\star)\in\{0.1,0.2,0.5\}$;
$\eta/b\in[8\times10^{-6},1.6\times10^{-4}]$), and run direct SGD on the
same $(H,F^\star,G^\star)$ as Experiments~3--4 for
$T\in\{2000,5000,10000,20000\}$ with $R=600$ replicates per cell.
Figure~\ref{fig:partG_collapse} plots
$(b/\eta)\,\widehat{\mathbb E}[\|\theta_T\|_{F^\star}^2]$ against $T$ for
each configuration.  All curves collapse onto a common plateau near
$\Tr(F^\star\Sigma_U)\approx 10.22$ (black reference line), the
fluctuation-scale Lyapunov prediction.  The collapse across two orders
of magnitude in $\eta/b$ is a direct verification of the
$(\eta/b)$-scaling law on the raw iterate scale, independently of any
continuous-time normalization convention.

\begin{figure}
\FIGURE
{\includegraphics[width=\linewidth]{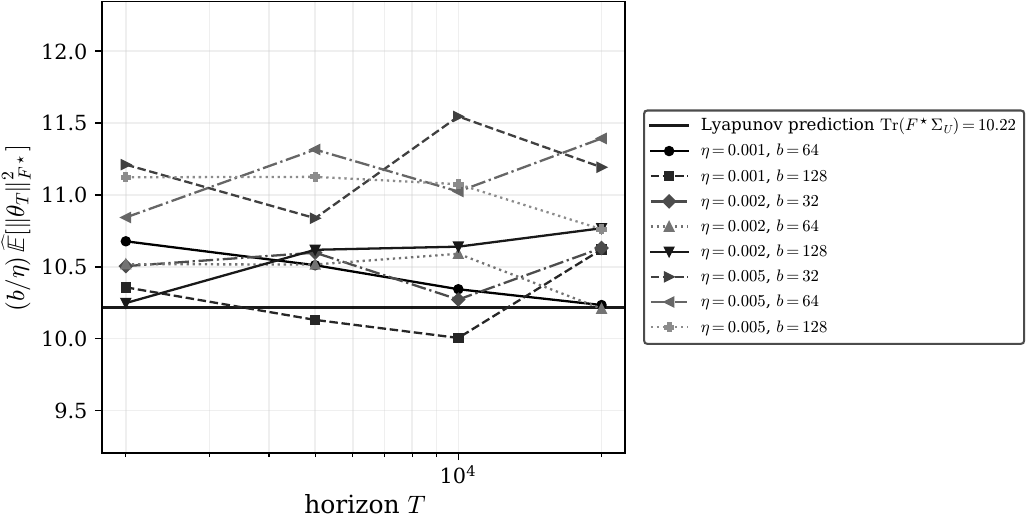}}
{Experiment~6 (fluctuation-scale collapse diagnostic; Theorem~\ref{thm:diffusion_limit}):
scaled Fisher risk $(b/\eta)\,\widehat{\mathbb E}[\|\theta_T\|_{F^\star}^2]$
stabilizes at the fluctuation-scale Lyapunov prediction
$\Tr(F^\star\Sigma_U)$ (black line) across eight $(\eta,b)$ configurations.
\label{fig:partG_collapse}}
{$d=10$, direct SGD, $R=600$ replicates per cell;
$\eta\in\{0.001,0.002,0.005\}$, $b\in\{32,64,128\}$.}
\end{figure}

\medskip
Additional robustness experiments under approximate exchangeability are
reported in Appendix~\ref{app:robustness}.

\section{Discussion}
\label{sec:discussion}

\subsection{Summary of Contributions}

The paper's central contribution is operational: we identify the
mini-batch gradient covariance from the sampling design itself as
$b^{-1}G^\star(\theta)$, converting the diffusion analysis of
constant-step SGD from a modeling exercise into a statement about the
replication budget.  In likelihood models the identified matrix reduces
to projected Fisher information; in general losses it pairs with
$H^\star$ through the classical sandwich/Godambe geometry of
M-estimation.  At fixed batch size, the raw iterate path has a
deterministic fluid limit (the gradient-flow ODE); the
$\sqrt{b/\eta}$-scaled fluctuations converge to a linear diffusion with
coefficient $G^\star$; near a nondegenerate optimum this is
Ornstein--Uhlenbeck, and the corresponding Lyapunov covariance scaled
by $\eta/b$ matches the stationary covariance of the linearized
discrete recursion at leading order.  In the local Fisher case, an
upper bound of order $1/N$ is rate-order matched by an i.i.d.\ van~Trees
lower bound, with the intrinsic dimension $d_{\mathrm{eff}}$ and
statistical condition number $\kappa_F$ appearing in the upper bound
only.

\subsection{Operational Implications for Sampling-Budget Design}

The effective temperature $\tau=\eta/b$ is the single scalar that
converts replication effort into diffusion amplitude, and the
identification shows it does so in a \emph{directional} way fixed by
$G^\star$ rather than uniformly.  For an OR practitioner, three
consequences follow at the sampling-budget level.  First, at a fixed
oracle budget $N=Tb$, the batch size $b$ trades iterate count against
per-step noise scale: increasing $b$ cools the error floor predicted by
the linearized discrete Lyapunov benchmark
(Proposition~\ref{prop:discrete_lyap}) in magnitude, but leaves the noise
ellipsoid's \emph{shape} unchanged.
Second, classical OR variance-reduction interventions (common variates,
control variates, stratification) should be evaluated by their Loewner
effect on $G^\star$, and thus by the induced reduction in
$\Tr(Q\Sigma)$ for the operational metric $Q$, rather than by scalar
component-variance alone---this is the natural bridge from classical
variance-reduction practice to mini-batch SGD.  Third, the directional
amplification in Experiment~5 identifies exactly which parameter
components absorb residual risk, and thus where additional sampling
effort has the largest marginal return.

\begin{remark}[Variance reduction as Fisher-metric risk reduction]
\label{rem:vr-fisher}
A VR technique that reduces $G^\star$ to $\widetilde G\preceq G^\star$
in the Loewner order yields $\widetilde\Sigma_\infty\preceq\Sigma_\infty$
via the Lyapunov balance, and hence lower Fisher-metric risk.
\end{remark}

Plain SGD does not precondition the drift; natural-gradient methods do
so by explicitly reweighting $\nabla L$.  We do not claim plain SGD is
``natural-gradient-like'' in the drift sense; rather, its linearized
local equilibrium benchmark (Proposition~\ref{prop:discrete_lyap})
reflects the noise covariance $G^\star$ through the Lyapunov balance.
The identification also suggests adaptive batching---increasing $b$
(cooling) when marginal noise reduction is valuable, decreasing it when
exploration matters.

\subsection{Practical Notes and Extensions}
\label{subsec:practical_notes}

The preceding theory identifies a small set of geometric
primitives---$H^\star$, $G^\star$ (or $F^\star$), and
$\tau=\eta/b$---that govern both transient rates and stationary risk.

Our analysis freezes $F(\theta)$ locally at $F^\star$.
Validity can be assessed by monitoring
$\delta_F(t):=\|F(\theta_t)-F^\star\|_{\op}/\|F^\star\|_{\op}$; in our
experiments the Lyapunov predictions remain accurate from random starts,
indicating early entry into the basin of Fisher geometry.

\subsection{Estimating $G^\star$ in Practice}
\label{subsec:estimating_Gstar}

Where $G^\star$ is not analytically available, the empirical covariance of
per-sample gradients $\hat G(\theta_t) = \frac{1}{m-1}\sum_i
(\psi_i-\bar\psi)(\psi_i-\bar\psi)^\top$ is the natural plug-in and
converges at the standard $O(1/\sqrt{m})$ rate in operator norm; for large
$d$, a diagonal or rank-$k$ streaming approximation suffices for the
quantities ($d_{\mathrm{eff}}$, dominant eigendirections, scale anisotropy)
that enter the bounds.  Given $\hat G$, one can diagnose whether isotropic
temperature-matching is likely to fail (by checking whether $\hat G$ is far
from a scalar multiple of $I$) and compare candidate batch sizes through
the predicted stationary covariance $\Sigma_\eta$ of
Proposition~\ref{prop:discrete_lyap}.  A natural design heuristic
(not proved here) is to regulate $\tau_t\propto 1/\lambda_{\max}(H(\theta_t))$
to avoid over-heating in stiff directions; when $d_{\eff}\ll d$, the
linearized local equilibrium of Proposition~\ref{prop:discrete_lyap}
concentrates on the identifiable subspace captured by $\Pi_\theta$,
which is where the identified geometry is nondegenerate.

\subsection{Limitations and Future Work}

Our sharpest statements use local linearization near a nondegenerate critical
point; extending to nonconvex regimes requires controlling state-dependent
curvature and relating intrinsic geometry to metastability.
When $d\gg n$, the identified covariance may be low-rank, and a satisfactory
theory should work on identifiable manifolds with appropriate
pseudoinverses.
Finally, preconditioners (Gauss--Newton, KFAC) reshape both drift and noise
geometry; a joint control theory co-designing contraction and diffusion---potentially under communication constraints---would connect information
geometry to OR-style variance allocation at scale.

The empirical design intentionally progresses from interpretable
geometric diagnostics (direct covariance identification, $2$D Lyapunov
visualization) to quantitative raw-scale validation under direct SGD
recursions in $d=10$, culminating in the fluctuation-scale
$(\eta/b)$-collapse (Experiment~6; signature of
Theorem~\ref{thm:diffusion_limit}) on the original iterate path.

\paragraph{Code availability.}
All experiments in Section~\ref{sec:numerical} and Appendix~\ref{app:robustness}
are reproducible from the accompanying code archive (\texttt{Python\_Codes/}),
with master script \texttt{run\_all.py}, explicit random seeds, and
wall-clock under 5 minutes on a single CPU.

\appendix

\section{Technical Lemmas}
\label{app:lemmas}

This appendix collects the auxiliary lemmas invoked by the main-text
proofs.  The finite-population covariance identity
(Lemma~\ref{lem:finite_pop_cov_exact}) and score identities
(Lemma~\ref{lem:score_identities_setup}) are stated and proved in the main
text; we only restate here the auxiliary results (Lyapunov representation,
Robbins--Monro product bound) that are invoked in later proofs but are not
part of the main-text spine.

\subsection{Lyapunov equation solution}

\begin{lemma}[Lyapunov equation solution]
\label{lem:lyap_rep_app}
Let $A \in \R^{d \times d}$ with all eigenvalues having strictly positive real part.
Let $Q \succeq 0$. Then the continuous Lyapunov equation
$A\Sigma + \Sigma A^\top = Q$
has the unique solution
\[
\Sigma = \int_0^\infty e^{-As} Q e^{-A^\top s} ds.
\]
\end{lemma}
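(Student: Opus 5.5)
The plan is to verify directly that the integral $\Sigma:=\int_0^\infty e^{-As}Qe^{-A^\top s}\,ds$ is well defined and solves $A\Sigma+\Sigma A^\top=Q$. Uniqueness then follows from injectivity of the Sylvester-type operator $\mathcal L_A(X):=AX+XA^\top$.

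\emph{Convergence.} Since every eigenvalue of $A$ has real part at least some $\alpha>0$, the Jordan form gives constants $c<\infty$ and $\beta\in(0,\alpha)$ with $\|e^{-As}\|_{\op}\le c\,e^{-\beta s}$ for all $s\ge 0$. The same bound holds for $e^{-A^\top s}=(e^{-As})^\top$. The integrand is therefore bounded in operator norm by $c^2\|Q\|_{\op}e^{-2\beta s}$, so the integral converges absolutely. It is symmetric because $Q$ is symmetric, and it is positive semidefinite because each integrand $e^{-As}Qe^{-A^\top s}$ is.

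\emph{It solves the equation.} Set $\Phi(s):=e^{-As}Qe^{-A^\top s}$. Differentiating gives $\Phi'(s)=-A\Phi(s)-\Phi(s)A^\top$. Integrate over $[0,R]$ to get $\Phi(R)-Q=-A\Sigma_R-\Sigma_R A^\top$, where $\Sigma_R:=\int_0^R\Phi(s)\,ds$. Now let $R\to\infty$: the exponential bound gives $\Phi(R)\to0$ and $\Sigma_R\to\Sigma$. Hence $A\Sigma+\Sigma A^\top=Q$.

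\emph{Uniqueness.} Let $\Sigma'$ be another solution and put $D:=\Sigma-\Sigma'$, so $\mathcal L_A(D)=0$. I would avoid an eigenvalue argument on the Kronecker sum and argue through the flow instead. Define $\Psi(s):=e^{-As}De^{-A^\top s}$. Then $\Psi'(s)=-e^{-As}\mathcal L_A(D)e^{-A^\top s}=0$, so $\Psi$ is constant and $D=\Psi(0)=\lim_{s\to\infty}\Psi(s)=0$, again by the decay bound. As a cross-check, the spectrum of $\mathcal L_A$ on $\R^{d\times d}$ consists of the sums $\lambda_i+\lambda_j$ of eigenvalues of $A$. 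All of these have positive real part, so $\mathcal L_A$ is invertible.

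The only step requiring care is the exponential decay bound for a possibly non-normal $A$. There the polynomial factors coming from Jordan blocks have to be absorbed by shrinking the rate from $\alpha$ to $\beta<\alpha$. Everything else is routine calculus.
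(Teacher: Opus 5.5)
Your proof is correct and follows the same route as the paper. Both arguments use an exponential decay bound on $e^{-As}$ for absolute convergence, the fundamental theorem of calculus applied to $s\mapsto e^{-As}Qe^{-A^\top s}$ to verify the equation, and injectivity of $X\mapsto AX+XA^\top$ for uniqueness. Your argument is more complete in two places: the paper merely asserts that $A\Delta+\Delta A^\top=0$ forces $\Delta=0$, which your constant-flow argument with $e^{-As}De^{-A^\top s}$ actually proves, and you handle the Jordan-block polynomial factors for non-normal $A$ explicitly.
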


\begin{proof}{Proof.}
\emph{Convergence:} Since all eigenvalues of $A$ have positive real part,
$\|e^{-As}\| \le C e^{-\lambda s}$ for some $C, \lambda > 0$, ensuring convergence.

\emph{Verification:}
$A\Sigma + \Sigma A^\top = \int_0^\infty \frac{d}{ds}(-e^{-As} Q e^{-A^\top s}) ds
= [-e^{-As} Q e^{-A^\top s}]_0^\infty = Q$.

\emph{Uniqueness:} If $\Sigma_1, \Sigma_2$ both solve the equation, then $\Delta := \Sigma_1 - \Sigma_2$
satisfies $A\Delta + \Delta A^\top = 0$, whose only solution is $\Delta = 0$ when $A$ has
eigenvalues with positive real part. \Halmos
\end{proof}
\subsection{Product bound for Robbins--Monro weights}

\begin{lemma}[Product bound for Robbins-Monro weights]
\label{lem:product_bound_app}
Let $a > 1$ and $c_0 > 0$. Define
\[
\alpha_k := 1 - \frac{a}{k} + \frac{c_0}{k^2}, \qquad \Pi_{s,T} := \prod_{k=s}^{T-1} \alpha_k.
\]
Then for all $T$ sufficiently large:
\begin{enumerate}
    \item[(i)] $\Pi_{1,T} \le C T^{-a}$ for a constant $C = C(a, c_0)$.
    \item[(ii)] $\Pi_{t+1,T} \le C (t/T)^a$ for $1 \le t < T$.
\end{enumerate}
\end{lemma}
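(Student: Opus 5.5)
The plan is to work with logarithms and turn the product into a sum. Fix $k_0=k_0(a,c_0)$ large enough that, for $k\ge k_0$, we have $a/k<1/2$ and $\alpha_k\in(0,1)$. For $k\ge k_0$, the elementary inequality $\log(1-x)\le -x$ for $x<1$, applied with $x=a/k-c_0/k^2$, gives
\[
\log\alpha_k\le -\frac{a}{k}+\frac{c_0}{k^2}.
\]
Summing over $k=s,\dots,T-1$ with $s\ge k_0$, and using
\[
\sum_{k=s}^{T-1}\frac1k\ge\log\frac{T}{s}
\qquad\text{and}\qquad
\sum_{k\ge1}\frac{1}{k^2}=\frac{\pi^2}{6},
\]
I expect
\[
\Pi_{s,T}\le \exp\!\Bigl(\frac{c_0\pi^2}{6}\Bigr)\Bigl(\frac{s}{T}\Bigr)^{a}.
\]
This single estimate is the core of both parts.

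For (i), I would split $\Pi_{1,T}=\Pi_{1,k_0}\,\Pi_{k_0,T}$. The finite prefactor $\Pi_{1,k_0}$ involves only finitely many factors, and each satisfies
\[
|\alpha_k|\le 1+\frac{a}{k}+\frac{c_0}{k^2},
\]
so $|\Pi_{1,k_0}|$ is bounded by a constant depending only on $(a,c_0)$. Combining this with the tail estimate at $s=k_0$ gives
\[
\Pi_{1,T}\le C' k_0^{a}\,T^{-a}=C\,T^{-a}.
\]

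For (ii), the case $t+1\ge k_0$ follows directly from the display with $s=t+1$:
\[
\Pi_{t+1,T}\le e^{c_0\pi^2/6}\Bigl(\frac{t+1}{t}\Bigr)^{a}\Bigl(\frac{t}{T}\Bigr)^{a}\le 2^{a}e^{c_0\pi^2/6}\Bigl(\frac{t}{T}\Bigr)^{a}.
\]
The case $t+1<k_0$ is handled like (i): the finitely many early factors contribute a bounded constant, and since $t\ge1$ we have $(k_0/T)^a\le k_0^{a}(t/T)^a$.

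The only delicate point is the sign of the early factors. Some $\alpha_k$ with $k<k_0$ may be negative or exceed $1$, so the product itself could be negative. The bound is then trivially true in the signed sense, and it also holds for absolute values with the constant adjusted. The paper uses the lemma only with $\alpha_k$ playing the role of $a_t\ge0$, and I would record that remark. Taking $C$ as the maximum of the constants from the two cases completes the argument. I expect no real obstacle beyond this bookkeeping.
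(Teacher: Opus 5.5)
Your proof is correct and follows the same route as the paper's. Both take logarithms, compare $\sum_{k=s}^{T-1}1/k$ with $\log(T/s)$, absorb the summable $O(1/k^2)$ terms into a constant, and exponentiate. Your version is tidier on three points the paper passes over: the one-sided bound $\log(1-x)\le -x$ is all an upper bound needs; you treat separately the finitely many early factors, where $\alpha_k$ may be nonpositive and the paper's logarithm is undefined; and you make the $(t+1)/t\le 2$ adjustment explicit.
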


\begin{proof}{Proof.}
Taking logarithms: $\log \alpha_k = \log(1 - a/k + c_0/k^2) = -a/k + O(1/k^2)$ for large $k$.
Thus $\log \Pi_{s,T} = -a \sum_{k=s}^{T-1} k^{-1} + O(1) = -a(\log T - \log s) + O(1)$.
Exponentiating: $\Pi_{s,T} = O((s/T)^a)$.
For $s = 1$: $\Pi_{1,T} = O(T^{-a})$.
For $s = t+1$: $\Pi_{t+1,T} = O((t/T)^a)$. \Halmos
\end{proof}

\section{Robustness: A Stylized Variance-Inflation Perturbation}
\label{app:robustness}

The following experiment probes one stylized perturbation of the
fresh-sampling assumption of Theorem~\ref{thm:godambe-alignment}: a
scalar variance-inflation factor $(1+\varepsilon)$ on the mini-batch
covariance, modeling mild departures from ideal exchangeability (weak
dependence, partial sample reuse).  This is a specific perturbation
diagnostic, not a general robustness theorem.  A direct check of the
identification itself under model misspecification is in Experiment~1
of Section~\ref{subsec:exp1_identification}.

To model mild departures from ideal exchangeability (e.g., weak dependence,
contamination, or partial reuse of samples), we inflate the intrinsic covariance
by a factor $(1+\varepsilon)$ in the linearized surrogate:
$\Cov(\xi_t)=(1/b)(1+\varepsilon)\,\Gstar(\theta^\star)$, $\varepsilon\in[0,1]$.
This captures ``extra correlation'' as increased effective diffusion amplitude.
Linearity of the Lyapunov operator then implies
$\Sigma(\varepsilon)=(1+\varepsilon)\Sigma(0)$,
so every stationary marginal variance inflates linearly in $\varepsilon$.
Figure~\ref{fig:partE_eps_exchangeable_scaling} confirms this prediction:
the entire stationary covariance scales as though the system had a
higher effective temperature
$\tau_{\mathrm{eff}}=(1+\varepsilon)\eta/b$.
To first order, small exchangeability violations act like ``heating''
the Fisher--Lyapunov equilibrium.

\begin{figure}[H]
\FIGURE
{\includegraphics[width=0.42\linewidth]{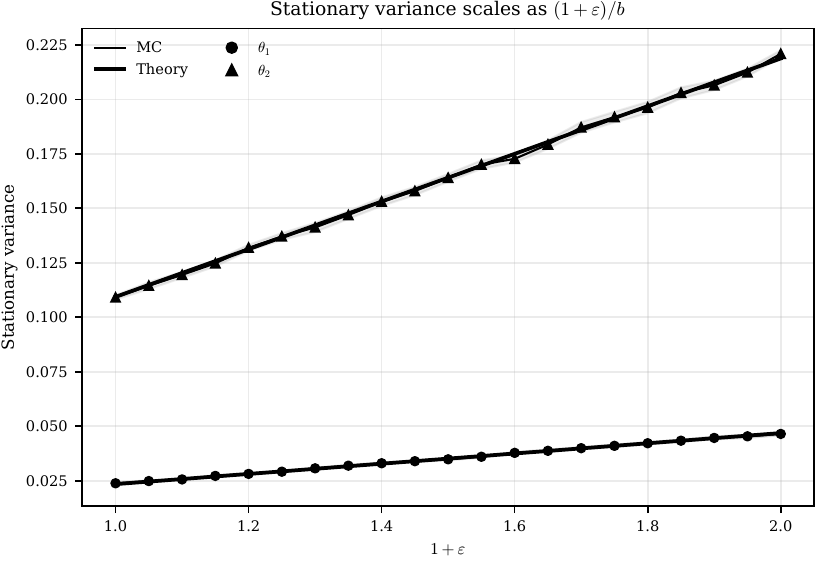}}
{Experiment~B: approximate exchangeability. Stationary variances vs.\
$\varepsilon$ (simulation bands vs.\ linear Lyapunov prediction).
\label{fig:partE_eps_exchangeable_scaling}}
{}
\end{figure}

\FloatBarrier

\bibliographystyle{plainnat}
\bibliography{references}

\end{document}